\newcommand{\PutAlexNetInD}{
\begin{figure}[H]
    \centering
    \begin{subfigure}[b]{0.40\textwidth}
         \centering
         \includegraphics[width=\textwidth]{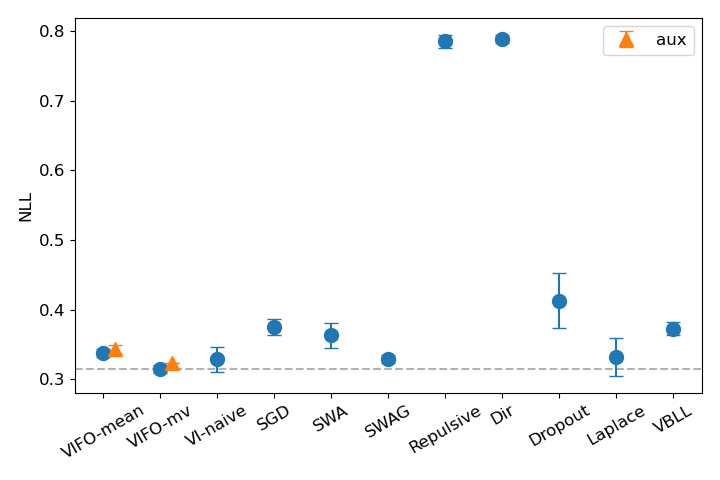}
         \caption{CIFAR10}
    \end{subfigure}
    \begin{subfigure}[b]{0.40\textwidth}
         \centering
         \includegraphics[width=\textwidth]{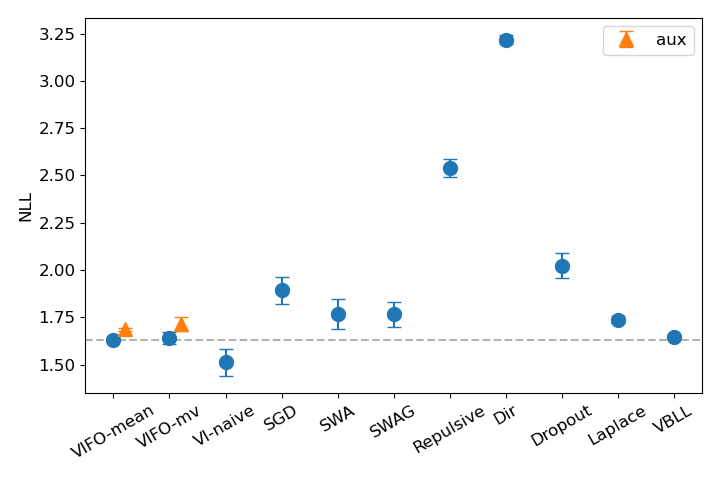}
         \caption{CIFAR100}
    \end{subfigure}
    \begin{subfigure}[b]{0.40\textwidth}
         \centering
         \includegraphics[width=\textwidth]{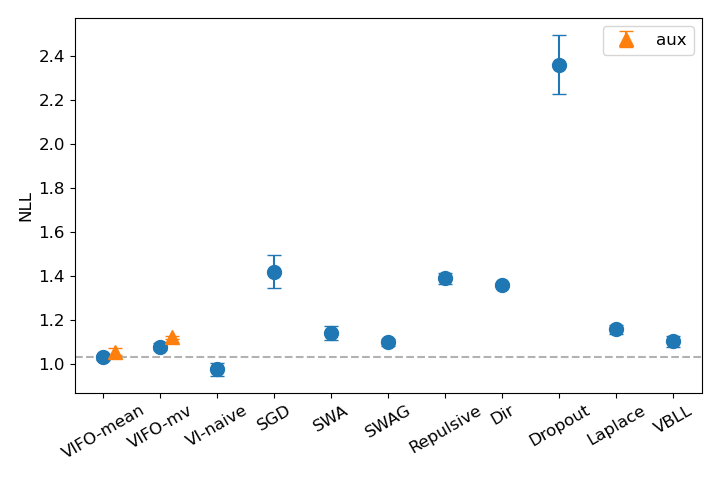}
         \caption{STL10}
    \end{subfigure}
    \begin{subfigure}[b]{0.40\textwidth}
         \centering
         \includegraphics[width=\textwidth]{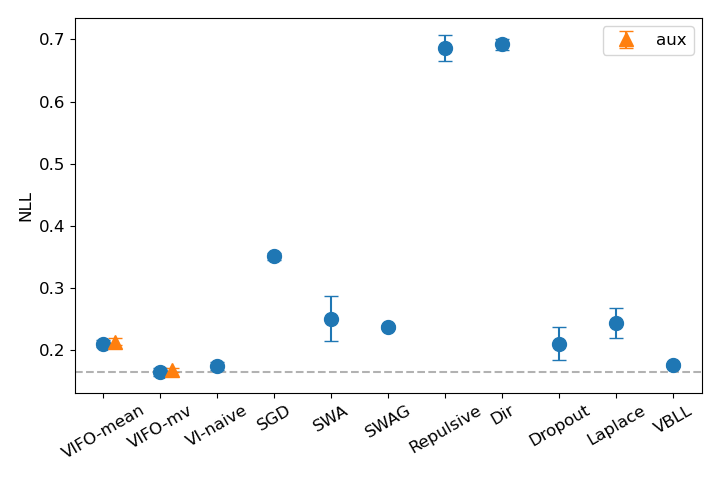}
         \caption{SVHN}
    \end{subfigure}
    \caption{Test log loss ($\downarrow$) of image datasets on AlexNet. Dashed lines indicate the best version of VIFO. The error bar is three times of the standard deviation for better visualization and same for other figures.
    }
    \label{fig:log-alexnet}
\end{figure}

\begin{figure}[H]
    \centering
    \begin{subfigure}[b]{0.40\textwidth}
         \centering
         \includegraphics[width=\textwidth]{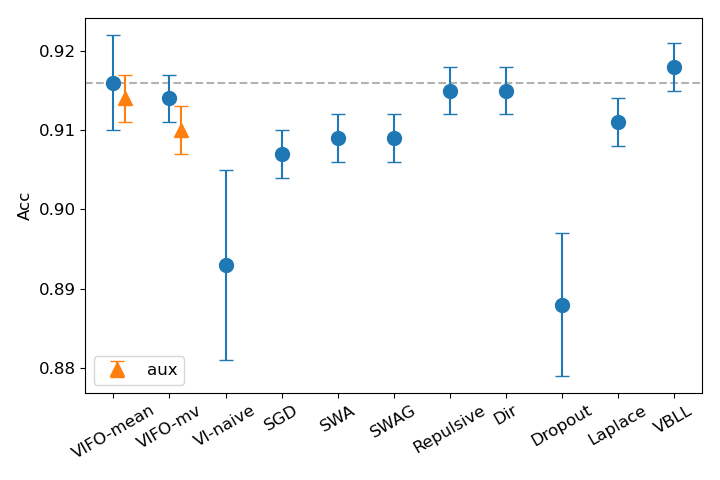}
         \caption{CIFAR10}
    \end{subfigure}
    \begin{subfigure}[b]{0.40\textwidth}
         \centering
         \includegraphics[width=\textwidth]{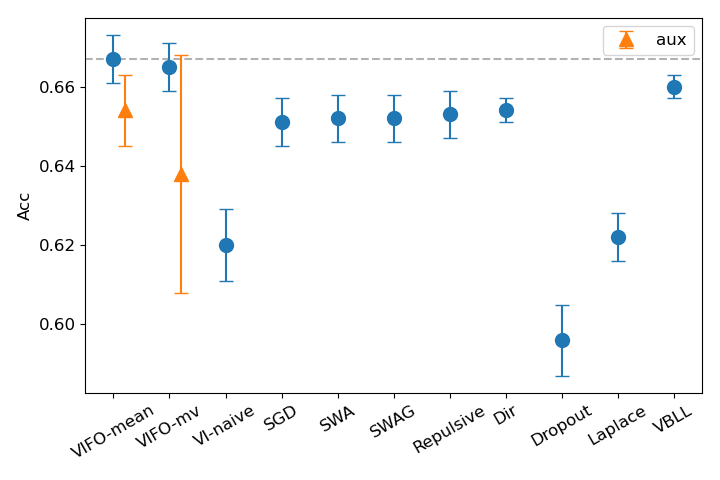}
         \caption{CIFAR100}
    \end{subfigure}
    \begin{subfigure}[b]{0.40\textwidth}
         \centering
         \includegraphics[width=\textwidth]{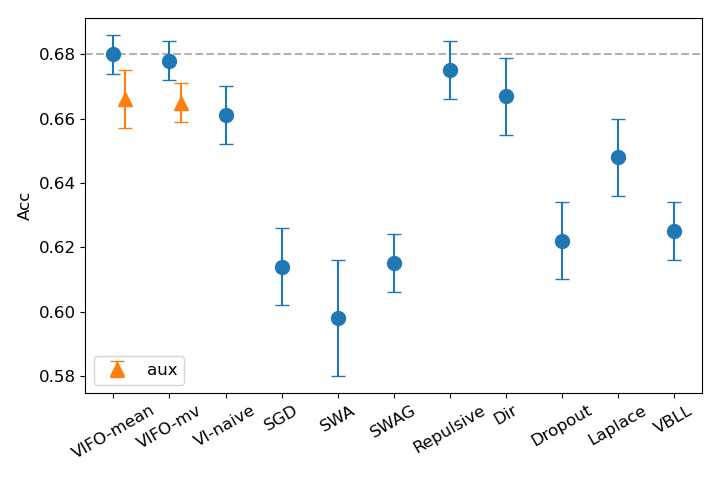}
         \caption{STL10}
    \end{subfigure}
    \begin{subfigure}[b]{0.40\textwidth}
         \centering
         \includegraphics[width=\textwidth]{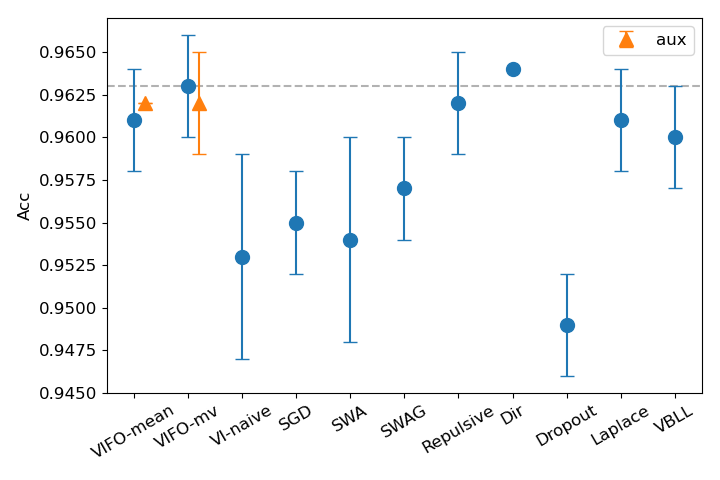}
         \caption{SVHN}
    \end{subfigure}
    \caption{Test accuracy ($\uparrow$) on AlexNet. Dashed lines indicate the best version of VIFO.
    }
    \label{fig:acc-alexnet}
\end{figure}

}
\newcommand{\PutPreResNetInD}{
\begin{figure}[h]
    \centering
    \begin{subfigure}[b]{0.40\textwidth}
         \centering
         \includegraphics[width=\textwidth]{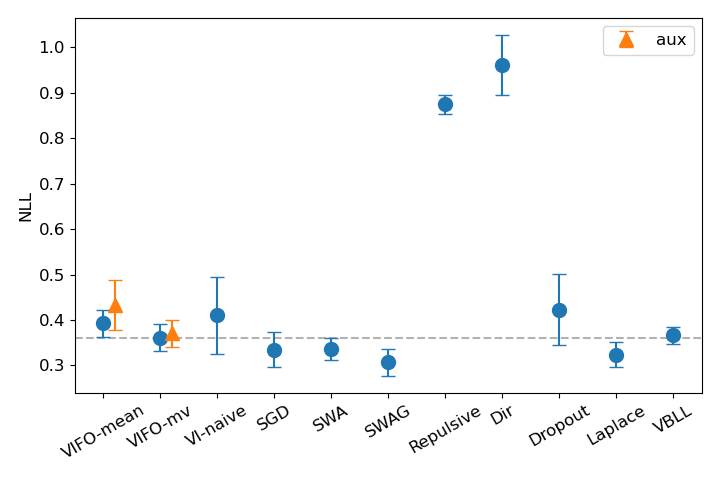}
         \caption{CIFAR10}
    \end{subfigure}
    \begin{subfigure}[b]{0.40\textwidth}
         \centering
         \includegraphics[width=\textwidth]{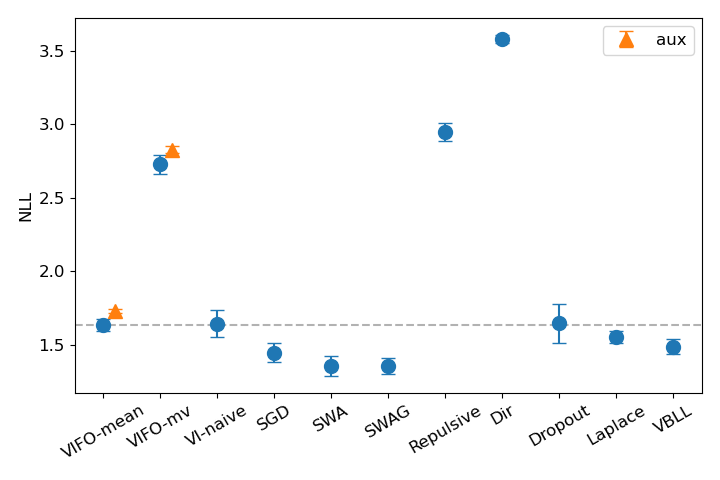}
         \caption{CIFAR100}
    \end{subfigure}
    \begin{subfigure}[b]{0.40\textwidth}
         \centering
         \includegraphics[width=\textwidth]{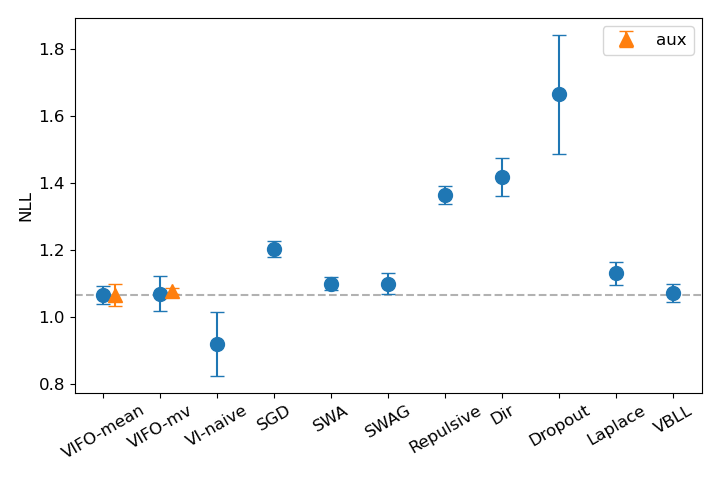}
         \caption{STL10}
    \end{subfigure}
    \begin{subfigure}[b]{0.40\textwidth}
         \centering
         \includegraphics[width=\textwidth]{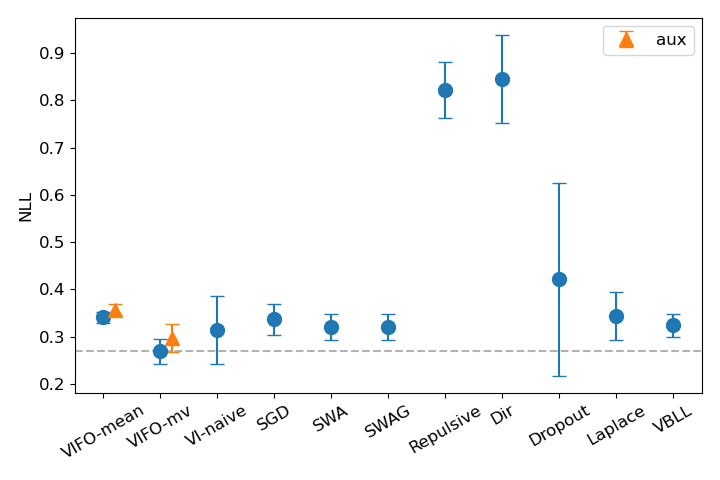}
         \caption{SVHN}
    \end{subfigure}
    \caption{Test log loss ($\downarrow$) on PreResNet20. Dashed lines indicate the best version of VIFO. The error bar is three times of the standard deviation for better visualization and same for other figures.
    }
    \label{fig:log-preresnet}
\end{figure}
}
\newcommand{\PutEntropyAlexNet}{
\begin{figure}[H]
    \centering
    \begin{subfigure}[b]{0.40\textwidth}
         \centering
         \includegraphics[width=\textwidth]{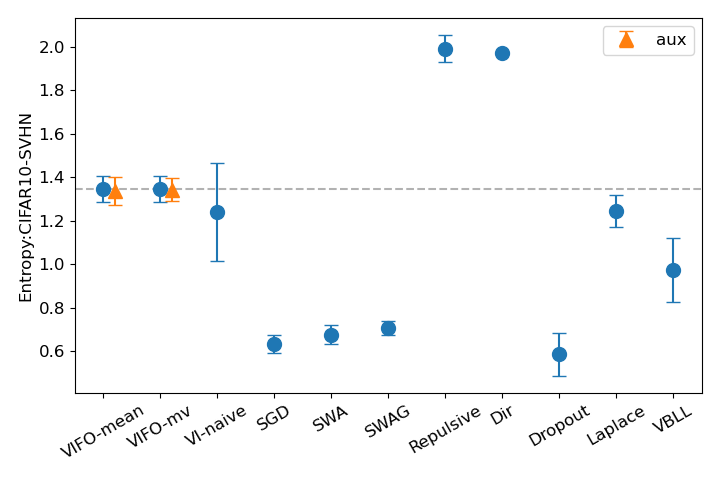}
         \caption{CIFAR10$\rightarrow$SVHN}
    \end{subfigure}
    \begin{subfigure}[b]{0.40\textwidth}
         \centering
         \includegraphics[width=\textwidth]{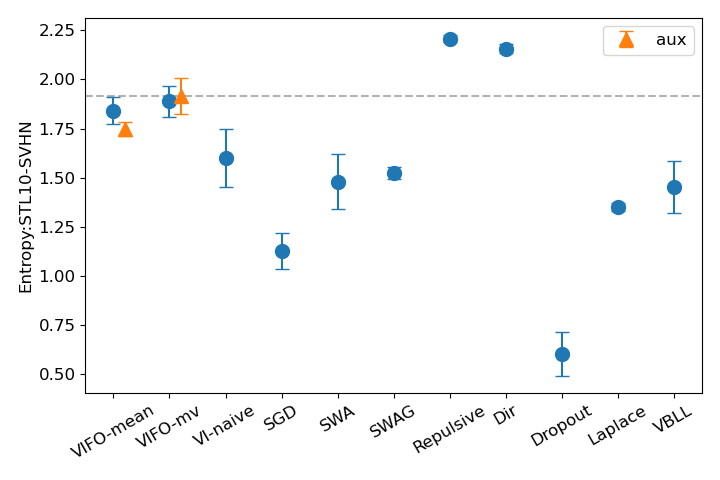}
         \caption{STL10$\rightarrow$SVHN}
    \end{subfigure}
    \begin{subfigure}[b]{0.40\textwidth}
         \centering
         \includegraphics[width=\textwidth]{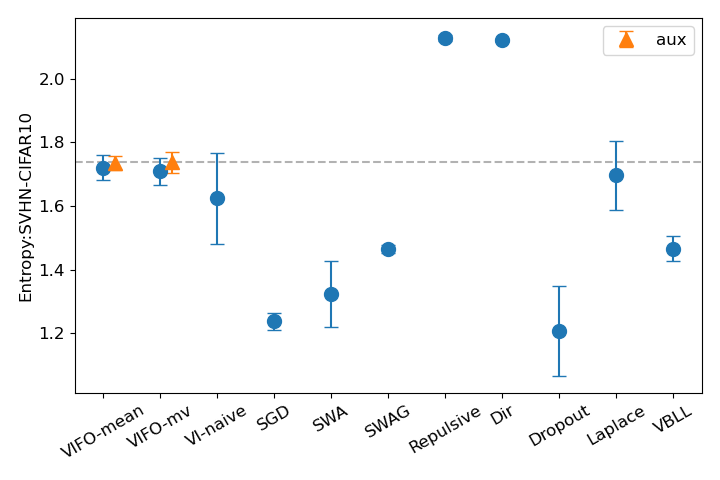}
         \caption{SVHN$\rightarrow$CIFAR10}
    \end{subfigure}
    \begin{subfigure}[b]{0.40\textwidth}
         \centering
         \includegraphics[width=\textwidth]{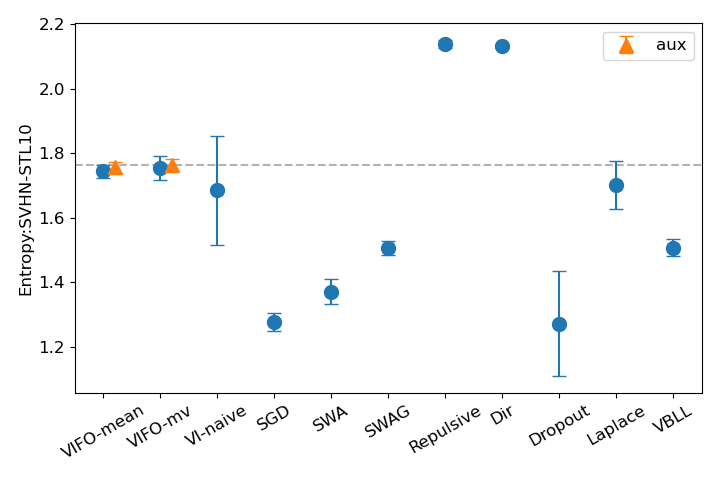}
         \caption{SVHN$\rightarrow$STL10}
    \end{subfigure}
    \caption{Entropy ($\uparrow$) on AlexNet.
    }
    \label{fig:entropy-alexnet}
\end{figure}

}
\newcommand{\PutEntropyPreResNet}{
\begin{figure}[h!]
    \centering
    \begin{subfigure}[b]{0.40\textwidth}
         \centering
         \includegraphics[width=\textwidth]{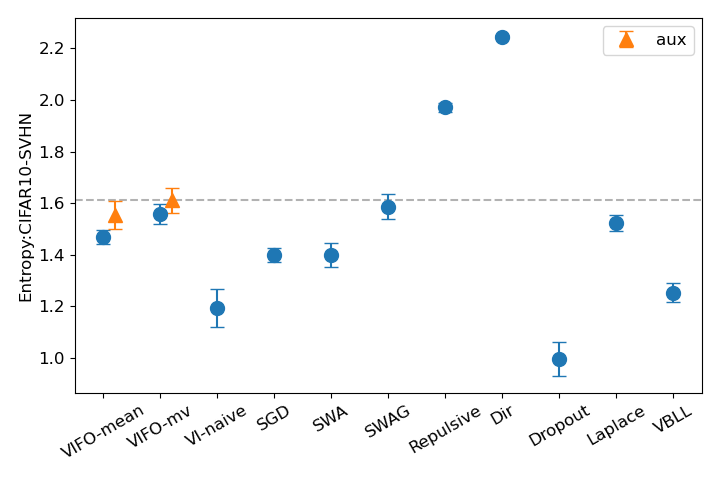}
         \caption{CIFAR10$\rightarrow$SVHN}
    \end{subfigure}
    \begin{subfigure}[b]{0.40\textwidth}
         \centering
         \includegraphics[width=\textwidth]{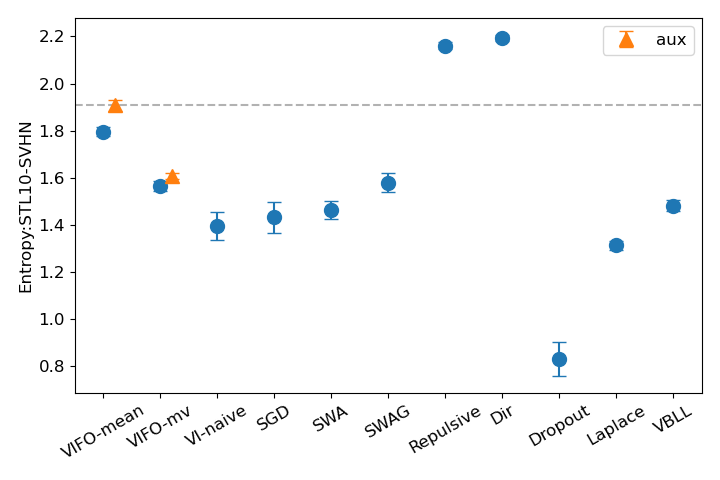}
         \caption{STL10$\rightarrow$SVHN}
    \end{subfigure}
    \begin{subfigure}[b]{0.40\textwidth}
         \centering
         \includegraphics[width=\textwidth]{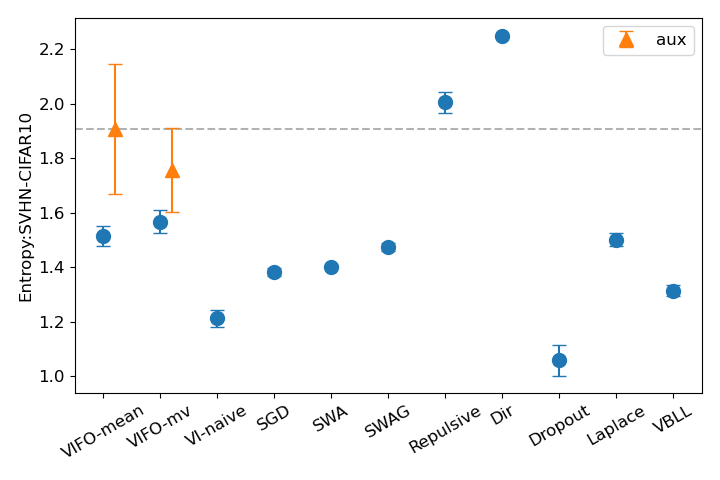}
         \caption{SVHN$\rightarrow$CIFAR10}
    \end{subfigure}
    \begin{subfigure}[b]{0.40\textwidth}
         \centering
         \includegraphics[width=\textwidth]{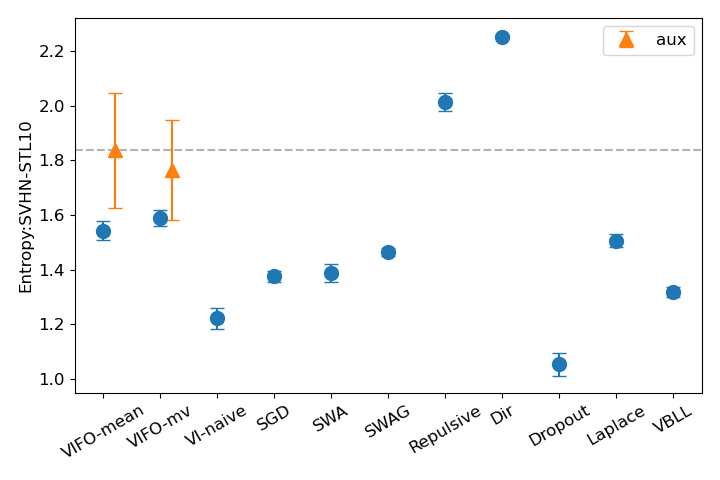}
         \caption{SVHN$\rightarrow$STL10}
    \end{subfigure}
    \caption{Entropy ($\uparrow$) on PreResNet20.
    }
    \label{fig:entropy-preresnet}
\end{figure}

}
\title{Variational Inference on the Final-Layer Output of Neural Networks}
\author{\name Yadi Wei \email weiyadi@iu.edu \\
      \addr Luddy School of Informatics, Computing, and Engineering \\
      Indiana University
      \AND
      \name Roni Khardon \email rkhardon@iu.edu \\
      \addr Luddy School of Informatics, Computing, and Engineering \\
      Indiana University
      }
\theoremstyle{plain}
\newtheorem{theorem}{Theorem}[section]
\newtheorem{lemma}[theorem]{Lemma}
\newtheorem{corollary}[theorem]{Corollary}
\theoremstyle{definition}
\newtheorem{assumption}[theorem]{Assumption}
\theoremstyle{remark}
\newtheorem{remark}[theorem]{Remark}
\newcommand{\dN}{\mathcal{N}}
\newcommand{\dG}{\mathcal{IG}}
\newcommand{\diag}{\text{diag}}
\newcommand{\tr}{\text{tr}}
\newcommand{\E}{\mathbb{E}}
\newcommand{\kl}{\text{KL}}
\newcommand{\aux}{\text{aux}}
\DeclareMathOperator*{\argmin}{arg\,min}
\begin{document}

\maketitle

\begin{abstract}
Traditional neural networks are simple to train but they typically produce overconfident predictions. In contrast, Bayesian neural networks provide good uncertainty quantification but optimizing them is time consuming due to the large parameter space. 
This paper proposes to combine the advantages of both approaches by performing Variational Inference in the Final layer Output space (VIFO), because the output space is much smaller than the parameter space. 
We use neural networks to learn the mean and the variance of the probabilistic output. 
Using the Bayesian formulation we incorporate collapsed variational inference with VIFO which significantly improves the performance in practice.
On the other hand, like standard, non-Bayesian models, VIFO enjoys simple training and one can use Rademacher complexity to provide risk bounds for the model.
Experiments show that VIFO 
provides a good tradeoff in terms of run time and uncertainty quantification, especially for out of distribution data. 
\end{abstract}

\section{Introduction} 
With the development of training and representation methods for deep learning, models using neural networks provide excellent predictions. However, such models fall behind in terms of uncertainty quantification and their predictions are often overconfident \citep{temp-scaling}. 
Bayesian methods provide a methodology for uncertainty quantification by placing a prior over parameters and computing a posterior given observed data, but the computation required for such methods is often infeasible.  
Variational inference (VI) is one of the most popular approaches for approximating the Bayesian outcome, e.g., \citep{BNN, Practical-VI, DVI}. By minimizing the KL divergence between the variational distribution and the true posterior and constructing an evidence lower bound (ELBO), one can find the best approximation to the intractable posterior. However, when applied to deep learning, VI requires sampling 
to compute the ELBO, and it suffers from both high computational cost and large variance in gradient estimation. \citet{DVI} have proposed a deterministic variational inference (DVI) approach to alleviate the latter problem. The idea relies on the central limit theorem, which implies that with sufficiently many hidden neurons, the distribution of the output of each layer forms a multivariate Gaussian distribution. Thus we only need to compute the mean and covariance of the output of each layer. However, DVI still suffers from high computational cost and complex optimization.

Inspired by DVI, we observe that the only aspect that affects the prediction is the distribution of the output of the final layer in the neural network. We therefore propose to perform variational inference in the final-layer {\em output space} (rather than parameter space), where the posterior mean and diagonal variance are learned by a neural network. We call this method VIFO. 
Like all Bayesian methods, VIFO
induces a distribution over its probabilistic predictions and has the advantage of uncertainty quantification in predictions. 
At the same time, VIFO has a single set of parameters and thus enjoys simple optimization as in non-Bayesian methods.

We can motivate VIFO from several theoretical perspectives.
First, we derive improved priors (or regularizers) for VIFO motivated by collapsed variational inference \citep{collapsed-elbo} and empirical bayes \citep{DVI}. The new regularizers greatly improve the performance of VIFO. 
Second, we show that, for the linear case, with expressive priors VIFO can capture the same predictions as standard variational inference. 
On the other hand, with practical priors and deep networks VIFO exhibits limited expressiveness. 
We propose to overcome this limitation by using ensembles that enable fast training and further improve uncertainty quantification.
Third, due to its simplicity, one can derive risk bounds for the model through Rademacher complexity. 
Thus, VIFO was motivated as an effective simplification of VI and DVI, but the ensembles of VIFO can be seen as a Bayesian extension of Deep Ensembles \citep{Deep-ensembles}.
We discuss the connections to other Bayesian predictors below. 

An experimental evaluation compares
VIFO with VI and 
other state of the art approximation methods
and with non-Bayesian neural networks. 
The results show that (1) VIFO is much faster than VI and only slightly slower than base models, and (2) 
ensembles of VIFO achieve better uncertainty quantification on shifted and out-of-distribution data 
while preserving the quality of in-distribution predictions.
Overall, VIFO provides a good tradeoff in terms of run time and uncertainty quantification especially for out-of-distribution data.

\section{VIFO}
\label{sec:vifo}
In this section we describe our VIFO method in detail.
We start with a description of the non-Bayesian {\em base model}.
Given a neural network parametrized by weights $W$ and input $x$, the output is $z=f_W(x) \in \mathbb{R}^K$. 
The base model provides probabilistic predictions by combining the output of the network with any prediction likelihood $p(y|z)$. Traditional, non-Bayesian models, minimize $-\log p(y|z)$ or a regularized variant.


\begin{remark} 
The base model and VIFO are applicable with any likelihood function and our development of VIFO below is general. 
To illustrate we discuss classification and regression. 
In classification, $K$ is the number of classes. The probability of being class $i$ is defined as
\begin{align}
    p(y=i|z)=\text{softmax}(z)_i=\frac{\exp z_i}{\sum_j \exp z_j}.
    \label{eq:classification-predictor}
\end{align}
In regression, $z=(m, l)$ is a 2-dimensional vector and $K=2$. We apply a function $g$ on $l$ that maps $l$ to a positive real number. The probability of the output $y$ is:
\begin{align}
    p(y|z) = \dN(y|m, g(l)) = \frac{1}{\sqrt{2\pi g(l)}} \exp\left(-\frac{(y-m)^2}{2 g(l)} \right).
    \label{eq:regression-predictor}
\end{align}
\end{remark}

By fixing the weights $W$, base models map $x$ to $z$ deterministically.
Bayesian inference puts a distribution over $W$ and marginalizes out to get a distribution over $z$ from which predictions can be calculated. 
Since exact marginalization is not tractable, variational inference provides an approximation which yields the well known ELBO objective for optimization:
\begin{align}
    \log p(\mathcal D) \geq \E_{q(W)} \left[\log \frac{p(W, \mathcal D)}{q(W)} \right]
    = \sum_{(x, y)\in \mathcal D}\E_{q(W)} [\log p(y|W, x)] - \kl(q(W) \Vert p(W)).
\end{align} 

As shown by \citet{DVI}, by the central limit theorem, with a sufficiently wide neural network the marginal distribution of $z$ is Gaussian.
DVI explicitly calculates an analytic approximation of the mean and variance of the output of each layer (valid for specific activation functions) and avoids the sampling typically used for optimization of the ELBO in other methods. 

VIFO pursues this in a direct manner. It has two sets of weights, $W_1$ and $W_2$ (with potentially shared components), to model the mean and variance of $z$. That is, $\mu_q(x)=f_{W_1}(x)$, $\sigma_q(x) = g(f_{W_2}(x))$, where $g:\mathbb{R} \rightarrow \mathbb{R}^{+}$ maps the output to positive real numbers as the variance is positive. Thus, $q(z|x)=\dN(z|\mu_q(x), \diag(\sigma^2_q(x)))$, where $\mu_q(x), \sigma_q^2(x)$ are vectors of the corresponding dimension. We will call $q(z|x)$ the 
\emph{variational output distribution}. 
As in the base model, 
given $z$, $y$ is generated from the likelihood $p(y|z)$.

\begin{remark}
    VIFO in regression is different from the existing models known as the mean-variance estimator \citep{mve, mve2, uq_cv}. Instead, mean-variance estimators are the base models that VIFO can be applied on. 
    Applying VIFO to these models results in four outputs: $\mu_m$ and $\mu_l$, which are the means of $m$ and $l$, and $\sigma^2_m$ and $\sigma^2_l$, which are the variances of $m$ and $l$. These variances come from the variational output distribution. We sample $m \sim \dN(\mu_m, \sigma_m^2)$ and $l \sim \dN(\mu_l, \sigma^2_l)$, then form $z = (m, l)$. Like all Bayesian methods VIFO computes a distribution over distributions which is lacking in non-Bayesian predictions. 
\end{remark}

Unlike VI which puts a prior over $W$, VIFO models the distribution over $z$ and therefore we put a prior directly over $z$. We consider two options, a conditional prior $p(z|x)$ and a simpler prior $p(z)$.
Both of these choices yield a valid ELBO using the same steps:
\begin{align}
\label{eq:naive-elbo}
    \log p(y|x) \geq \E_{q(z|x)} \left[\log \frac{p(y,z|x)}{q(z|x)} \right]
    = \E_{q(z|x)} [\log p(y|z)] - \kl(q(z|x) \Vert p(z|x)).
\end{align} 
The approach has some similarity to
Dirichlet-based models \citep{dirichlet, dir-flow, pitfalls}.
However, we perform inference on the output whereas, as discussed by \citet{pitfalls}, these models implicitly perform variational inference on the prediction. 
In particular, in that work 
$z$ is interpreted as a vector in the simplex and $q(z|x)$ and 
$p(z)$ are Dirichlet distributions, 
whereas when using VIFO for classification $z$ has a Gaussian distribution and $p(y|z)$ is on the simplex.
In other words,
we model and regularize different distributions.
We discuss related work in more details below.


\cref{eq:naive-elbo} is defined for every $(x, y)$. For a dataset $\mathcal D=\{(x, y)\}$, we optimize $W_1$ and $W_2$ such that
\begin{align*}
    \sum_{(x,y)\in \mathcal D} \Big\{ \E_{q(z|x)}[\log p(y|z)] - \kl(q(z|x) \Vert p(z|x)) \Big\}
\end{align*}
is maximized.
We regard the negation of the first term $\E_{q(z|x)}[-\log p(y|z)]$ as the loss term and treat $\kl(q(z|x) \Vert p(z|x))$ as a regularizer. 

\begin{figure}[t]
    \centering
    \begin{subfigure}[b]{0.35\textwidth}
         \centering
         \includegraphics[width=\textwidth]{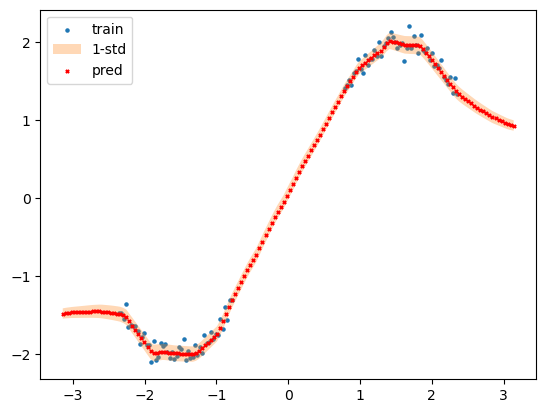}
         \caption{without auxiliary training}
    \end{subfigure}
    \begin{subfigure}[b]{0.35\textwidth}
         \centering
         \includegraphics[width=\textwidth]{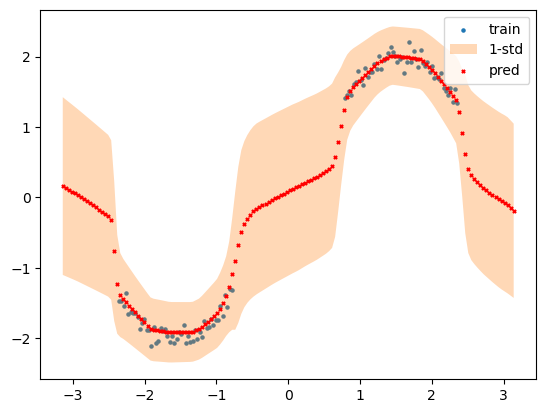}
         \caption{with auxiliary training}
    \end{subfigure}
    \caption{Predictive distribution of VIFO using an MLP. Blue points are training data generated from a sinusoidal function, red points are the predicted mean, shaded area indicates the 1 standard deviation. More details are in Appendix \ref{sec:detail-artificial}.}
    \label{fig:aux}
\end{figure}

\subsection{Auxiliary Training}
As in prior work \citep{functionalVI}, 
to improve uncertainty quantification we introduce auxiliary input $x_\aux$ and include $\kl(q(z|x_\aux) \Vert p(z|x_\aux))$ as an additional regularization term.
We include corresponding coefficients $\eta$ and $\eta_\aux$ on the regularizers, as is often done in variational approximations 
(e.g., \citep{betavae,Sheth2017,jankowiak2019sparse,HMC, dlm-sgp, dlm-bnn}.). 
Then, viewed as a regularized loss minimization, the optimization objective for VIFO becomes:
\begin{align}
    \min_{W_1, W_2} \sum_{(x, y)\in \mathcal D} \Big\{ \E_{q(z|x)}[-\log p(y|z)] + \eta \kl(q(z|x) \Vert p(z|x)) + \eta_\aux \sum_{x_\aux} \kl (q(z|x_\aux) \Vert p(z | x_\aux))\Big\}.
    \label{eq:regularized-loss}
\end{align}
Generally the loss term is intractable, so we use Monte Carlo samples to approximate it.
In practice, since auxiliary data is not available, 
we uniformly sample $x_\aux^{(i)} \sim \text{Unif} [x_{\min}^{(i)} - \frac{d}{2}, x_{\max}^{(i)} + \frac{d}{2}]$ where $d=x_{\max}^{(i)} - x_{\min}^{(i)}$ for each entry $i$. Figure \ref{fig:aux} shows an example where an MLP is used to learn a complex function over 1 dimensional input space, illustrating that such regularization can improve uncertainty quantification in the area where the data is missing. 

\subsection{Collapsed VIFO}
\label{sec:cvi}
Bayesian methods are often sensitive to the choice of prior parameters. To overcome this, \citet{DVI} used empirical Bayes (EB) to select the value of the prior parameters, and \citet{collapsed-elbo} proposed collapsed variational inference, which defined a hierarchical model and performed inference on the prior parameters as well. 
We show how these ideas are applicable in VIFO and derive empirical Bayes as a special case of collapsed variational inference.
In addition to $z$, we model the prior mean $\mu_p$ and variance $\sigma_p^2$ as Bayesian parameters. Now the prior becomes $p(z|\mu_p, \sigma_p^2) p(\mu_p, \sigma_p^2)$ and the variational distribution is $q(z|x) q(\mu_p, \sigma_p^2)$.
Then the objective becomes:
\begin{align}
\label{eq:collapsed-elbo}
    \log p(y|x) &\geq \E_{q(z|x) q(\mu_p, \sigma_p^2)} \left[\log \frac{p(y, z, \mu_p, \sigma_p^2|x)}{q(z|x) q(\mu_p, \sigma_p^2)} \right] \nonumber \\
    &=\E_{q(z|x)}[\log p(y|z)] - \E_{q(\mu_p, \sigma_p^2)}[\kl(q(z|x)\Vert p(z|\mu_p, \sigma_p^2))] -\kl(q(\mu_p, \sigma_p^2) \Vert p(\mu_p, \sigma_p^2)) .
\end{align}
Similar to \cref{eq:regularized-loss}, we treat the first term as a loss and the other two terms as a regularizer along with a coefficient $\eta$ and aggregate over all data.
Since the loss does not contain $\mu_p$ and $\sigma_p^2$, we can get the optimal $q^*(\mu_p, \sigma_p^2)$ by optimizing the regularizer and the choice of $\eta$ will not affect $q^*(\mu_p, \sigma_p^2)$. 
%
Then we can plug in the value of $q^*$ into \cref{eq:collapsed-elbo}. We next show how to compute $q^*(\mu_p, \sigma_p^2)$ and the final collapsed variational inference objective. The derivations are similar to the ones by \citet{collapsed-elbo} but they are applied on $z$ not on $W$. 
Recall that $K$ is the dimension of $z$.

\paragraph{Learn mean, fix variance}
Let $p(z|\mu_p)=\dN(z|\mu_p, \gamma I)$, $p(\mu_p)=\dN(\mu_p|0, \alpha I)$. Then $q^*(\mu_p|x)$ is
\begin{align*}
    \argmin_{q(\mu_p)} 
    \E_{q(\mu_p)}[\kl(q(z|x) \Vert p(z|\mu_p))] + \kl(q(\mu_p) \Vert p(\mu_p)),
\end{align*}
and the optimal $q^*(\mu_p|x)$ can be computed as:
\begin{align*}
    \log q^*(\mu_p | x) \propto -\frac{(\mu_q(x) - \mu_p)^\top (\mu_q(x) - \mu_p)}{2 \gamma} - \frac{\mu_p^\top \mu_p}{2\alpha},
\end{align*}
and $q^*(\mu_p |x) = \dN(\mu_p| \frac{\alpha}{\alpha + \gamma} \mu_q(x), \frac{\alpha \gamma}{\alpha + \gamma})$. Notice that, unlike the prior, $q^*(\mu_p)$ depends on $x$. If we put $q^*$ back in the regularizer of \cref{eq:collapsed-elbo}, the regularizer becomes:
\begin{align}
\label{eq:collapsed-mean}
    \frac{1}{2\gamma} \left[1^\top \sigma^2_q(x) + \frac{\gamma}{\gamma + \alpha} \mu_q(x)^\top \mu_q(x) \right] - \frac{1}{2} 1^\top \log \sigma_q^2(x) 
    +\frac{K}{2} \log (\gamma+\alpha) - \frac{K}{2}.
\end{align}
As in \cite{collapsed-elbo}, \cref{eq:collapsed-mean} puts a factor $\frac{\gamma}{\gamma+\alpha} < 1$ in front of $\mu_q(x)^\top \mu_q(x)$, which weakens the regularization on $\mu_q(x)$. 
We refer to this method as ``VIFO-mean''. 

\cref{fig:prior}, shows the learned prior for VIFO-mean and VI for the same example as in Figure \ref{fig:aux}. We observe that VIFO-mean
allows for diverse prior distributions and captures the data distribution well.

\begin{figure}[t]
    \centering
    \begin{subfigure}[b]{0.32\textwidth}
         \centering
         \includegraphics[width=\textwidth]{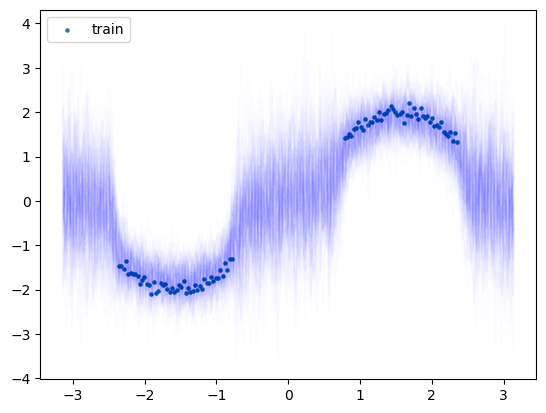}
         \caption{VIFO-mean}
    \end{subfigure}
    \begin{subfigure}[b]{0.32\textwidth}
         \centering
         \includegraphics[width=\textwidth]{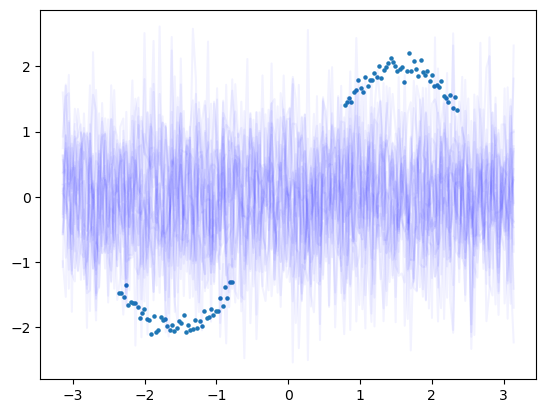}
         \caption{VI-naive}
    \end{subfigure}
    \begin{subfigure}[b]{0.32\textwidth}
         \centering
         \includegraphics[width=\textwidth]{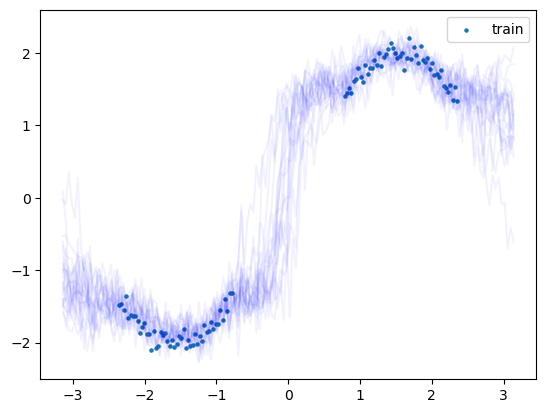}
         \caption{VI-mean}
    \end{subfigure}
    \caption{Induced predictions by learned prior distribution for different methods. 
    Note that VI has a prior over weights and VIFO has a prior over $z$. 
    For each method we sample values from the prior and calculate predictions $y$ based on the sampled values. We then plot the $y$ values. 
    As we can see, VI-naive induces a uniform prior that does not capture the data distribution, VI-mean has an increased variance in areas where data is missing and VIFO-mean does so to a larger extent. Details are given in Appendix \ref{sec:detail-artificial}.}
    \label{fig:prior}
\end{figure}

\paragraph{Other Regularizers}
The same approach can be used for a joint prior 
$p(z|\mu_p, \sigma_p^2)=\dN(z|\mu_p, \sigma_p^2)$, $p(\mu_p)=\dN(\mu_p|0, \frac{1}{t} \sigma_p^2)$, $p(\sigma_p^2)=\dG(\sigma_p^2|\alpha, \beta)$, where $\dG$ is inverse Gamma,
yielding a method we call ``VIFO-mv''.
Similarly, the hierarchical prior in empirical Bayes models the variance but not the mean
$p(\sigma_p^2)=\dG(\sigma_p^2|\alpha, \beta)$, $p(z|\sigma_p^2)=\dN(z|0, \sigma_p^2)$ and yields ``VIFO-eb''.
Derivations are given in Appendix \ref{sec:derive-cvi}. 

\newcommand{\ignoretext}[1]{}
\ignoretext{
\subsection{Learn both mean and variance}
Let $p(z|\mu_p, \sigma_p^2)=\dN(z|\mu_p, \sigma_p^2)$, $p(\mu_p)=\dN(\mu_p|0, \frac{1}{t} \sigma_p^2)$, $p(\sigma_p^2)=\dG(\sigma_p^2|\alpha, \beta)$, where $\dG$ indicates the inverse Gamma distribution. The posterior of $\mu_p$ is Gaussian and the posterior of $\sigma_p^2$ is inverse Gamma. In this case we can show that $q^*(\mu_p|x) = \dN(\mu_p| \frac{1}{t+1} \mu_q(x), \frac{1}{(1+t)} \sigma_p^2)$ and $q^*(\sigma_p^2 | x) = \dG(\sigma_p^2|(\alpha + \frac{1}{2})1, \beta+\frac{t}{2(t+1)} \mu_q(x)^2 + \frac{1}{2}\sigma_q^2(x))$ and the regularizer becomes
\begin{align}
    &(\alpha+\frac{1}{2}) 1^\top \log \left[\beta 1 + \frac{t}{2(1+t)} \mu_q(x)^2 + \frac{1}{2}\sigma_q^2(x) \right] - \frac{1}{2} 1^\top \log \sigma_q^2(x).
\end{align}
We refer to this method as ``VIFO-mv''.

\subsection{Empirical Bayes}
Let $p(\sigma_p^2)=\dG(\sigma_p^2|\alpha, \beta)$, $p(z|\sigma_p^2)=\dN(z|0, \sigma_p^2)$ and let $q(\sigma_p^2)$ be a delta distribution $\delta(s^*(x))$. Then the regularizer of \cref{eq:collapsed-elbo} becomes:
\begin{align}
\label{eq:eb}
    \kl(q(z|x)||p(z|\sigma^2_p)) - \log p(\sigma_p^2).
\end{align}
By minimizing this objective we obtain the optimal value $s^*(x)=\frac{\mu_q(x)^\top \mu_q(x) + 1^\top\sigma_q^2(x) + 2\beta}{K+2\alpha+2}$. Plugging this value into the KL term 
we obtain:
\begin{align}
    &\frac{1}{2} \left[K\log \frac{\mu_q(x)^\top \mu_q(x) + 1^\top\sigma_q^2(x) + 2\beta}{K + 2\alpha + 2} - 1^\top \log |\sigma_q^2(x)| \right] \nonumber \\
    &- \frac{K}{2}+ \frac{1}{2} \frac{(K+2\alpha+2)(\mu_q(x)^\top \mu_q(x) + 1^\top\sigma_q^2(x))}{\mu_q(x)^\top \mu_q(x) + 1^\top\sigma_q^2(x) + 2\beta} .
    \label{eq:eb-regularizer}
\end{align}
Following \cite{DVI},
as shown in appendix \ref{eb}, including the prior term in the regularizer reduces its complexity and this harms performance in practice. 
Hence, we use Eq~(\ref{eq:eb-regularizer}) as the regularizer. 
We call this method ``VIFO-eb''.


\subsection{L2 regularization}
Notice that all regularizations above are regularizing the variational predictive distribution $q(z|x)$, not the parameters of the neural network. By analogy to MAP solutions of Bayesian neural networks, we can also regularize the weights directly by replacing the regularizers with $\frac{\eta}{N} \lVert W \rVert_2$ in the objective, even though our posterior is on $z$. We include this in our experiments and call it ``VIFO-l2''.

}

\section{Expressiveness of VIFO}
\label{sec:VI-VIFO}
VIFO is inspired by DVI and it highly reduces the computational cost. 
In this section we explore whether VIFO can produce exactly the same predictive distribution as VI. We show that this is the case for linear models but that for deep models VIFO is less powerful. 
We first introduce the setting of linear models. Let the parameter be $\theta$, then the model is:
\begin{align}
    y|x, \theta \sim p(y|\theta^\top x).
\end{align}
For example, $p(y|\theta^\top x) = \dN(y|\theta^\top x, \frac{1}{\beta})$ where $\beta$ is a constant for Bayesian linear regression; and $p(y=1|\theta^\top x) = \frac{1}{1+\exp(-\theta^\top x)}$ for Bayesian binary classification.

For simplicity, we assume $\theta \in \mathbb{R}^d$, where $d$ is the dimension of $x$, and then the output dimension $K=1$. The standard approach specifies the prior of $\theta$ to be $p(\theta) = \dN(\theta|m_0, S_0)$, and uses $q(\theta)=\dN(\theta|m, S)$. Then the ELBO objective, with a dataset $X_N=(x_1, x_2, \dots, x_N)\in \mathbb{R}^{d\times N}$ and $Y_N=(y_1, y_2, \dots, y_N)\in \mathbb{R}^N$, is
\begin{align}
    &\sum_{i=1}^N \E_{q(\theta)}[\log p(y_i|\theta^\top x_i)] - \kl(q(\theta) \Vert p(\theta)) \nonumber \\
    =&\sum_{i=1}^N \E_{q(\theta)}[\log p(y_i|\theta^\top x_i)] - \frac{1}{2}[\tr(S_0^{-1} S) - \log |S_0^{-1} S|] 
    - \frac{1}{2} (m-m_0)^\top S_0^{-1} (m-m_0) + \frac{d}{2}.
    \label{eq:VI-blr}
\end{align}
As the following theorem shows, if we use a conditional correlated prior and a variational posterior that correlates data points, then in the linear case VIFO can recover the ELBO and VI solution. We defer the proof and discussion of $K>1$ to Appendix \ref{sec:proof-VI-VIFO}.

\begin{theorem}
\label{thm:linear}
    Let $q(z|x)=\dN(z|w^\top x, x^\top V x)$ be the variational predictive distribution of VIFO, where $w$ and $V$ are the parameters to be optimized, and
    let $p(z|X_N)=\dN(z|m_0^\top X_N, X_N^\top S_0 X_N)$ and $q(z|X_N)=\dN(z|w^\top X_N, X_N^\top V X_N)$ be a correlated and data-specific prior and posterior (which means that for different data $x$, we have a different prior/posterior over $z$). Then the VIFO objective is equivalent to the ELBO objective 
    implying identical predictive distributions.
\end{theorem}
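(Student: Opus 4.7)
My plan is to exhibit a parameter-level correspondence between VI and the correlated VIFO of the theorem---identify $w=m$ and $V=S$---and verify that the two objectives agree term by term so that the two optimisations are the same problem.

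First I would handle the expected log-likelihood terms. Under $q(\theta)=\dN(\theta|m,S)$, the induced distribution of $\theta^\top x_i$ is $\dN(m^\top x_i,\, x_i^\top S x_i)$, which under $(w,V)=(m,S)$ is exactly the VIFO marginal $q(z|x_i)=\dN(z|w^\top x_i,\, x_i^\top V x_i)$. Because the likelihood depends on $\theta$ only through the scalar $z=\theta^\top x_i$, the law of the unconscious statistician gives $\E_{q(\theta)}[\log p(y_i|\theta^\top x_i)] = \E_{q(z|x_i)}[\log p(y_i|z)]$ for every $i$, so the data-fit sums in \cref{eq:VI-blr} and in the VIFO objective coincide.

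Second I would handle the KL terms using the linear map $T:\mathbb{R}^d\to\mathbb{R}^N$ defined by $T(\theta)=X_N^\top\theta$. The pushforward of $q(\theta)$ under $T$ is $\dN(X_N^\top m,\, X_N^\top S X_N)$, which equals $q(z|X_N)$ in the theorem, and analogously the pushforward of $p(\theta)$ equals $p(z|X_N)$. Under the (generic) assumption that $X_N$ has rank $d$, $T$ is injective and its restriction to its image $\mathrm{col}(X_N^\top)$ is a bijection; invariance of KL under bijective transformations then yields $\kl(q(z|X_N)\Vert p(z|X_N)) = \kl(q(\theta)\Vert p(\theta))$, matching the VI regulariser. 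Combining the two steps gives equality of the full objectives, and the same $T$ pulls any VIFO $(w,V)$ back to VI parameters, so the two variational families attain the same set of ELBO values and therefore the same optimal predictive distribution $\dN(z|w^{*\top}x,\, x^\top V^* x)$ at every test point.

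The main obstacle I anticipate is that $X_N^\top V X_N$ has rank at most $d$, so when $N>d$ the ``joint'' distributions $q(z|X_N)$ and $p(z|X_N)$ are degenerate Gaussians on $\mathbb{R}^N$ and do not admit Lebesgue densities. I would resolve this by interpreting both as Gaussian measures supported on the $d$-dimensional subspace $\mathrm{col}(X_N^\top)$---which they share as long as $X_N$ has rank $d$---on which the KL is well defined via the density on that subspace and the bijective-invariance argument goes through cleanly. A second, purely computational route would verify the same identity by expanding both KLs with the Moore--Penrose pseudoinverse and using $\tr((X_N^\top S_0 X_N)^{+} X_N^\top S X_N)=\tr(S_0^{-1}S)$, together with the corresponding log-determinant and quadratic identities; this avoids the measure-theoretic subtlety at the price of longer matrix algebra.
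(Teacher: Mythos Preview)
Your proposal is correct. The data-fit step is essentially the paper's argument stated more abstractly: the paper uses a Cholesky factor $V=LL^\top$ and explicit reparametrisation $z=w^\top x_i + x_i^\top L\epsilon=(w+L\epsilon)^\top x_i$ to reach the same conclusion you obtain in one line from the pushforward law of $\theta^\top x_i$.

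For the KL term your primary route differs genuinely from the paper. You invoke invariance of KL under the injective linear map $T(\theta)=X_N^\top\theta$, interpreting the degenerate Gaussians as measures on the $d$-dimensional subspace $\mathrm{col}(X_N^\top)$; this gives $\kl(q(z|X_N)\Vert p(z|X_N))=\kl(q(\theta)\Vert p(\theta))$ directly and exactly. The paper instead takes your ``second, purely computational route'': it writes the VIFO KL with the $N$-dimensional Gaussian formula, replaces inverses and determinants by Moore--Penrose pseudoinverses and pseudo-determinants, and then verifies the three identities $\tr((X_N^\top S_0 X_N)^{+}X_N^\top V X_N)=\tr(S_0^{-1}V)$, the analogous quadratic-form reduction, and an eigenvalue argument showing the pseudo-determinant of $(X_N^\top S_0 X_N)^{+}X_N^\top V X_N$ equals $|S_0^{-1}V|$. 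That route leaves a harmless additive constant (the formula carries $N/2$ rather than $d/2$), which the paper notes. Your measure-theoretic argument is shorter and yields equality without that discrepancy; the paper's computation, on the other hand, makes explicit what ``the VIFO KL'' means operationally when one actually writes it down with matrices, which may be closer to how a practitioner would implement it.
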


However, as the next theorem shows, for the non-linear case 
we cannot produce the variational output distribution $q(z|x)$ as if it is marginalized over the posterior on $W$.




\begin{theorem}
\label{thm:not-recover-vi}
    Given a neural network $f_W$ parametrized by $W$ and a mean-field Gaussian distribution $q(W)$ over $W$, there may not exist a set of parameters $\Tilde{W}$ such that for all input $x$ we have $\E_{q(W)}[f_W(x)]=f_{\Tilde{W}}(x)$.
\end{theorem}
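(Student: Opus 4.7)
Since the theorem is a non-existence claim, my plan is to exhibit one concrete neural network $f_W$ together with one mean-field Gaussian $q(W)$ for which no single $\tilde W$ can satisfy $\E_{q(W)}[f_W(x)] = f_{\tilde W}(x)$ simultaneously for all $x$. A single counterexample suffices, so I would look for the simplest architecture where a nonlinearity interacts poorly with Gaussian averaging.

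The candidate I would try first is a scalar input, scalar weight, single ReLU unit: $f_w(x) = \max(wx, 0)$ with $q(w) = \dN(\mu, \sigma^2)$ for some $\sigma > 0$; mean-field is trivial since there is only one weight. The key observation is that for any fixed $x \neq 0$, the event $\{wx > 0\}$ has strictly positive probability under $q$, and $\max(wx, 0) > 0$ on that event. Consequently, $\E_{q(w)}[\max(wx, 0)] > 0$ holds at both $x = 1$ and $x = -1$. But for any candidate $\tilde w \in \mathbb{R}$, the function $f_{\tilde w}$ vanishes on at least one of the half-lines $\{x > 0\}$ or $\{x < 0\}$: if $\tilde w \geq 0$ then $f_{\tilde w}(-1) = 0$, and if $\tilde w < 0$ then $f_{\tilde w}(1) = 0$. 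This directly contradicts the two strict positivity conditions, so no such $\tilde w$ exists.

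If a less trivial architecture is desired, the same argument lifts to a two-layer network $f_{w_1, w_2}(x) = w_2 \cdot \max(w_1 x, 0)$ with independent Gaussian factors and $\E_q[w_2] \neq 0$: the expectation factors as $\E_q[w_2] \cdot \E_q[\max(w_1 x, 0)]$ and reduces to the scalar case up to an overall sign.

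I do not foresee a significant obstacle; the only subtlety worth flagging is that the statement is about agreement as a function of $x$, not pointwise at a single $x$ (pointwise agreement can often be arranged, e.g.\ solving $\tilde w^2 = \mu^2 + \sigma^2$ under a quadratic activation). The reason the counterexample works is geometric: averaging $\max(wx, 0)$ over $w \sim \dN(\mu, \sigma^2)$ turns the ``L-shape'' of a ReLU into a strictly positive ``V-shape'' with nonzero slopes on both sides of the origin, and no member of the one-parameter family $\{x \mapsto \max(\tilde w x, 0)\}$ has this shape.
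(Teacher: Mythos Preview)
Your proposal is correct and follows essentially the same route as the paper: both exploit that $\E_{q}[\max(w x,0)]$ is strictly positive on each side of the origin while any deterministic ReLU $\max(\tilde w x,0)$ vanishes on one half-line, so no single $\tilde W$ can match. The paper works with the two-layer version $w_{k,1}\psi(u_{1,1}x_1)$ and writes out the truncated-normal expectations explicitly, whereas you use the bare positivity argument on the one-weight model (and note the two-layer lift); the underlying idea is identical.
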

The proof is given in Appendix \ref{sec:proof-VI-VIFO}.
The significance of these results is twofold.  
On the one hand, we see from \Cref{thm:not-recover-vi} and the conditions of \Cref{thm:linear} that the representation is more limited, i.e., efficiency comes at some cost. 
On the other hand, \Cref{thm:linear} shows the connection of VIFO to VI, which gives a better perspective on the approximation it provides. Moreover, this facilitates the use of existing improvements in VI for VIFO such as collaposed VI applied to VIFO.

In practice, a correlated and data-specific prior $p(z|x)$ is complex, and tuning its hyperparameters would be challenging. Hence, for a practical algorithm we propose to use a simple prior $p(z)$ independent of $x$. 
In addition, to reduce computational complexity, we do not learn a full covariance matrix and focus on the diagonal approximation. 
These aspects limit expressive power but enable fast training of VIFO and hence also ensembles of VIFO.

\section{Rademacher Complexity of VIFO}
\label{sec:rademacher}
In this section we provide generalization bounds for VIFO through Rademacher Complexity. We need to make the following assumptions. These assumptions hold for classification and with a smoothed loss for regression as shown in \cref{app:rademacher}.
\begin{assumption}
\label{assumption:loss}
$\log p(y|z)$ is $L_0$-Lipschitz in $z$, i.e., $|\log p(y|z) -\log p(y|z')| \leq L_0 \lVert z - z' \rVert_2$.
\end{assumption}
\begin{assumption}
\label{assumption:link}
    The link function $g$ is $L_1$-Lipschitz.
\end{assumption}

Recall that the Rademacher complexity of a set of vectors $A\subseteq \mathbb{R}^N$ is defined as 
$R(A)=$ $\frac{1}{N} E_{\sigma\sim\{-1,1\}^N}[\sup_{a\in A}\sum_i \sigma_i a_i]$. 
The Rademacher complexity of the set of loss values induced by functions ${f\in \cal F}$ over a dataset $S$ has been used to derive generalization bounds for learning of the class ${\cal F}$.
We need the following technical lemma, proved in Appendix~\ref{app:rademacher}, that generalizes well known Lipschitz based bounds \citep{SSBD2014} to multi-input functions. 

\begin{lemma}
\label{lemma:rademacher-bivariate}
Consider an $L$-Lipschitz function $\phi:\mathbb{R} \times \mathbb{R} \rightarrow \mathbb{R}$, i.e. $\phi(a_1, b_1)-\phi(a_2, b_2) \leq L (|a_1-a_2| + |b_1-b_2|)$. For $\bm{a}, \bm{b} \in \mathbb{R}^N$, let $\phi(\bm{a}, \bm{b})$ denote the vector $(\phi(a_1, b_1), \dots, \phi(a_N, b_N))$. Let $\phi(A \times B)$ denote $\{\phi(\bm{a}, \bm{b}): \bm{a}\in A, \bm{b}\in B\}$, then 
\begin{align}
    R(\phi(A \times B)) \leq L (R(A) + R(B)).
\end{align}
\end{lemma}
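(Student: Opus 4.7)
The plan is to generalize the standard peeling proof of the one-dimensional Ledoux--Talagrand contraction lemma to the bivariate setting. Starting from
\[
N R(\phi(A\times B)) = \mathbb{E}_\sigma \sup_{\bm a\in A,\bm b\in B}\sum_{i=1}^N \sigma_i \phi(a_i,b_i),
\]
I peel off one Rademacher variable at a time. Using the standard symmetrization identity, peeling $\sigma_1$ gives
\[
\mathbb{E}_{\sigma_1}\sup_{\bm a,\bm b}\bigl[V + \sigma_1 \phi(a_1,b_1)\bigr]
= \tfrac{1}{2}\sup_{\bm a,\bm b,\bm a',\bm b'}\bigl[V(\bm a,\bm b) + V(\bm a',\bm b') + \phi(a_1,b_1) - \phi(a_1',b_1')\bigr],
\]
where $V = \sum_{j>1}\sigma_j \phi(a_j,b_j)$ and $(\bm a',\bm b')$ is a ghost copy. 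Swap symmetry of the pair $(\bm a,\bm b)\leftrightarrow(\bm a',\bm b')$ leaves $V+V'$ invariant, so I may assume $\phi(a_1,b_1)\geq \phi(a_1',b_1')$, whereupon the $L$-Lipschitz hypothesis in $\ell_1$ yields
\[
\phi(a_1,b_1) - \phi(a_1',b_1') \leq L|a_1 - a_1'| + L|b_1 - b_1'|.
\]

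The key step is then to linearize the two absolute-value terms separately into a sum of two independent Rademacher contributions, one attached to $a_1$ and one to $b_1$. Concretely, I will argue
\[
\tfrac{1}{2}\sup_{\bm a,\bm b,\bm a',\bm b'}\bigl[V + V' + L|a_1 - a_1'| + L|b_1 - b_1'|\bigr]
\leq \mathbb{E}_{\sigma_1^a,\sigma_1^b}\sup_{\bm a,\bm b}\bigl[V + L\sigma_1^a a_1 + L\sigma_1^b b_1\bigr],
\]
by introducing two fresh independent Rademacher variables $\sigma_1^a,\sigma_1^b$ and using the representation $L|c| = \max_{\epsilon\in\{\pm 1\}} L\epsilon c$ to expose the sign-structure. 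Iterating the peeling across all $N$ coordinates replaces every $\sigma_i\phi(a_i,b_i)$ with an additive pair $L\sigma_i^a a_i + L\sigma_i^b b_i$ carrying independent Rademacher signs, giving
\[
\mathbb{E}_\sigma \sup_{\bm a,\bm b}\sum_i \sigma_i \phi(a_i,b_i)
\leq \mathbb{E}_{\sigma^a,\sigma^b}\sup_{\bm a,\bm b}\Bigl[\sum_i L\sigma_i^a a_i + \sum_i L\sigma_i^b b_i\Bigr]
= L\bigl(NR(A) + NR(B)\bigr),
\]
where the last equality uses independence of $\sigma^a,\sigma^b$ together with the product structure of $A\times B$ to decouple the joint supremum into $\sup_{\bm a}\sum\sigma_i^a a_i + \sup_{\bm b}\sum \sigma_i^b b_i$. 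Dividing by $N$ finishes the proof.

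The hard part will be the linearization step. The difficulty is that a single Rademacher $\sigma_1$ couples the $a$- and $b$-contributions at coordinate~$1$, while the Lipschitz bound contributes a sum of two absolute values that must be split into independently signed pieces---and the swap symmetry of the ghost pair can only be used once, so the second absolute value cannot be eliminated by the same trick used for the first. Executing this step cleanly, without accumulating an extra constant factor, is the core of the proof; it is the $\ell_1$-Lipschitz analog of Maurer's vector-contraction inequality, and the $\ell_1$ structure is what allows one to avoid the $\sqrt{2}$ factor that a Slepian/Gaussian comparison in $\ell_2$ would otherwise incur.
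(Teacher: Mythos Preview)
Your plan correctly identifies the peeling architecture and the crux of the difficulty, but the specific linearization inequality you propose,
\[
\tfrac{1}{2}\sup_{\bm a,\bm b,\bm a',\bm b'}\bigl[V + V' + |a_1 - a_1'| + |b_1 - b_1'|\bigr]
\;\leq\; \E_{\sigma_1^a,\sigma_1^b}\sup_{\bm a,\bm b}\bigl[V + \sigma_1^a a_1 + \sigma_1^b b_1\bigr],
\]
is \emph{false} when $V=\sum_{j>1}\sigma_j\phi(a_j,b_j)$ still couples $\bm a$ and $\bm b$. Writing $S_{\epsilon_a\epsilon_b}=\sup_{\bm a,\bm b}[\epsilon_a a_1+\epsilon_b b_1+V]$, the left side equals $\tfrac12\max(S_{++}+S_{--},\,S_{+-}+S_{-+})$ while the right side equals $\tfrac14(S_{++}+S_{+-}+S_{-+}+S_{--})$; the inequality would force the max of two numbers to be at most their average. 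A concrete counterexample: take $N=2$, $A=B=\{(1,1),(-1,-1)\}$, $\phi(a,b)=ab$ (which is $1$-Lipschitz in $\ell_1$ on $[-1,1]^2$), and $\sigma_2=1$ so $V=a_2b_2=a_1b_1$. Then $S_{++}=S_{--}=3$ and $S_{+-}=S_{-+}=1$, giving left side $3$ and right side $2$. So the route through two independent Rademacher signs at the \emph{first} peeled coordinate, while $\phi$ still sits at the remaining coordinates, cannot work without a constant blow-up. You diagnose the obstruction accurately---the single pair-swap $(\bm a,\bm b)\leftrightarrow(\bm a',\bm b')$ cannot kill both absolute values---but the proposal does not overcome it.

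The paper sidesteps exactly this coupling. Instead of peeling while $\phi$ remains at the other coordinates, it works with the auxiliary set
\[
C_i=\{(a_1{+}b_1,\dots,a_{i-1}{+}b_{i-1},\phi(a_i,b_i),a_{i+1}{+}b_{i+1},\dots,a_N{+}b_N):\bm a\in A,\bm b\in B\},
\]
so that the ``rest'' term is $\sum_{j\neq i}\sigma_j(a_j+b_j)$, which \emph{separates} additively in $\bm a$ and $\bm b$. With that separability, the joint supremum over $(\bm a,\bm a',\bm b,\bm b')$ factors, and one can swap $\bm a\leftrightarrow\bm a'$ and $\bm b\leftrightarrow\bm b'$ \emph{independently}, removing both absolute values at once without introducing extra Rademacher variables or constants. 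The reduction to such $C_i$ (the hybrid step, in the style of the standard one-dimensional contraction proof) is what makes the argument go through; your direct-peeling approach lacks an analogue of that decoupling and, as the counterexample shows, cannot be repaired at the linearization step you isolate.
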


Applying the previous lemma sequentially over multiple dimensions we obtain:

\begin{corollary}
\label{cor:multi-lipschitz}
Consider an $L$-Lipschitz function $\phi:\mathbb{R}^d \rightarrow \mathbb{R}$, i.e., for any $x, x' \in \mathbb{R}^d$, $\phi(x)-\phi(x') \leq L \lVert x - x' \rVert_1$. 
Let $\phi(A^d)=\{\phi(a_{1:d, i}):\bm{a}_1, \bm{a}_2, \dots, \bm{a}_d \in A \subset \mathbb{R}^N\}$,
then $R(\phi(A^d)) \leq L d R(A)$.
\end{corollary}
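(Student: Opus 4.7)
The plan is to prove \cref{cor:multi-lipschitz} by induction on $d$, with \cref{lemma:rademacher-bivariate} (or rather, its multi-block generalization) peeling off one variable per step. I would first normalize so that the Lipschitz constant is $L = 1$ by setting $\tilde\phi = \phi/L$: this is a harmless rescaling because $R(\phi(A^d)) = L \cdot R(\tilde\phi(A^d))$, and with $L = 1$ the bound in \cref{lemma:rademacher-bivariate} reads $R(\phi(A \times B)) \leq R(A) + R(B)$, which is additive in $A$ and $B$ and so avoids any compounding when iterated.

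With this normalization in place, I would prove $R(\tilde\phi(A^d)) \leq d \cdot R(A)$ for every $1$-Lipschitz (in $\ell_1$) function $\tilde\phi: \mathbb{R}^d \to \mathbb{R}$ by induction on $d$. The base case $d = 1$ is the classical Ledoux--Talagrand contraction $R(\tilde\phi(A)) \leq R(A)$, which is recovered from \cref{lemma:rademacher-bivariate} by taking the second set to be $\{\bm{0}\}$ so that its Rademacher complexity vanishes. For the inductive step, I would write the input as a pair $(a_1, w)$ with $w = (a_2, \ldots, a_d) \in \mathbb{R}^{d-1}$. Because $\|\cdot\|_1$ decomposes additively across the two blocks, $\tilde\phi$ satisfies $|\tilde\phi(a_1, w) - \tilde\phi(a_1', w')| \leq |a_1 - a_1'| + \|w - w'\|_1$, which is exactly the block-additive Lipschitz hypothesis of \cref{lemma:rademacher-bivariate} except that the second argument ranges over $\mathbb{R}^{d-1}$. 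The symmetrization-and-contraction proof of \cref{lemma:rademacher-bivariate} extends verbatim to this setting and yields
$$R(\tilde\phi(A^d)) \leq R(A) + R(\tilde\phi_0(A^{d-1})),$$
where $\tilde\phi_0$ is a $(d-1)$-variable reduction of $\tilde\phi$ still $1$-Lipschitz in $\ell_1$. Applying the inductive hypothesis to $\tilde\phi_0$ gives $R(\tilde\phi(A^d)) \leq R(A) + (d-1) R(A) = d\,R(A)$, and rescaling by $L$ finishes the proof.

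The main obstacle is the apparently innocuous extension of \cref{lemma:rademacher-bivariate} from a scalar second argument to a $(d-1)$-dimensional one. A careless black-box recursive use of \cref{lemma:rademacher-bivariate} as stated would substitute an inductive bound of the form $L(d-1)R(A)$ into the $R(B)$ slot and thereby compound an additional factor of $L$ at every level, yielding the much weaker estimate $L^d R(A)$ rather than the claimed $L d R(A)$. Normalizing to $L = 1$ upfront is precisely what makes the chain of additive bounds collapse to $d R(A)$, and the one genuinely technical step is to justify that the one-variable-at-a-time symmetrization underlying \cref{lemma:rademacher-bivariate} goes through without modification when the Lipschitz condition decomposes block-additively across blocks of any size.
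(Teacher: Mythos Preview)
Your proposal is correct and matches the paper's one-line justification (``applying the previous lemma sequentially over multiple dimensions''); the paper gives no further detail beyond that sentence. Your normalization to $L=1$ is exactly the right move, and the inductive step goes through with $\tilde\phi_0$ taken to be the $(d-1)$-fold coordinate sum $w\mapsto\sum_{j}w_j$, which is $1$-Lipschitz in $\ell_1$ but is not a ``reduction of $\tilde\phi$'' in any meaningful sense---indeed with this choice $R(\tilde\phi_0(A^{d-1}))=(d-1)R(A)$ follows directly from subadditivity, so the induction hypothesis is never really invoked and your argument collapses to the direct $d$-argument rerun of the proof of \cref{lemma:rademacher-bivariate}.
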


With the assumptions and technical lemma, we derive the main result:
\begin{theorem}
\label{thm:rademacher}
    Let $\mathcal{H}$ be the set of functions that can be represented with neural networks with parameter space $\mathcal{W}$,
    $\mathcal{H} = \{f_W(\cdot) | W \in \mathcal{W}\}$.
    VIFO has two components, so the VIFO hypothesis class is $\mathcal{H} \times \mathcal{H}$ $=$ $\{ (f_{W_1}(\cdot),f_{W_2}(\cdot))$  $| W=(W_1,W_2), W_1, W_2 \in \mathcal{W}\}$.  
    Let $l$ be the loss function for VIFO, $l(W,(x,y))=E_{q_W(z|x)}[-\log p(y|z)]$.
    Then the Rademacher complexity of VIFO is bounded as $ R(l \circ (\mathcal{H} \times \mathcal{H}) \circ S) \leq 2 (L_0 \max\{1, L_1\} K) \cdot R(\mathcal{H} \circ S)$, where $K$ is the dimension of $z$ and $S$ is training dataset.
\end{theorem}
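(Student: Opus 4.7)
The plan is to apply \Cref{cor:multi-lipschitz} to a suitable reparametrization of the loss that exposes it as a Lipschitz function of the two network outputs $(f_{W_1}(x), f_{W_2}(x)) \in \mathbb{R}^{2K}$. The key preparatory step is to rewrite the VIFO loss via the Gaussian reparametrization trick. For fixed $(x,y)$ and $\epsilon \sim \mathcal{N}(0, I_K)$, set $u = f_{W_1}(x)$, $v = f_{W_2}(x)$, so that $\mu_q(x) = u$ and $\sigma_q(x) = g(v)$ (applied componentwise). Then
\begin{equation*}
    l(W,(x,y)) \;=\; \mathbb{E}_{\epsilon}\bigl[-\log p(y \mid u + g(v) \odot \epsilon)\bigr] \;=:\; \tilde{l}_{(x,y)}(u, v).
\end{equation*}
This exhibits the loss as a deterministic function $\tilde l_{(x,y)}$ of the $2K$-dimensional vector $(u, v)$, which is precisely what we need to plug into \Cref{cor:multi-lipschitz}.

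Next, I would establish that $\tilde l_{(x,y)}$ is Lipschitz in $(u, v)$ with respect to $\|\cdot\|_1$. Using \Cref{assumption:loss}, for any $(u,v),(u',v')$,
\begin{equation*}
    |\tilde l(u,v) - \tilde l(u',v')| \;\leq\; L_0\, \mathbb{E}_{\epsilon}\bigl[\|(u - u') + (g(v) - g(v'))\odot \epsilon\|_2\bigr].
\end{equation*}
Splitting with the triangle inequality, bounding $\|\cdot\|_2 \leq \|\cdot\|_1$, applying \Cref{assumption:link} componentwise to $g$, and using $\mathbb{E}|\epsilon_i| = \sqrt{2/\pi} \leq 1$, the right-hand side is bounded by
\begin{equation*}
    L_0\bigl(\|u - u'\|_1 + L_1 \|v - v'\|_1\bigr) \;\leq\; L_0 \max\{1, L_1\}\bigl(\|u - u'\|_1 + \|v - v'\|_1\bigr).
\end{equation*}
Thus $\tilde l_{(x,y)}$ is $L_0\max\{1,L_1\}$-Lipschitz in $\|\cdot\|_1$ on $\mathbb{R}^{2K}$, uniformly in the data point.

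Finally, I would invoke \Cref{cor:multi-lipschitz} with $d = 2K$ and $A = \mathcal{H}\circ S$: the class $l \circ (\mathcal{H}\times\mathcal{H}) \circ S$ is obtained by composing the Lipschitz function $\tilde l$ with $d=2K$ coordinate functions drawn from $\mathcal{H}\circ S$ (namely the $K$ coordinates of $f_{W_1}(x_i)$ and the $K$ coordinates of $f_{W_2}(x_i)$). The corollary then yields
\begin{equation*}
    R\bigl(l \circ (\mathcal{H}\times\mathcal{H}) \circ S\bigr) \;\leq\; L_0 \max\{1, L_1\} \cdot 2K \cdot R(\mathcal{H}\circ S),
\end{equation*}
matching the stated bound. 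The main subtlety is the Lipschitz computation, specifically handling the noise: one must pass the expectation over $\epsilon$ through the norm and exploit $\mathbb{E}\|\epsilon\|_\infty$-type bounds carefully so that the multiplicative constant is absorbed into $\max\{1,L_1\}$ rather than growing with $K$. Once that step is done cleanly, the rest is a direct application of the machinery already built in \Cref{lemma:rademacher-bivariate} and \Cref{cor:multi-lipschitz}.
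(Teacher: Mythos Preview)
Your proposal is correct and follows essentially the same approach as the paper: reparametrize the loss, show it is $L_0\max\{1,L_1\}$-Lipschitz in the $2K$ network outputs, then invoke \Cref{cor:multi-lipschitz}. The only cosmetic difference is in bounding the noise term: the paper keeps the $\ell_2$ norm and applies Jensen's inequality to get $\E_\epsilon\|\epsilon\odot(s-s')\|_2\le\sqrt{\E_\epsilon\|\epsilon\odot(s-s')\|_2^2}=\|s-s'\|_2$, whereas you pass to the $\ell_1$ norm first and use $\E|\epsilon_i|\le 1$ componentwise---both routes land at the same constant.
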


The proof is in \cref{app:rademacher} and it shows how reparamertization can facilitate computation of Rademacher bounds for Bayesian predictors. 
The Rademacher complexity for VIFO is bounded through the Rademacher complexity of deterministic neural networks.
This shows one advantage of VIFO which is more amenable to analysis than standard VI due to its simplicity. 
Risk bounds for VI have been recently developed (e.g., \citep{German2016,Sheth2017}) but they require different proof techniques. 
The Rademacher complexity for neural networks is $O\left(\frac{B_W B_x}{\sqrt{N}} \right)$ \citep{nn-rademacher}, where $B_W$ bounds the norm of the weights and $B_x$ bounds the input. The Rademacher complexity of VIFO is of the same order. 


\section{Related Work}
VIFO is related to but distinct from a number of variational and one-pass methods.
Dirichlet-based methods \citep{dirichlet, dir-flow, pitfalls}, discussed above, 
implicitly perform variational inference on the prediction and the network output provides parameters of a Dirichlet distribution.
Like VIFO they provide Bayesian predictions in a single pass over the network, but
their relation to the standard variational inference in parameter space is non obvious. 
On the other hand, 
VIFO is a single pass method clearly related to VI in parameter space which enables the benefits of collapsed variational inference. Thus VIFO can be seen to bridge between Dirichlet methods and VI.
DUQ \citep{AmersfoortSTG20} provides an alternative approach using one pass on the network. It first embeds examples into a latent space, similar to $z$, but computes classification prediction and uncertainty quantification through RBF distances to centroids of classes in that space. Hence its predictions are very different.
Another related line of work 
\citep{functionalVI, functionalPrior} performs variational inference in function space. However, they focus on choosing a better prior in weight space which is induced from Gaussian Process priors on function space, 
whereas
VIFO 
directly induces a simple prior on function space. 
\citet{no-fully-stochastic} model the distribution of the last layer by adding random noise as input and do not give an explicit form of the output distribution.

VIFO differs from other existing variational inference methods as well. 
The local reparametrization trick \citep{lrt, variational_nn} 
reduces the variance from sampling in VI. This is done by performing two forward passes with the mean and variance at each layer before sampling the output for the layer.
Hence this modifies the sampling process of VI whereas VIFO only requires one pass on the network and samples only the output of the last layer for prediction.

Last-layer variational inference \citep{lastLayer,Kristiadi0H20,DaxbergerKIEBH21,LiuLPTBL20,HarrisonWS24} performs variational inference on the {\em parameters} of the last layer, while we perform variational inference on the {\em output} of the last layer. 
Note that the last layer usually contains more parameters than the output which has constant size. 
Last Layer Laplace \cite{Kristiadi0H20,DaxbergerKIEBH21} reduces training complexity by first estimating the MAP solution, and then estimating the covariance of the parameters in one pass. SNGP \citep{LiuLPTBL20} is a variant of this method, that aims to mimic the sensitivity of Gaussian processes to distances among examples, by incorporating Fourier features at the one to last layer. 
Finally, VBLL \citep{HarrisonWS24} still maintains a distribution on last layer parameters, but approximates the expectation over these parameters in closed form (for specific likelihoods) to reduce the complexity of training and prediction. 
Thus, all these methods are much closer to VI because they maintain distributions over weights whereas VIFO produces distributions in output space. 

VIFO shares some aspects with the model of \citet{uq_cv}, 
where both use neural networks to output the mean and covariance of the last layer.
However \citet{uq_cv} use the cross entropy loss, $-\log \E_{q(z|x)} p(y|z)$
instead of our loss in \cref{eq:regularized-loss}, they use dropout for epistemic uncertainty, and their objective has no explicit regularization. Hence unlike VIFO their formulation does not correspond to a standard ELBO.  
It is also interesting to compare VIFO to the Deep Variational information bottleneck \citep{AlemiFD017}. 
The model is motivated from a different perspective but its final optimization objective, obtained after some approximations, is similar to our \cref{eq:naive-elbo}. In this sense the model is close to the VIFO-naive. However, in their formulation, $z$ is the output of a bottleneck layer which is not the final layer (because it is meant to constrain the information that flows to the final layer), and $p(z)$ which is the prior in our model is a posterior on the marginal posterior on $z$. Nonetheless, our development of rich priors through collapsed inference can help inform the choice of $p(z)$ in that model, which is typically taken to be a standard Normal.


Various alternative Bayesian techniques have been proposed.
One direction is to get samples from the true posterior, as in Markov chain Monte Carlo methods \cite{HMC, hmc_bayesian}.
Expectation propagation aims to minimize the reverse KL divergence to the true posterior
\citep{ep, ep2}. These Bayesian methods, including variational inference, often suffer from high computational cost and therefore hybrid methods were proposed. Stochastic weight averaging Gaussian \citep{SWAG} forms a Gaussian distribution over parameters from the stochastic gradient descent trajectory in the base model. 
Dropout \citep{Dropout} randomly sets weights 0 to capture uncertainty in the model. Deep ensembles \citep{Deep-ensembles} use ensembles of base models learned with random initialization and shuffling of data points and then average the predictions. These methods implicitly perform approximate inference. 
In addition to these methods, there are also non-Bayesian methods to calibrate overconfident predictions, for example, temperature scaling \citep{temp-scaling} introduces a temperature parameter to anneal the predictive distribution to avoid high confidence. 
VIFO strikes a balance between simplicity and modelling power to enable simple training and Bayesian uncertainty quantification.
On the one hand, VIFO can be seen as a simplification of VI. On the other hand, it can be seen as an extension of the base model. From this perspective, the use of ensembles of VIFO, which extend the ensembles of \citet{Deep-ensembles}, are highly motivated as a practical algorithm. 

\section{Experiments}
In this section, we compare the empirical performance of VIFO with VI and hybrid methods that use the base model. 
In VIFO, $W_1$ and $W_2$ share all parameters except those in the last layer.
VI candidates include the VI algorithm (``VI-naive''\citep{BNN}) with fixed prior parameters, and other variations from collapsed variational inference \citep{collapsed-elbo} and empirical Bayes \citep{DVI}. 
Non-Bayesian and hybrid methods include the base model (``SGD'', because it uses stochastic gradient descent as optimizer),
stochastic weight averaging (``SWA'', which uses the average of the SGD trajectory on the base model as the final weights) from \citet{swa} and SWA-Gaussian (``SWAG'', which uses the SGD trajectory to form a Gaussian distribution over the neural network weight space) from \citet{SWAG}.
We use ensembles of the base models which are known as deep ensembles \citep{Deep-ensembles}, 
and the ensembles of SWAG models, which are the multiSWAG model of \cite{multiswag2020},
both of which are considered strong baselines for uncertainty quantification \citep{uncertainty-benchmark}.
In addition to these methods, we include other approximate Bayesian algorithms for comparison. These include repulsive ensembles (``Repulsive'', \citep{re}), the Dirichlet-based model (``Dir'', \citep{dirichlet}), dropout (\citep{Dropout}), last layer Laplace with prior optimization (``Laplace'', \citep{DaxbergerKIEBH21}), and variational Bayes last layer (``VBLL'', \citep{HarrisonWS24}).
Our main goal is to show:
\begin{itemize}
    \item VIFO is much faster than VI and only slightly slower than base models;
    \item Ensembles of VIFO preserve the quality of in-distribution predictions;
    \item Ensembles of VIFO achieve better uncertainty quantification on shifted and out-of-distribution (OOD) data than all baselines.
\end{itemize}


For our main experiments, we pick four large datasets, CIFAR10, CIFAR100, SVHN, STL10, together with two types of neural networks, AlexNet \citep{alexnet} and PreResNet20 \citep{preresnet}. The regularization parameter $\eta$ is fixed to 0.1 for both VIFO and VI, as this choice yields better performance compared with the standard choice $\eta=1$. 
%
Empirically we observe that using collapsed variational inference in VI does not improve the performance. This is because \citet{collapsed-elbo} used $\eta=1$ to obtain their results
whereas we use $\eta=0.1$ which provides a much stronger baseline. 
For auxiliary training, we experiment with $\eta_{\aux} \in \{0.0, 0.1, 0.5, 1.0\}$. Larger values of $\eta_{\aux}$ generally improve OOD data detection at the cost of increased in-distribution loss, and there is no generic optimal value of $\eta_{\aux}$. In our main paper, we present only the case where $\eta_{\aux}=0.1$ because it provides a balance between in-distribution and OOD performance, with performance of other choices of $\eta_{\aux}$ provided in the appendix.
In addition, VIFO-mean and VIFO-mv perform better than other variants of VIFO. Thus, we only list these variants in our main paper and provide full results for other variants for VIFO and VI in the appendix.
For each method we run 5 independent runs and report means and standard deviations in results. 
Complete details for the setup and hyperparameters are given in Appendix \ref{sec:detail-exp}. Our code is available on \url{https://github.com/weiyadi/VIFO}.

\subsection{Run Time}
\label{sec:run-time}
\begin{table}
    \centering
    \caption{Running time (seconds) for training 1 epoch with batch size 512, AlexNet}
    \label{tab:time-AlexNet}
    \begin{tabular}{ccccc}
    \hline
        dataset & CIFAR10 & CIFAR100 & SVHN & STL10 \\
        \hline
        size & 50000 & 50000 & 73257 & 500 \\
        \hline
        VI & $8.51 \pm 0.41$ & $8.27 \pm 0.40$ & $11.56 \pm 0.39$ & $1.75 \pm 0.41$ \\
        VIFO & $2.18 \pm 0.39$ & $2.17 \pm 0.43$ & $2.72 \pm 0.38$ & $1.16 \pm 0.40$ \\
        base & $1.97 \pm 0.41$ & $1.99 \pm 0.43$ & $2.46 \pm 0.40$ & $1.12 \pm 0.38$\\
        \hline
    \end{tabular}
\end{table}
Ignoring the data preprocessing time, we compare the run time of training 1 epoch of VI, VIFO and the base model. 
In Table \ref{tab:time-AlexNet} we show the mean and standard deviation of 10 runs of these methods. 
Different regularizers do not affect run time, so we only show that of VI-naive for VI and VIFO-mean for VIFO. 
In addition, as shown in Table \ref{tab:time-AlexNet}, VIFO is much faster than VI and is slightly slower than the base model. As shown in \cref{fig:learning-curves}, VIFO converges faster than, or as fast as VI. Consequently, the training time until convergence for VIFO is shorter than for VI.

The differences in run time are dominated by sampling and forward passes in the network. Let $P$ denote the number of parameters in the \emph{base} model and thus each forward/backward pass takes $O(P)$ time. The time complexity for computing the loss for each output of the base model is $O(1)$.
The base model only needs 1 forward pass without sampling and thus the time complexity is $O(P)$. 
VIFO needs 1 forward pass and $M$ samples to compute the loss so the time complexity is $O(P+M)$. 
VI needs $M$ samples of the parameter space and $M$ forward passes, thus the time complexity is $O(PM + M)=O(PM)$. 
The same facts apply for predictions on test data, where the advantage can be important for real time applications.  

\subsection{Ensembles of VIFO}
\cref{thm:not-recover-vi} points out that the expressiveness of VIFO is limited. To overcome this, we use ensembles of VIFO, which independently train multiple VIFO models and average their predictions. \cref{sec:run-time} establishes fast training of VIFO, allowing us to train VIFO models simultaneously while still maintaining the running time advantage of VIFO. 
We investigate the impact of ensemble size on performance in \cref{sec:ensemble}. While increasing the ensemble size enhances performance, the improvement diminishes once the size exceeds 5. Therefore, we choose an ensemble size of 5.
\cref{tab:single-ens} shows that with ensembles, VIFO with auxiliary training achieves much better log loss than when using a single model. The same holds without auxiliary training. This indicates that ensembles of VIFO are much more expressive than a single VIFO. In the following experiments, we use ensembles of VIFO. \textbf{For a fair comparison in the remainder of the paper, we use ensembles for all methods} except for VI (which is time-consuming) and repulsive ensembles (which are themselves ensembles). 
 
\begin{table}
    \centering
    \caption{Test log loss ($\downarrow$) of single VIFO and ensembles of VIFO.}
    \label{tab:single-ens}
    \begin{tabular}{ccccc}
    \hline
         & \multicolumn{2}{c}{VIFO-mean} & \multicolumn{2}{c}{VIFO-mv} \\
        \hline
         & single & ensemble & single & ensemble \\
        \hline
        CIFAR10 & $0.527 \pm 0.015$ & $0.345 \pm 0.003$ & $0.626 \pm 0.010$ & $0.324 \pm 0.001$ \\
        CIFAR100 & $2.253 \pm 0.032$ & $1.688 \pm 0.006$ & $2.688 \pm 0.029$ & $1.725 \pm 0.003$ \\
        STL10 & $1.333 \pm 0.065$ & $1.055 \pm 0.008$ & $1.531 \pm 0.019$ & $1.123 \pm 0.008$\\
        SVHN & $0.509 \pm 0.029$ & $0.351 \pm 0.005$ & $0.520 \pm 0.027$ & $0.298 \pm 0.009$ \\
        \hline
    \end{tabular}
\end{table}

\subsection{In-distribution Performance}
In this section we use log loss and accuracy to measure the performance for in-distribution data. 

\cref{fig:log-preresnet} and \cref{fig:log-alexnet} in Appendix compare main methods in terms of log loss.
First, we observe that repulsive ensembles and the Dirichlet method have much worse log loss than all other methods and they tend to give underconfident predictions.
Second, we observe that using auxiliary training slightly increases the log loss but the increase is negligible. Later we can see that auxiliary training improves the uncertainty quantification for out-of-distribution data. 
We observe that VIFO is competitive with all methods in terms of log loss, with relatively small differences between the top group of methods in each case. \cref{fig:acc-preresnet} and \cref{fig:acc-alexnet} show accuracy on test data in the same experiments, revealing that in many cases VIFO outperforms VI and it is competitive with all methods.
Finally, there is no clear winner between VIFO-mean and VIFO-mv; 
VIFO-mv provides a small advantage overall but might be more sensitive as illustrated by the performance on CIFAR100 with PreResNet20.
\PutPreResNetInD

\subsection{Uncertainty Quantification}
In this section we examine whether VIFO can capture the uncertainty in predictions for shifted and OOD data. 
We measure performance using ECE, Entropy and AUC for detecting OOD data.
These represent a comprehensive set of measures from the literature. 
For datasets,
for uncertainty under data shift, STL10 and CIFAR10 can be treated as a shifted dataset for each other, as the figure size of STL10 is different from CIFAR10, and STL10 shares some classes with CIFAR10 so the labels are meaningful.
For uncertainty under OOD data, we choose the SVHN dataset as an OOD dataset for CIFAR10 and STL10, as SVHN contains images of digits and the labels of SVHN are not meaningful in the context of CIFAR10.

\begin{figure}[h!]
    \centering
    \begin{subfigure}[b]{0.40\textwidth}
         \centering
         \includegraphics[width=\textwidth]{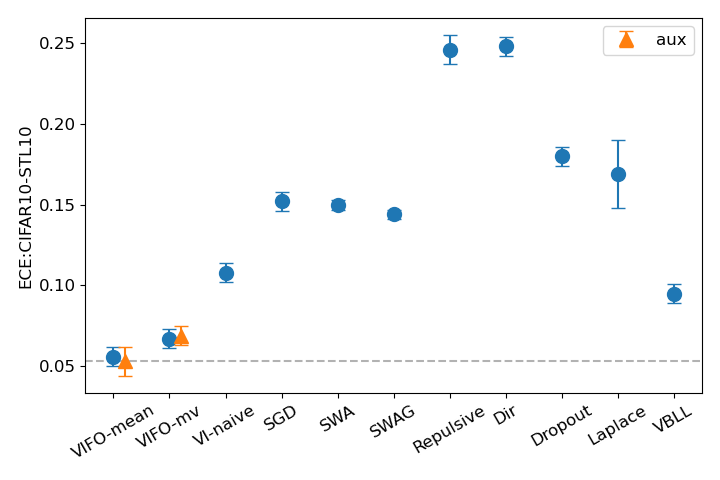}
         \caption{CIFAR10$\rightarrow$STL10, AlexNet}
    \end{subfigure}
    \begin{subfigure}[b]{0.40\textwidth}
         \centering
         \includegraphics[width=\textwidth]{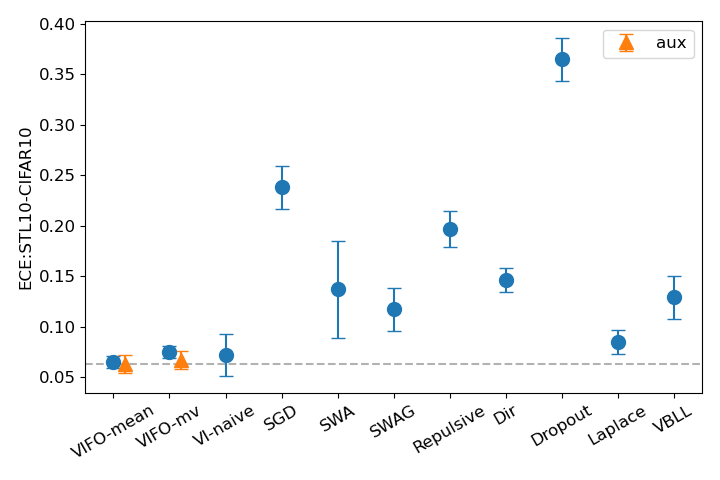}
         \caption{STL10$\rightarrow$CIFAR10, AlexNet}
    \end{subfigure}
    \begin{subfigure}[b]{0.40\textwidth}
         \centering
         \includegraphics[width=\textwidth]{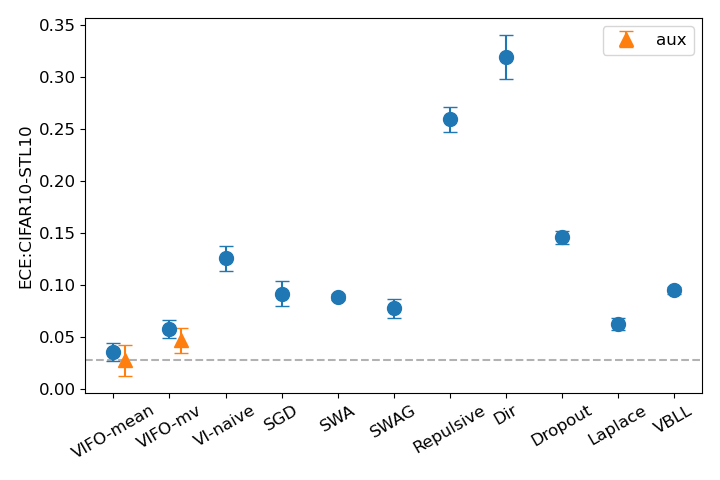}
         \caption{CIFAR10$\rightarrow$STL10, PreResNet20}
    \end{subfigure}
    \begin{subfigure}[b]{0.40\textwidth}
         \centering
         \includegraphics[width=\textwidth]{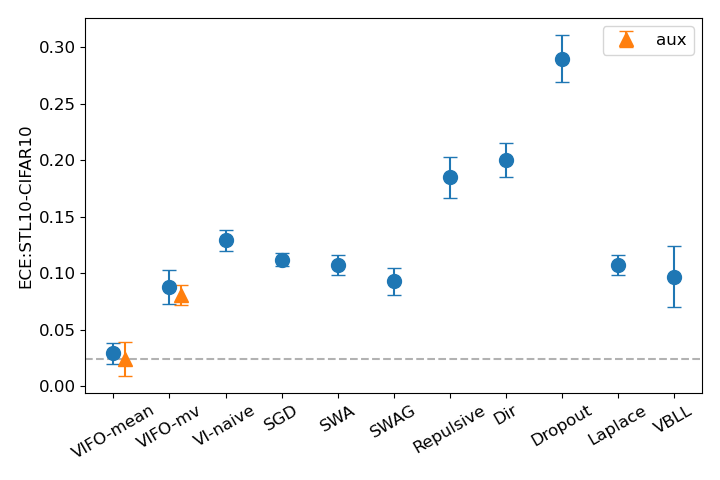}
         \caption{STL10$\rightarrow$CIFAR10, PreResNet20}
    \end{subfigure}
    \caption{ECE ($\downarrow$) on AlexNet and PreResNet20 under data shift. Dashed line indicates the best performance of VIFO. 
    Numerical results are listed in the Appendix.}
    \label{fig:ece}
\end{figure}
\paragraph{Expected Calibration Error (ECE)} ECE \citep{ece, uncertainty-benchmark} is often used to measure the uncertainty quantification under data shift. We separate data into bins of the same size according to the confidence level, calculate the difference between the accuracy and the averaged confidence in each bin and then average the absolute differences among all bins. 
Better calibrated models have lower ECE.
ECE has its faults (for example the trivial classifier has zero ECE) but it is nonetheless informative.
We selected the number of bins to be 20. 

\cref{fig:ece} shows the ECE of each method under data shift. As we can see, both VIFO-mean and VIFO-mv achieve the best performance compared to all other methods. 

\PutEntropyPreResNet
\paragraph{Entropy } Entropy \citep{uncertainty-benchmark} of the categorical predictive distribution is used to measure the uncertainty quantification for out-of-distribution (OOD) data as the labels for OOD data are meaningless. We want our model to be as uncertain as possible and this implies high entropy and low confidence (the maximum probability assigned to any class) in the predictive distribution. We summarize the averaged entropy for the entire dataset in \cref{fig:entropy-preresnet} and \cref{fig:entropy-alexnet}. 
We can see that both VIFO-mean and VIFO-mv are better than all other methods except repulsive ensembles and the Dirichlet method. 
However, as observed in \cref{fig:log-preresnet} and \cref{fig:log-alexnet}, repulsive ensembles and the Dirichlet method have poor performance in terms of log loss due to underconfident predictions. 
Hence they achieve high entropy by sacrificing in distribution performance whereas VIFO performs well.
Further, we observe from \cref{fig:entropy-preresnet} that auxiliary training greatly improve the performance of VIFO on PreResNet20.
Auxiliary training only has a small impact on VIFO with AlexNet (see \cref{fig:entropy-alexnet})
but VIFO already performs well without auxiliary training in this case.


\begin{figure}[h!]
    \centering
    \includegraphics[width=0.55\textwidth]{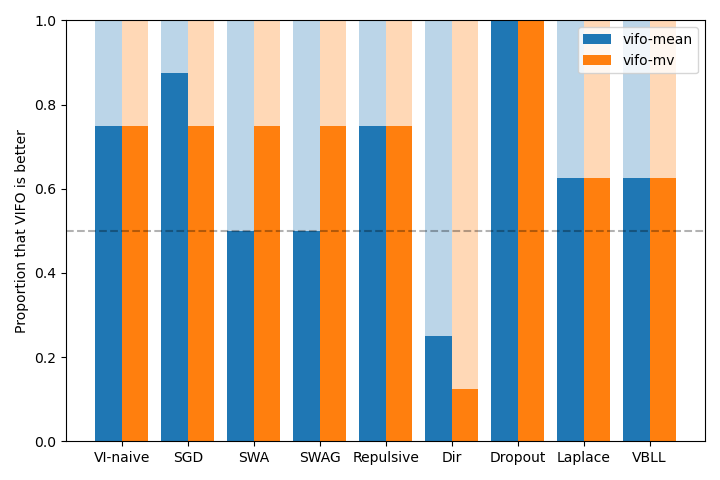}
    \caption{Comparison of VIFO with all other methods in terms of AUROC on OOD data. Y-axis is the proportion of experiments that VIFO is better than other methods. Exact AUROC values are provided in the appendix.}
    \label{fig:auroc}
\end{figure}
\paragraph{AUROC} We use maximum probability of the categorical predictive distribution as the criterion to separate in-distribution and OOD data and compute the area under the ROC curve \citep{dir-prior}. AUROC overcomes the drawbacks of ECE and entropy because a trivial model cannot yield the best performance. 
Detailed comparison plots are in given in \cref{fig:auroc-alexnet} and \cref{fig:auroc-preresnet} in the Appendix. 
We first note that, as above,
auxiliary training improves the performance on PreResNet20 but not significantly on AlexNet.
We found that there is no single method that consistently outperforms all other methods. 
Instead, for better visualization, we show the comparison of VIFO-mean and VIFO-mv with other methods in \cref{fig:auroc}. For each baseline, we count the number of experiments that VIFO performs better and get the corresponding proportion. 
We observe that overall, VIFO-mv is better than all other methods except the Dirichlet method and that it ranks better than VIFO-mean.
Though the Dirichlet method performs better than VIFO on OOD data, its poor in-distribution performance makes it less desirable.
On the other hand, VIFO outperforms all other baselines for OOD and has strong in distribution performance and hence give better overall predictions.

\section{Conclusion}
In Bayesian neural networks, the distribution of the last layer directly affects the predictive distribution. Motivated by this fact, we proposed variational inference on the final-layer output, VIFO, that uses a neural network to directly learn the mean and variance of the last layer. 
We showed that VIFO can match the expressive power of VI in linear cases with a strong prior but that in general it provides a less expressive model.
On the other hand the simplicity of the model enables fast training 
of ensembles of VIFO
and facilitates convergence analysis through Rademacher bounds. 
In addition, VIFO can be derived as a non-standard variational lower bound, which provides an approximation for the last layer. 
This connection allowed us to derive better regularizations for VIFO by using collapsed variational inference over a hierarchical prior. 
Since VIFO treats each input separately, we can incorporate auxiliary inputs to help the model distinguish in-distribution and out-of-distribution data.
Empirical evaluation highlighted that ensembles of VIFO are competitive with or outperform other methods in terms of in-distribution loss and out-of-distribution data detection.  
Hence VIFO gives a new attractive approach for approximate inference in Bayesian models. 
The efficiency of VIFO also means faster test time predictions which can be important when deploying Bayesian models for real-time applications.
Future work could explore more informative auxiliary input to improve the performance of VIFO,
and investigate the connections to variational inference in functional space that induces more complex priors. 

\section*{Acknowledgments}
This work was partly supported by NSF under grants 1906694 and 2246261. Some of the experiments in this paper were run on the Big Red computing system at Indiana University, supported in part by Lilly Endowment, Inc., through its support for the Indiana University Pervasive Technology Institute.

\newpage
\bibliography{main}

\begin{thebibliography}{49}
\providecommand{\natexlab}[1]{#1}
\providecommand{\url}[1]{\texttt{#1}}
\expandafter\ifx\csname urlstyle\endcsname\relax
  \providecommand{\doi}[1]{doi: #1}\else
  \providecommand{\doi}{doi: \begingroup \urlstyle{rm}\Url}\fi

\bibitem[Alemi et~al.(2017)Alemi, Fischer, Dillon, and Murphy]{AlemiFD017}
Alexander~A. Alemi, Ian Fischer, Joshua~V. Dillon, and Kevin Murphy.
\newblock Deep variational information bottleneck.
\newblock In \emph{5th International Conference on Learning Representations,
  {ICLR} 2017}, 2017.

\bibitem[Bengs et~al.(2022)Bengs, H{\"u}llermeier, and Waegeman]{pitfalls}
Viktor Bengs, Eyke H{\"u}llermeier, and Willem Waegeman.
\newblock Pitfalls of epistemic uncertainty quantification through loss
  minimisation.
\newblock In Alice~H. Oh, Alekh Agarwal, Danielle Belgrave, and Kyunghyun Cho
  (eds.), \emph{Advances in Neural Information Processing Systems}, 2022.
\newblock URL \url{https://openreview.net/forum?id=epjxT_ARZW5}.

\bibitem[Blundell et~al.(2015)Blundell, Cornebise, Kavukcuoglu, and
  Wierstra]{BNN}
Charles Blundell, Julien Cornebise, Koray Kavukcuoglu, and Daan Wierstra.
\newblock Weight uncertainty in neural network.
\newblock In Francis Bach and David Blei (eds.), \emph{Proceedings of the 32nd
  International Conference on Machine Learning}, volume~37 of \emph{Proceedings
  of Machine Learning Research}, pp.\  1613--1622, Lille, France, 07--09 Jul
  2015. PMLR.
\newblock URL \url{https://proceedings.mlr.press/v37/blundell15.html}.

\bibitem[Brosse et~al.(2020)Brosse, Riquelme, Martin, Gelly, and
  Moulines]{lastLayer}
Nicolas Brosse, Carlos Riquelme, Alice Martin, Sylvain Gelly, and Éric
  Moulines.
\newblock On last-layer algorithms for classification: Decoupling
  representation from uncertainty estimation, 2020.
\newblock URL \url{https://arxiv.org/abs/2001.08049}.

\bibitem[Charpentier et~al.(2020)Charpentier, Z\"{u}gner, and
  G\"{u}nnemann]{dir-flow}
Bertrand Charpentier, Daniel Z\"{u}gner, and Stephan G\"{u}nnemann.
\newblock Posterior network: Uncertainty estimation without ood samples via
  density-based pseudo-counts.
\newblock In H.~Larochelle, M.~Ranzato, R.~Hadsell, M.F. Balcan, and H.~Lin
  (eds.), \emph{Advances in Neural Information Processing Systems}, volume~33,
  pp.\  1356--1367. Curran Associates, Inc., 2020.
\newblock URL
  \url{https://proceedings.neurips.cc/paper_files/paper/2020/file/0eac690d7059a8de4b48e90f14510391-Paper.pdf}.

\bibitem[D'Angelo \& Fortuin(2021)D'Angelo and Fortuin]{re}
Francesco D'Angelo and Vincent Fortuin.
\newblock Repulsive deep ensembles are bayesian.
\newblock In A.~Beygelzimer, Y.~Dauphin, P.~Liang, and J.~Wortman Vaughan
  (eds.), \emph{Advances in Neural Information Processing Systems}, 2021.
\newblock URL \url{https://openreview.net/forum?id=LAKplpLMbP8}.

\bibitem[Daxberger et~al.(2021)Daxberger, Kristiadi, Immer, Eschenhagen, Bauer,
  and Hennig]{DaxbergerKIEBH21}
Erik Daxberger, Agustinus Kristiadi, Alexander Immer, Runa Eschenhagen,
  Matthias Bauer, and Philipp Hennig.
\newblock Laplace redux - effortless bayesian deep learning.
\newblock In \emph{Advances in Neural Information Processing Systems 34:
  {NeurIPS}}, 2021.

\bibitem[Gal \& Ghahramani(2016)Gal and Ghahramani]{Dropout}
Yarin Gal and Zoubin Ghahramani.
\newblock Dropout as a bayesian approximation: Representing model uncertainty
  in deep learning.
\newblock In \emph{Proceedings of the 33rd International Conference on
  International Conference on Machine Learning - Volume 48}, ICML'16, pp.\
  1050–1059. JMLR.org, 2016.

\bibitem[Germain et~al.(2016)Germain, Bach, Lacoste, and
  Lacoste-Julien]{German2016}
Pascal Germain, Francis Bach, Alexandre Lacoste, and Simon Lacoste-Julien.
\newblock Pac-bayesian theory meets bayesian inference.
\newblock In D.~Lee, M.~Sugiyama, U.~Luxburg, I.~Guyon, and R.~Garnett (eds.),
  \emph{Advances in Neural Information Processing Systems}, volume~29. Curran
  Associates, Inc., 2016.
\newblock URL
  \url{https://proceedings.neurips.cc/paper_files/paper/2016/file/84d2004bf28a2095230e8e14993d398d-Paper.pdf}.

\bibitem[Golowich et~al.(2018)Golowich, Rakhlin, and Shamir]{nn-rademacher}
Noah Golowich, Alexander Rakhlin, and Ohad Shamir.
\newblock Size-independent sample complexity of neural networks.
\newblock In Sébastien Bubeck, Vianney Perchet, and Philippe Rigollet (eds.),
  \emph{Proceedings of the 31st Conference On Learning Theory}, volume~75 of
  \emph{Proceedings of Machine Learning Research}, pp.\  297--299. PMLR, 06--09
  Jul 2018.
\newblock URL \url{https://proceedings.mlr.press/v75/golowich18a.html}.

\bibitem[Graves(2011)]{Practical-VI}
Alex Graves.
\newblock Practical variational inference for neural networks.
\newblock In J.~Shawe-Taylor, R.~Zemel, P.~Bartlett, F.~Pereira, and K.Q.
  Weinberger (eds.), \emph{Advances in Neural Information Processing Systems},
  volume~24. Curran Associates, Inc., 2011.
\newblock URL
  \url{https://proceedings.neurips.cc/paper/2011/file/7eb3c8be3d411e8ebfab08eba5f49632-Paper.pdf}.

\bibitem[Guo et~al.(2017)Guo, Pleiss, Sun, and Weinberger]{temp-scaling}
Chuan Guo, Geoff Pleiss, Yu~Sun, and Kilian~Q. Weinberger.
\newblock On calibration of modern neural networks.
\newblock In \emph{Proceedings of the 34th International Conference on Machine
  Learning - Volume 70}, ICML'17, pp.\  1321–1330. JMLR.org, 2017.

\bibitem[Harrison et~al.(2024)Harrison, Willes, and Snoek]{HarrisonWS24}
James Harrison, John Willes, and Jasper Snoek.
\newblock Variational bayesian last layers.
\newblock In \emph{5th International Conference on Learning Representations,
  {ICLR} 2017}, 2024.

\bibitem[He et~al.(2016)He, Zhang, Ren, and Sun]{preresnet}
Kaiming He, Xiangyu Zhang, Shaoqing Ren, and Jian Sun.
\newblock Identity mappings in deep residual networks.
\newblock In Bastian Leibe, Jiri Matas, Nicu Sebe, and Max Welling (eds.),
  \emph{Computer Vision -- ECCV 2016}, pp.\  630--645, Cham, 2016. Springer
  International Publishing.
\newblock ISBN 978-3-319-46493-0.

\bibitem[Higgins et~al.(2017)Higgins, Matthey, Pal, Burgess, Glorot, Botvinick,
  Mohamed, and Lerchner]{betavae}
Irina Higgins, Loic Matthey, Arka Pal, Christopher Burgess, Xavier Glorot,
  Matthew Botvinick, Shakir Mohamed, and Alexander Lerchner.
\newblock beta-{VAE}: Learning basic visual concepts with a constrained
  variational framework.
\newblock In \emph{International Conference on Learning Representations}, 2017.
\newblock URL \url{https://openreview.net/forum?id=Sy2fzU9gl}.

\bibitem[Izmailov et~al.(2018)Izmailov, Podoprikhin, Garipov, Vetrov, and
  Wilson]{swa}
Pavel Izmailov, Dmitrii Podoprikhin, Timur Garipov, Dmitry~P. Vetrov, and
  Andrew~Gordon Wilson.
\newblock Averaging weights leads to wider optima and better generalization.
\newblock In Amir Globerson and Ricardo Silva (eds.), \emph{Proceedings of the
  Thirty-Fourth Conference on Uncertainty in Artificial Intelligence, {UAI}
  2018, Monterey, California, USA, August 6-10, 2018}, pp.\  876--885. {AUAI}
  Press, 2018.
\newblock URL \url{http://auai.org/uai2018/proceedings/papers/313.pdf}.

\bibitem[Izmailov et~al.(2021)Izmailov, Vikram, Hoffman, and
  Wilson]{hmc_bayesian}
Pavel Izmailov, Sharad Vikram, Matthew~D Hoffman, and Andrew Gordon~Gordon
  Wilson.
\newblock What are bayesian neural network posteriors really like?
\newblock In Marina Meila and Tong Zhang (eds.), \emph{Proceedings of the 38th
  International Conference on Machine Learning}, volume 139 of
  \emph{Proceedings of Machine Learning Research}, pp.\  4629--4640. PMLR,
  18--24 Jul 2021.
\newblock URL \url{https://proceedings.mlr.press/v139/izmailov21a.html}.

\bibitem[Jankowiak et~al.(2020)Jankowiak, Pleiss, and
  Gardner]{jankowiak2019sparse}
Martin Jankowiak, Geoff Pleiss, and Jacob Gardner.
\newblock Parametric {G}aussian process regressors.
\newblock In Hal~Daumé III and Aarti Singh (eds.), \emph{Proceedings of the
  37th International Conference on Machine Learning}, volume 119 of
  \emph{Proceedings of Machine Learning Research}, pp.\  4702--4712. PMLR,
  13--18 Jul 2020.
\newblock URL \url{https://proceedings.mlr.press/v119/jankowiak20a.html}.

\bibitem[Kabir et~al.(2018)Kabir, Khosravi, Hosen, and Nahavandi]{mve}
H.~M.~Dipu Kabir, Abbas Khosravi, Mohammad~Anwar Hosen, and Saeid Nahavandi.
\newblock Neural network-based uncertainty quantification: A survey of
  methodologies and applications.
\newblock \emph{IEEE Access}, 6:\penalty0 36218--36234, 2018.
\newblock \doi{10.1109/ACCESS.2018.2836917}.

\bibitem[Kendall \& Gal(2017)Kendall and Gal]{uq_cv}
Alex Kendall and Yarin Gal.
\newblock What uncertainties do we need in bayesian deep learning for computer
  vision?
\newblock In I.~Guyon, U.~Von Luxburg, S.~Bengio, H.~Wallach, R.~Fergus,
  S.~Vishwanathan, and R.~Garnett (eds.), \emph{Advances in Neural Information
  Processing Systems}, volume~30. Curran Associates, Inc., 2017.
\newblock URL
  \url{https://proceedings.neurips.cc/paper/2017/file/2650d6089a6d640c5e85b2b88265dc2b-Paper.pdf}.

\bibitem[Khosravi et~al.(2011)Khosravi, Nahavandi, Creighton, and Atiya]{mve2}
Abbas Khosravi, Saeid Nahavandi, Doug Creighton, and Amir~F. Atiya.
\newblock Comprehensive review of neural network-based prediction intervals and
  new advances.
\newblock \emph{IEEE Transactions on Neural Networks}, 22\penalty0
  (9):\penalty0 1341--1356, 2011.
\newblock \doi{10.1109/TNN.2011.2162110}.

\bibitem[Kristiadi et~al.(2020)Kristiadi, Hein, and Hennig]{Kristiadi0H20}
Agustinus Kristiadi, Matthias Hein, and Philipp Hennig.
\newblock Being bayesian, even just a bit, fixes overconfidence in relu
  networks.
\newblock In \emph{Proceedings of the 37th International Conference on Machine
  Learning, {ICML}}, volume 119 of \emph{Proceedings of Machine Learning
  Research}, pp.\  5436--5446, 2020.

\bibitem[Krizhevsky et~al.(2012)Krizhevsky, Sutskever, and Hinton]{alexnet}
Alex Krizhevsky, Ilya Sutskever, and Geoffrey~E Hinton.
\newblock Imagenet classification with deep convolutional neural networks.
\newblock In F.~Pereira, C.J. Burges, L.~Bottou, and K.Q. Weinberger (eds.),
  \emph{Advances in Neural Information Processing Systems}, volume~25. Curran
  Associates, Inc., 2012.
\newblock URL
  \url{https://proceedings.neurips.cc/paper/2012/file/c399862d3b9d6b76c8436e924a68c45b-Paper.pdf}.

\bibitem[Lakshminarayanan et~al.(2017)Lakshminarayanan, Pritzel, and
  Blundell]{Deep-ensembles}
Balaji Lakshminarayanan, Alexander Pritzel, and Charles Blundell.
\newblock Simple and scalable predictive uncertainty estimation using deep
  ensembles.
\newblock In I.~Guyon, U.~Von Luxburg, S.~Bengio, H.~Wallach, R.~Fergus,
  S.~Vishwanathan, and R.~Garnett (eds.), \emph{Advances in Neural Information
  Processing Systems}, volume~30. Curran Associates, Inc., 2017.
\newblock URL
  \url{https://proceedings.neurips.cc/paper/2017/file/9ef2ed4b7fd2c810847ffa5fa85bce38-Paper.pdf}.

\bibitem[Li et~al.(2015)Li, Hern\'{a}ndez-Lobato, and Turner]{ep2}
Yingzhen Li, Jose~Miguel Hern\'{a}ndez-Lobato, and Richard~E. Turner.
\newblock Stochastic expectation propagation.
\newblock In \emph{Proceedings of the 28th International Conference on Neural
  Information Processing Systems - Volume 2}, NIPS'15, pp.\  2323–2331,
  Cambridge, MA, USA, 2015. MIT Press.

\bibitem[Liu et~al.(2020)Liu, Lin, Padhy, Tran, Bedrax{-}Weiss, and
  Lakshminarayanan]{LiuLPTBL20}
Jeremiah~Z. Liu, Zi~Lin, Shreyas Padhy, Dustin Tran, Tania Bedrax{-}Weiss, and
  Balaji Lakshminarayanan.
\newblock Simple and principled uncertainty estimation with deterministic deep
  learning via distance awareness.
\newblock In \emph{Advances in Neural Information Processing Systems 33
  {NeurIPS}}, 2020.

\bibitem[Maddox et~al.(2019)Maddox, Garipov, Izmailov, Vetrov, and
  Wilson]{SWAG}
Wesley~J. Maddox, Timur Garipov, Pavel Izmailov, Dmitry Vetrov, and
  Andrew~Gordon Wilson.
\newblock \emph{A Simple Baseline for Bayesian Uncertainty in Deep Learning}.
\newblock Curran Associates Inc., Red Hook, NY, USA, 2019.

\bibitem[Malinin \& Gales(2018)Malinin and Gales]{dir-prior}
Andrey Malinin and Mark Gales.
\newblock Predictive uncertainty estimation via prior networks.
\newblock In S.~Bengio, H.~Wallach, H.~Larochelle, K.~Grauman, N.~Cesa-Bianchi,
  and R.~Garnett (eds.), \emph{Advances in Neural Information Processing
  Systems}, volume~31. Curran Associates, Inc., 2018.
\newblock URL
  \url{https://proceedings.neurips.cc/paper_files/paper/2018/file/3ea2db50e62ceefceaf70a9d9a56a6f4-Paper.pdf}.

\bibitem[Minka(2001)]{pseudo-det}
Tom Minka.
\newblock Inferring a {G}aussian distribution, 2001.
\newblock http://www.stat.cmu.edu/$\sim$minka/papers/gaussian.html.

\bibitem[Naeini et~al.(2015)Naeini, Cooper, and Hauskrecht]{ece}
Mahdi~Pakdaman Naeini, Gregory~F. Cooper, and Milos Hauskrecht.
\newblock Obtaining well calibrated probabilities using bayesian binning.
\newblock In \emph{Proceedings of the Twenty-Ninth AAAI Conference on
  Artificial Intelligence}, AAAI'15, pp.\  2901–2907. AAAI Press, 2015.
\newblock ISBN 0262511290.

\bibitem[Oleksiienko et~al.(2022)Oleksiienko, Tran, and
  Iosifidis]{variational_nn}
Illia Oleksiienko, Dat~Thanh Tran, and Alexandros Iosifidis.
\newblock Variational neural networks, 2022.
\newblock URL \url{https://arxiv.org/abs/2207.01524}.

\bibitem[Ovadia et~al.(2019)Ovadia, Fertig, Ren, Nado, Sculley, Nowozin,
  Dillon, Lakshminarayanan, and Snoek]{uncertainty-benchmark}
Yaniv Ovadia, Emily Fertig, Jie Ren, Zachary Nado, D.~Sculley, Sebastian
  Nowozin, Joshua~V. Dillon, Balaji Lakshminarayanan, and Jasper Snoek.
\newblock \emph{Can You Trust Your Model's Uncertainty? Evaluating Predictive
  Uncertainty under Dataset Shift}.
\newblock Curran Associates Inc., Red Hook, NY, USA, 2019.

\bibitem[Petersen \& Pedersen(2012)Petersen and Pedersen]{matrixcookbook}
Kaare~Brandt Petersen and Michael~Syskind Pedersen.
\newblock The matrix cookbook.
\newblock 2012.
\newblock URL \url{http://www.math.uwaterloo.ca/~hwolkowi//matrixcookbook.pdf}.

\bibitem[Sensoy et~al.(2018)Sensoy, Kaplan, and Kandemir]{dirichlet}
Murat Sensoy, Lance Kaplan, and Melih Kandemir.
\newblock Evidential deep learning to quantify classification uncertainty.
\newblock In S.~Bengio, H.~Wallach, H.~Larochelle, K.~Grauman, N.~Cesa-Bianchi,
  and R.~Garnett (eds.), \emph{Advances in Neural Information Processing
  Systems}, volume~31. Curran Associates, Inc., 2018.
\newblock URL
  \url{https://proceedings.neurips.cc/paper_files/paper/2018/file/a981f2b708044d6fb4a71a1463242520-Paper.pdf}.

\bibitem[Shalev{-}Shwartz \& Ben{-}David(2014)Shalev{-}Shwartz and
  Ben{-}David]{SSBD2014}
Shai Shalev{-}Shwartz and Shai Ben{-}David.
\newblock \emph{Understanding Machine Learning - From Theory to Algorithms}.
\newblock Cambridge University Press, 2014.

\bibitem[Sharma et~al.(2023)Sharma, Farquhar, Nalisnick, and
  Rainforth]{no-fully-stochastic}
Mrinank Sharma, Sebastian Farquhar, Eric Nalisnick, and Tom Rainforth.
\newblock Do bayesian neural networks need to be fully stochastic?
\newblock In Francisco Ruiz, Jennifer Dy, and Jan-Willem van~de Meent (eds.),
  \emph{Proceedings of The 26th International Conference on Artificial
  Intelligence and Statistics}, volume 206 of \emph{Proceedings of Machine
  Learning Research}, pp.\  7694--7722. PMLR, 25--27 Apr 2023.
\newblock URL \url{https://proceedings.mlr.press/v206/sharma23a.html}.

\bibitem[Sheth \& Khardon(2017)Sheth and Khardon]{Sheth2017}
Rishit Sheth and Roni Khardon.
\newblock Excess risk bounds for the bayes risk using variational inference in
  latent gaussian models.
\newblock In I.~Guyon, U.~Von Luxburg, S.~Bengio, H.~Wallach, R.~Fergus,
  S.~Vishwanathan, and R.~Garnett (eds.), \emph{Advances in Neural Information
  Processing Systems}, volume~30. Curran Associates, Inc., 2017.
\newblock URL
  \url{https://proceedings.neurips.cc/paper_files/paper/2017/file/7edccc661418aeb5761dbcdc06ad490c-Paper.pdf}.

\bibitem[Sun et~al.(2019)Sun, Zhang, Shi, and Grosse]{functionalVI}
Shengyang Sun, Guodong Zhang, Jiaxin Shi, and Roger Grosse.
\newblock {FUNCTIONAL} {VARIATIONAL} {BAYESIAN} {NEURAL} {NETWORKS}.
\newblock In \emph{International Conference on Learning Representations}, 2019.
\newblock URL \url{https://openreview.net/forum?id=rkxacs0qY7}.

\bibitem[Teh et~al.(2015)Teh, Hasenclever, Lienart, Vollmer, Webb,
  Lakshminarayanan, and Blundell]{ep}
Yee Teh, Leonard Hasenclever, Thibaut Lienart, Sebastian Vollmer, Stefan Webb,
  Balaji Lakshminarayanan, and Charles Blundell.
\newblock Distributed bayesian learning with stochastic natural gradient
  expectation propagation and the posterior server.
\newblock \emph{Journal of Machine Learning Research}, 18, 12 2015.

\bibitem[Tomczak et~al.(2020)Tomczak, Swaroop, and Turner]{lrt}
Marcin Tomczak, Siddharth Swaroop, and Richard Turner.
\newblock Efficient low rank gaussian variational inference for neural
  networks.
\newblock In H.~Larochelle, M.~Ranzato, R.~Hadsell, M.F. Balcan, and H.~Lin
  (eds.), \emph{Advances in Neural Information Processing Systems}, volume~33,
  pp.\  4610--4622. Curran Associates, Inc., 2020.
\newblock URL
  \url{https://proceedings.neurips.cc/paper/2020/file/310cc7ca5a76a446f85c1a0d641ba96d-Paper.pdf}.

\bibitem[Tomczak et~al.(2021)Tomczak, Swaroop, Foong, and
  Turner]{collapsed-elbo}
Marcin~B. Tomczak, Siddharth Swaroop, Andrew Y.~K. Foong, and Richard~E Turner.
\newblock Collapsed variational bounds for bayesian neural networks.
\newblock In A.~Beygelzimer, Y.~Dauphin, P.~Liang, and J.~Wortman Vaughan
  (eds.), \emph{Advances in Neural Information Processing Systems}, 2021.
\newblock URL \url{https://openreview.net/forum?id=ykN3tbJ0qmX}.

\bibitem[Tran et~al.(2022)Tran, Rossi, Milios, and Filippone]{functionalPrior}
Ba-Hien Tran, Simone Rossi, Dimitrios Milios, and Maurizio Filippone.
\newblock {All You Need is a Good Functional Prior for Bayesian Deep Learning}.
\newblock \emph{Journal of Machine Learning Research}, 23:\penalty0 1--56,
  2022.

\bibitem[van Amersfoort et~al.(2020)van Amersfoort, Smith, Teh, and
  Gal]{AmersfoortSTG20}
Joost van Amersfoort, Lewis Smith, Yee~Whye Teh, and Yarin Gal.
\newblock Uncertainty estimation using a single deep deterministic neural
  network.
\newblock In \emph{Proceedings of the 37th International Conference on Machine
  Learning, {ICML} 2020}, volume 119 of \emph{Proceedings of Machine Learning
  Research}, pp.\  9690--9700, 2020.

\bibitem[Wei \& Khardon(2022)Wei and Khardon]{dlm-bnn}
Yadi Wei and Roni Khardon.
\newblock On the performance of direct loss minimization for bayesian neural
  networks.
\newblock In \emph{I Can't Believe It's Not Better Workshop: Understanding Deep
  Learning Through Empirical Falsification}, 2022.
\newblock URL \url{https://openreview.net/forum?id=_lzq1tghrG}.

\bibitem[Wei et~al.(2021)Wei, Sheth, and Khardon]{dlm-sgp}
Yadi Wei, Rishit Sheth, and Roni Khardon.
\newblock Direct loss minimization for sparse gaussian processes.
\newblock In Arindam Banerjee and Kenji Fukumizu (eds.), \emph{Proceedings of
  The 24th International Conference on Artificial Intelligence and Statistics},
  volume 130 of \emph{Proceedings of Machine Learning Research}, pp.\
  2566--2574. PMLR, 13--15 Apr 2021.
\newblock URL \url{https://proceedings.mlr.press/v130/wei21b.html}.

\bibitem[Wenzel et~al.(2020)Wenzel, Roth, Veeling, Swi\k{a}tkowski, Tran,
  Mandt, Snoek, Salimans, Jenatton, and Nowozin]{HMC}
Florian Wenzel, Kevin Roth, Bastiaan~S. Veeling, Jakub Swi\k{a}tkowski, Linh
  Tran, Stephan Mandt, Jasper Snoek, Tim Salimans, Rodolphe Jenatton, and
  Sebastian Nowozin.
\newblock How good is the bayes posterior in deep neural networks really?
\newblock In \emph{Proceedings of the 37th International Conference on Machine
  Learning}, ICML'20. JMLR.org, 2020.

\bibitem[Wilson \& Izmailov(2020)Wilson and Izmailov]{multiswag2020}
Andrew~G Wilson and Pavel Izmailov.
\newblock Bayesian deep learning and a probabilistic perspective of
  generalization.
\newblock In H.~Larochelle, M.~Ranzato, R.~Hadsell, M.F. Balcan, and H.~Lin
  (eds.), \emph{Advances in Neural Information Processing Systems}, volume~33,
  pp.\  4697--4708. Curran Associates, Inc., 2020.
\newblock URL
  \url{https://proceedings.neurips.cc/paper_files/paper/2020/file/322f62469c5e3c7dc3e58f5a4d1ea399-Paper.pdf}.

\bibitem[Wilson et~al.(2022)Wilson, Izmailov, Hoffman, Gal, Li, Pradier,
  Vikram, Foong, Lotfi, and Farquhar]{competition-approximate-inference}
Andrew~Gordon Wilson, Pavel Izmailov, Matthew~D Hoffman, Yarin Gal, Yingzhen
  Li, Melanie~F Pradier, Sharad Vikram, Andrew Foong, Sanae Lotfi, and
  Sebastian Farquhar.
\newblock Evaluating approximate inference in bayesian deep learning.
\newblock In Douwe Kiela, Marco Ciccone, and Barbara Caputo (eds.),
  \emph{Proceedings of the NeurIPS 2021 Competitions and Demonstrations Track},
  volume 176 of \emph{Proceedings of Machine Learning Research}, pp.\
  113--124. PMLR, 06--14 Dec 2022.
\newblock URL \url{https://proceedings.mlr.press/v176/wilson22a.html}.

\bibitem[Wu et~al.(2019)Wu, Nowozin, Meeds, Turner, Hernandez-Lobato, and
  Gaunt]{DVI}
Anqi Wu, Sebastian Nowozin, Edward Meeds, Richard~E. Turner, Jose~Miguel
  Hernandez-Lobato, and Alexander~L. Gaunt.
\newblock Deterministic variational inference for robust bayesian neural
  networks.
\newblock In \emph{International Conference on Learning Representations}, 2019.
\newblock URL \url{https://openreview.net/forum?id=B1l08oAct7}.

\end{thebibliography}
\bibliographystyle{tmlr}

\appendix

\counterwithin{figure}{section}
\counterwithin{table}{section}
\section{Proofs}
\subsection{Proofs in Section \ref{sec:VI-VIFO}}
\label{sec:proof-VI-VIFO}
\begin{proof}[Proof of Theorem \ref{thm:linear}]
    %
    Assume $N>d$. 
    Note that with the the correlated prior and posterior the covariance function is rank deficient so we have to interpret inverses and determinants appropriately. Here we use pseudo inverse and pseudo determinant.
    The VIFO objective is:
    {
    \begin{align}
        &\sum_{i=1}^N \Big\{\E_{q(z|x_i)} [\log p(y_i|z)] \Big\} - \kl(q(z|X_N) || p(z|X_N)) \\
        =&\sum_{i=1}^N \Big\{\E_{q(z|x_i)} [\log p(y_i|z)] \Big\} 
        - \frac{1}{2} \tr((X_N^\top S_0 X_N)^{-1} (X_N^\top V X_N)) + \frac{N}{2} \nonumber \\
        +& \frac{1}{2} \log|(X_N^\top S_0 X_N)^{-1} (X_N^\top V X_N)| 
        - \frac{1}{2} (w^\top X_N - m_0^\top X_N) (X_N^\top S_0 X_N)^{-1} (w^\top X_N - m_0^\top X_N)^\top.
        \label{eq:uq-complicatedm}
    \end{align}
    }
     
    First consider the loss term. Let $L$ be the Cholesky decomposition of $V$, i.e. $V=L L^\top$. By reparametrization, for $\epsilon \sim \dN(0, I_d)$, $w^\top x_i + x_i^\top L \epsilon \sim \dN(w^\top x_i, x_i^\top L L^\top x_i)$ and thus
    \begin{align}
        \E_{q(z|x_i)} [\log p(y_i|z)] &= \E_{\epsilon \sim \dN(0, I_d)} [\log p(y_i|w^\top x_i + x_i^\top L \epsilon)] \nonumber \\
        &= \E_{\epsilon \sim \dN(0, I_d)} [\log p(y_i | (w + L \epsilon)^\top x_i) ] \nonumber \\
        &= \E_{\theta \sim \dN(w, L L^\top)}[\log p(y_i | \theta^\top x_i)],
        \label{eq:loss-VIFOm}
    \end{align}
    where the last equality uses reparametrization in a reverse order. By aligning $w = m$ and $V = L L^\top = S$, we recognize that Eq~\eqref{eq:loss-VIFOm} is exactly the loss term in Eq~\eqref{eq:VI-blr}. Thus the low-dimensional posterior on $z$ yields the same loss term as the high-dimensional posterior over $W$.

For the regularization, we use the pseudo inverse derivation from Eq~(224) of \citet{matrixcookbook}, where for
$A=CD$ we have $A^{+}=D^\top (D D^\top)^{-1}(C^\top C)^{-1}C^\top$ to get
    \begin{align*}
        (X_N^\top S_0 X_N)^{-1}=X_N^\top (X_N X_N^\top)^{-1} S_0^{-1} (X_N X_N^\top)^{-1} X_N
    \end{align*}
    and the same for $V$. Thus,
    \begin{align*}
        (X_N^\top S_0 X_N)^{-1} (X_N^\top V X_N) &= X_N^\top (X_N X_N^\top)^{-1} S_0^{-1}(X_N X_N^\top)^{-1} X_N X_N^\top V X_N \\
        &= X_N^\top (X_N X_N^\top)^{-1} S_0^{-1} VX_N, \\
        \tr |X_N^\top (X_N X_N^\top)^{-1} S_0^{-1} V X_N| &= \tr |X_N X_N^\top (X_N X_N^\top)^{-1} S_0^{-1} V| \\
        &= \tr (S_0^{-1} V),
    \end{align*}
    and
    \begin{align*}
        &(w^\top X_N - m_0^\top X_N) (X_N^\top S_0 X_N)^{-1} (w^\top X_N - m_0^\top X_N)^\top \\
        =& (w - m_0)^\top X_N (X_N^\top (X_N X_N^\top)^{-1} S_0^{-1} (X_N X_N^\top)^{-1} X_N) X_N^\top (w - m_0) \\
        =& (w - m_0)^\top (X_N X_N^\top) (X_N X_N^\top)^{-1} S_0^{-1} (X_N X_N^\top)^{-1} (X_N X_N^\top) (w - m_0) \\
        =&(w-m_0)^\top S_0^{-1} (w-m_0).
    \end{align*}
    For the The log-determinant term we use the pseudo-determinant \citep{pseudo-det}, which is the product of non-zero eigenvalues. Let $(\lambda_i, u_i)_{i=1}^d$ be the set of eigenvalues and eigenvectors of $S_0^{-1} V$, i.e., $S_0^{-1} V u_i = \lambda u_i$, and let $X_N^{\ddagger}=X_N^\top (X_N X_N^\top)^{-1}$ denote the pseudo inverse of $X_N$, then
    \begin{align}
        (X_N^{\ddagger} S_0^{-1} V X_N) X_N^{\ddagger} u_i = X_N^{\ddagger} S_0^{-1} V u_i =\lambda X_N^{\ddagger} u_i,
    \end{align}
    thus $(\lambda_i, X_N^{\ddagger} u_i)_{i=1}^d$ is the eigenvalues and eigenvectors of $X_N^\top (X_N X_N^\top)^{-1} S_0^{-1} VX_N$. Since the rank of this matrix is at most $d$, other eigenvalues are 0 and the pseudo determinant is $\prod_{i=1}^d \lambda_i$, which is exactly the determinant of $S_0^{-1} V$. 
    Then the regularization term in \eqref{eq:VI-blr} can be simplified to: 
    {
    \begin{align}
        -\kl(q(z|X_N) || p(z|X_N))
        =- \frac{1}{2} \tr(S_0^{-1} V) + \frac{1}{2} \log |S_0^{-1} V| 
        - \frac{1}{2} (w - m_0)^\top S_0^{-1} (w - m_0) + \frac{N}{2}.
        \label{eq:uq-simplifiedm}
    \end{align}
    }
    By aligning $w = m, V= S$, we can seet that \eqref{eq:uq-simplifiedm} is exactly the regularizer in \eqref{eq:VI-blr} ignoring the constant.
\end{proof}

\noindent{\bf Note for the case $K>1$:} Let $\theta \in \mathbb{R}^{d \times K}$. 
For VI, we make a mean field assumption with $q(\theta_k) = \dN(\theta_k|m_k, S_k)$ and $q(\theta) = \prod_{k=1}^K q(\theta_k)$, where $\theta_k$ is the $k$-th column of $\theta$. For VIFO, using mean field let $q(z_k|x)= \dN(z_k|w_k^\top x, x^\top V_k x)$ and $q(z|x) = \prod_{k=1}^K q(z_k|x)$. By aligning $w_k = m_k$ and $V=S_k$, we can find $\E_{q(z|x_i)}[\log p(y_i|z_1, \dots z_K)] = \E_{q(\theta)} [\log p(y_i | (\theta_1^\top x_i, \dots, \theta_K^\top x_i))]$, and 
\begin{align}
    \kl(q(\theta) || p(\theta)) = \sum_k \kl(q(\theta_k), p(\theta_k)) \doteq \sum_{k} \kl(q(z_k|X_N)||p(z_k|X_N)),
\end{align}
where the second $\doteq$ means equivalence ignoring a constant difference.

\begin{proof}[Proof of Theorem \ref{thm:not-recover-vi}]
    Consider a neural network with one single hidden layer, denote the weights of the first layer as $u$, and the weights of the second layer as $w$. Thus, the $k$-th output can be computed as:
\begin{align*}
    z^{(k)} = \sum_{i=1}^I w_{k, i} \psi \left(\sum_{d=1}^D u_{i, d} x_d \right),
\end{align*}
where $I$ is the size of the hidden layer, $D$ is the input size and $\psi(a)=\max(0, a)$ is the ReLU activation function. We further simplify the setting by considering the special case where only $x_1$ is non-zero and $I=1$. Then the $k$-th output becomes:
\begin{align*}
    z^{(k)} = w_{k, 1} \psi(u_{1, 1} x_1).
\end{align*}
Consider a distribution $q(w_{k, i})=\dN(\bar{w}_{k, i}, \sigma_w^2), q(u_{i, d})=\dN(\bar{u}_{i,d}, \sigma_u^2)$. Then if $x_1\geq 0$, 
\begin{align}
    \label{eq:pos}
    \E_{q(w) q(u)} \left[z^{(k)} \right] &= \E_{w, u} \left[w_{k, 1} \psi(u_{1, 1} x_1) \right] \nonumber \\
    &=\bar{w}_{k, 1} \left(\bar{u}_{1, 1} + \frac{\phi \left(-\frac{\Bar{u}_{1, 1}}{\sigma_u}\right)}{1-\Phi \left(-\frac{\Bar{u}_{1, 1}}{\sigma_u}\right)} \sigma_u \right) \left(1-\Phi \left(-\frac{\Bar{u}_{1, 1}}{\sigma_u}\right) \right) x_1;
\end{align}
if $x_1 < 0$, then
\begin{align}
    \label{eq:neg}
    \E_{q(w) q(u)} \left[z^{(k)} \right] &= \E_{w, u} \left[w_{k, 1} \psi(u_{1, 1} x_1) \right] \nonumber \\
    &=\bar{w}_{k, 1} \left(\bar{u}_{1, 1} - \frac{\phi\left(-\frac{\Bar{u}_{1, 1}}{\sigma_u}\right)}{\Phi\left(-\frac{\Bar{u}_{1, 1}}{\sigma_u}\right)} \sigma_u \right) \Phi\left(-\frac{\Bar{u}_{1, 1}}{\sigma_u}\right) x_1,
\end{align}
where $\phi$ and $\Phi$ are the pdf and cdf of standard normal distribution and we directly use the expectation of the truncated normal distribution. Now consider $\Tilde{w}$ and $\Tilde{u}$ that aim to recover (\ref{eq:pos}) and (\ref{eq:neg}). If $\Tilde{u}_{1, 1} \geq 0$, it cannot successfully recover (\ref{eq:neg}) because the ReLU activation will have 0 when $x_1<0$ so that it cannot recover (\ref{eq:neg}); if $\Tilde{u}_{1, 1} <0$, for the same reason it cannot recover (\ref{eq:pos}). 
\end{proof}

\subsection{Proofs in Section \ref{sec:rademacher}}
\label{app:rademacher}

{\bf Verifying Assumption \ref{assumption:loss}:} We next verify that Assumption \ref{assumption:loss} holds for classification and (with a modified loss) for regression.

For $K$-classification, $z$ is $K$-dimensional and the negative log-likelihood is
\begin{align*}
    -\log p(y=k|z) = -\log \frac{\exp(z_k)}{\sum_{i=1}^K \exp(z_i)} = -z_k + \log \sum_{i=1}^K \exp(z_i)
\end{align*}
which is 1-Lipschitz in $z$.

For regression, $z=(m, l)$ is 2-dimensional, and the negative log-likelihood is:
\begin{align*}
    -\log p(y|z) = \frac{1}{2} (y-m)^2 \exp(-l) + \frac{1}{2} l. 
\end{align*}
Neither the quadratic function nor exponential function is Lipschitz. But we can replace the unbounded quadratic function $(y-m)^2$ with a bounded version $\min\{(y-m)^2, B_m^2\}$, and replace the exponential function $\exp(-l)$ with $\min\{\exp(-l), B_l\}$, where $B>0$, to guarantee the Lipschitzness. Now the negative log-likelihood is:
\begin{align*}
    -\log p(y|z) = \frac{1}{2} \min\{(y-m)^2, B_m^2\} \min\{\exp(-l), B_l\} + \frac{1}{2} l,
\end{align*}
is $(B_m B_l)$-Lipschitz in $m$, $\left(\frac{1}{2}+ \frac{1}{2}B_m^2 B_l \right)$-Lipschitz in $l$.

{\bf Verifying Assumption \ref{assumption:link}:} For Assumption \ref{assumption:link}, we can use $g(l)=\log(1+\exp(l))$ which is 1-Lipschitz.
If $g(l)=\exp(l)$ is the exponential function, we can use a bounded variant that satisfies the requirement $g(l)=\max\{\exp(x), B_g\}$. 

\begin{proof}[Proof of Lemma \ref{lemma:rademacher-bivariate}]
    We prove the lemma for $L=1$. If this is not the case, we can define $\phi' = \frac{1}{L} \phi$, and use the fact that $R(\phi(A\times B)) \leq L R(\phi'(A\times B))$. Let $C_i=\{(a_1+b_1, \dots, a_{i-1} + b_{i-1}, \phi'(a_i, b_i), a_{i+1} + b_{i+1}, \dots, a_N + b_N): a\in A, b\in B\}$. It suffices to prove that for any set $A, B$ and all $i$ we have $R(C_i)\leq R(A) +R(B)$. Without loss of generality we prove the case for $i=1$.
    \begin{align*}
        N R(C_1) &= \E_{\sigma}\left[\sup_{c\in C_1} \sigma_1 \phi(a_1, b_1) + \sum_{i=2}^N \sigma_i (a_i + b_i) \right] \\
        &=\frac{1}{2} \E_{\sigma_2,\dots, \sigma_N} \left[\sup_{a\in A, b\in B} \left(\phi(a_1, b_1) + \sum_{i=2}^N \sigma_i (a_i + b_i) \right) 
        \right.
        \\ & \mbox{\hspace{3cm}}
        \left.
        + \sup_{a'\in A, b'\in B} \left(-\phi(a_1', b_1') + \sum_{i=2}^N \sigma_i (a_i' + b_i') \right) \right] \\
        &=\frac{1}{2} \E_{\sigma_2,\dots, \sigma_N} \left[\sup_{a, a'\in A, b,b'\in B} \left(\phi(a_1, b_1) - \phi(a_1', b_1') + \sum_{i=2}^N \sigma_i (a_i +b_i) + \sum_{i=2}^N \sigma_i (a_i' + b_i') \right) \right] \\
        &\leq \frac{1}{2} \E_{\sigma_2 \dots \sigma_N} \left[\sup_{a, a'\in A, b, b' \in B}\left( |a_1 - a_1'| + |b_1-b_1'| + \sum_{i=2}^N \sigma_i (a_i + b_i) + \sum_{i=2}^N \sigma_i(a_i'+b_i') \right) \right] \\
        &=\frac{1}{2} \E_{\sigma_2, \dots, \sigma_N} \left[\sup_{a, a'\in A} \left(a_1 - a_1'+\sum_{i=2}^N \sigma_i a_i + \sum_{i=2}^N \sigma_i a_i' \right) \right] 
        \\ & \mbox{\hspace{3cm}}
        + \frac{1}{2}\E_{\sigma_2, \dots, \sigma_N} \left[\sup_{b, b'\in B} \left(b_1 - b_1'+\sum_{i=2}^N \sigma_i b_i + \sum_{i=2}^N \sigma_ib_i' \right) \right] \\
        &=NR(A)+NR(B).
    \end{align*}
\end{proof}

\begin{proof} [Proof of Theorem \ref{thm:rademacher}]
    We show that the loss is Lipschitz in $f_{W_1}(x)$ and $f_{W_2}(x)$.
    Fix any $x$, and $W, W'$. We denote the mean and standard deviation of $q_{W}(z|x)$ by
    $\mu$ and $s$ and the same for $q_{W'}(z|x)$. We use $\cdot$ for Hadamard product.
    \begin{align*}
        &\E_{q_{W}(z|x)}[-\log p(y|z)] - \E_{q_{W'}(z|x)} [-\log p(y|z)] \\
        =& \E_{\epsilon \sim \dN(0, I)}[\log p(y|\mu' + \epsilon \cdot s')-\log p(y|\mu+ \epsilon \cdot s)] \\
        \leq & \E_{\epsilon \sim \dN(0, I)} \left[L_{0} \lVert (\mu-\mu') + \epsilon \cdot (s - s') \rVert_2 \right] \quad \quad \text{(Lipschitz)} \\
        \leq & L_{0} \lVert \mu - \mu' \rVert_2 + L_0 \E_{\epsilon} \left[\sqrt{\lVert\epsilon \cdot (s - s')\rVert_2^2} \right] \\
        \leq & L_{0} \lVert \mu - \mu' \rVert_2 + L_{0} \sqrt{\E_{\epsilon} [\lVert\epsilon \cdot (s - s')\rVert_2^2]} \quad \quad \text{(Jensen's Ineq)} \\
        = & L_{0} (\lVert \mu - \mu' \rVert_2  + \lVert s - s' \rVert_2) \\
        \leq & L_0 (\lVert \mu - \mu' \rVert_1 + \lVert s - s' \rVert_1).
    \end{align*}
    For the 6th line note
    that $\E_{\epsilon} [\lVert\epsilon \cdot (s - s')\rVert_2^2]=\E_{\epsilon} [\sum_i \epsilon_i^2 (s_i - s_i')^2]= \sum_i \E_{\epsilon_i \sim \dN(0, 1)} [\epsilon_i^2 (s_i -s_i)^2 ] = \sum_i (s_i - s_i')^2 = \lVert s - s'\rVert_2^2$.
    The loss function is Lipschitz in $\mu$, which is exactly $f_{W_1}(x)$.
    Further, $s$ is $L_1$-Lipschitz in the logit $f_{W_2}(x)$, thus, the loss function is $(L_0 \max\{1, L_1\})$-Lipschitz in the concatenation of $f_{W_1}(x)$ and $f_{W_2}(x)$,
    each of which is of dimension $K$.
    The theorem now follows from 
    Corollary \ref{cor:multi-lipschitz}.
\end{proof}

\section{Derivations of Collapsed Variational Inference}
\label{sec:derive-cvi}
As is shown by \cite{collapsed-elbo}, for priors and approximate posteriors from the exponential family, we can derive the closed-form solution for the optimal $q^*(\mu_p, \sigma_p^2)$,
\begin{align}
    \log q^*(\mu_p, \sigma_p^2| x) \propto \log p(\mu_p, \sigma_p^2) + \E_{q(z|x)}[\log p(z|\mu_p, \sigma_p^2)], 
\end{align}
for optimizing $q(\mu_p, \sigma_p^2)$ for every single data. Our derivations follow the methodology of \citet{collapsed-elbo} but they are applied on the output $z$ instead of the weights $W$.

\subsection{Learn mean, fix variance}
Let $p(z|\mu_p)=\dN(z|\mu_p, \gamma I)$, $p(\mu_p)=\dN(\mu_p|0, \alpha I)$. Recall that $q(z|x)=\dN(\mu_q(x), \diag(\sigma_q^2(x)))$. Then 
\begin{align*}
    \log q^*(\mu_p|x) &\propto \log p(\mu_p) + \E_{q(z|x)}[\log p(z|\mu_p)] \\
    &\propto -\frac{1}{2\alpha} \mu_p^\top \mu_p -\frac{1}{2\gamma}[(\mu_p-\mu_q(x))^\top (\mu_p - \mu_q(x)) + 1^\top \sigma_q^2(x)] \\
    &\propto -\frac{\alpha + \gamma}{2\alpha \gamma} \left(\mu_p - \frac{\alpha}{\alpha + \gamma} \mu_q(x) \right)^\top \left(\mu_p - \frac{\alpha}{\alpha + \gamma} \mu_q(x) \right).
\end{align*}
Then $q^*(\mu_p)=\dN(\frac{\alpha}{\alpha+\gamma}\mu_q(x), \frac{\alpha\gamma}{\alpha+\gamma} I)$. Pluging $q^*$ into the regularizer, the new regularizer becomes 
\begin{align*}
    \frac{1}{2\gamma} \left[1^\top \sigma^2_q(x) + \frac{\gamma}{\gamma + \alpha} \mu_q(x)^\top \mu_q(x) \right] - \frac{1}{2} 1^\top \log \sigma_q^2(x)+\frac{K}{2} \log (\gamma+\alpha) - \frac{K}{2}.
\end{align*}

\subsection{Learn both mean and variance}
Let $p(z|\mu_p, \sigma_p^2)=\dN(z|\mu_p, \sigma_p^2)$, $p(\mu_p|\sigma_p^2)=\dN(\mu_p|0, \frac{1}{t} \sigma_p^2)$, $p(\sigma_p^2)=\dG(\sigma_p^2|\alpha, \beta)$, where $\dG$ indicates the inverse Gamma distribution. Let $q(\mu_p)$ be a diagonal Gaussian and $q(\sigma_p^2)$ be inverse Gamma. Use $\mu_{p, i}$ and $\sigma_{p, i}$ to denote the $i$-th entry of $\mu_p$ and $\sigma_p$ respectively, then
\begin{align*}
    &\log q^*(\mu_{p, i}, \sigma_{p, i}^2) \\
    \propto & \log p(\mu_{p, i}, \sigma_{p, i}^2) + \E_{q(z|x)}[\log p(z|\mu_p, \sigma_p^2)] \\
    \propto & -(\alpha+\frac{3}{2}) \log \sigma_{p,i}^2 - \frac{2\beta + t \mu_{p, i}^2}{2\sigma_{p, i}^2} - \frac{1}{2} \log \sigma_{p, i}^2 - \frac{1}{2} \frac{(\mu_{p,i} - \mu_{q, i}(x))^2}{\sigma_{p, i}^2} - \frac{1}{2} \frac{\sigma_{q, i}^2(x)}{\sigma_{p, i}^2} \\
    \propto & -(\alpha + 2) \log \sigma_{p, i}^2 - \frac{1}{2\sigma_{p, i}^2} \left(2 \left(\beta + \frac{t}{2(t+1)} \mu_{q, i}(x)^2 + \frac{1}{2} \sigma_{q, i}^2(x) \right) + (t+1)\left(\mu_{p, i} - \frac{\mu_{q, i}(x)}{t+1} \right)^2  \right) 
\end{align*}
follows the normal-inverse-gamma distribution. Thus $q^*(\mu_p|x) = \dN(\mu_p| \frac{1}{t+1} \mu_q(x), \frac{1}{t+1} \sigma_p^2)$ and $q^*(\sigma_p^2 | x) = \dG(\sigma_p^2|(\alpha + \frac{1}{2})1, \beta+\frac{t}{2(t+1)} \mu_q(x)^2 + \frac{1}{2}\sigma_q^2(x))$. Then the regularizer becomes
\begin{align}
    (\alpha+\frac{1}{2}) 1^\top \log \left[\beta 1 + \frac{t}{2(1+t)} \mu_q(x)^2 + \frac{1}{2}\sigma_q^2(x) \right] - \frac{1}{2} 1^\top \log \sigma_q^2(x).
\end{align}

\subsection{Empirical Bayes}
\label{eb}
Let $p(\sigma_p^2)=\dG(\sigma_p^2|\alpha, \beta)$, $p(z|\sigma_p^2)=\dN(z|0, \sigma_p^2 I)$, and let $q(\sigma_p^2)$ be a delta distribution. Then 
\begin{align*}
    &\kl(q(z|x)||p(z|\sigma^2_p)) - \log p(\sigma_p^2) \\
    =& \frac{1}{2}\left[K \log \sigma_p^2 - 1^\top \log \sigma_{q}^2(x) - K + \frac{1^\top \sigma_{q}^2(x)}{\sigma_p^2} + \frac{\mu_{q}(x)^\top \mu_{q}(x)}{\sigma_p^2} \right] + (\alpha + 1) \log \sigma_p^2 + \frac{\beta}{\sigma_p^2}.
\end{align*}
By taking the derivatives of the above equation with respect to $\sigma_p^2$ and solving, we obtain the optimal $\sigma_p^2 = \frac{\mu_q(x)^\top \mu_q(x) + 1^\top\sigma_q^2(x) + 2\beta}{K+2\alpha+2}$. If we plug this back into the KL term, we get the regularizer:
\begin{align}
    &\frac{1}{2} \left[K\log \frac{\mu_q(x)^\top \mu_q(x) + 1^\top\sigma_q^2(x) + 2\beta}{K + 2\alpha + 2} - 1^\top \log |\sigma_q^2(x)| \right] \nonumber \\
    &- \frac{K}{2}+ \frac{1}{2} \frac{(K+2\alpha+2)(\mu_q(x)^\top \mu_q(x) + 1^\top\sigma_q^2(x))}{\mu_q(x)^\top \mu_q(x) + 1^\top\sigma_q^2(x) + 2\beta}.
    \label{eq:eb-regularizer}
\end{align}

However, if we include the negative log-prior term $(\alpha + 1) \log \frac{\mu_q(x)^\top \mu_q(x) + 1^\top\sigma_q^2(x) + 2\beta}{K+2\alpha+2} + \beta \frac{K+2\alpha+2}{\mu_q(x)^\top \mu_q(x) + 1^\top\sigma_q^2(x) + 2\beta} $, adding them up we will have 
\begin{align*}
    \frac{1}{2} (K+2\alpha+2) \log \frac{\mu_q(x)^\top \mu_q(x) + 1^\top\sigma_q^2(x) + 2\beta}{K + 2\alpha + 2} - 1^\top \log \sigma_q^2(x) + \text{const},
\end{align*}
which highly reduces the complexity of the regularizer. This performs less well in practice and therefore we follow \cite{DVI} and use \eqref{eq:eb-regularizer}. 

\section{Optimizing the Variational Distribution for All Data}
\label{sec:opt-all-data}
In the previous section we show the derivation of collapsed variational inference where $q^*(\mu_p, \sigma_p^2)$ is optimized for every data point $x$. In this section we show how to optimize $q(\mu_p, \sigma_p^2)$ for all data and obtain different regularizers to the ones mentioned in the above section. These perform less well in practice but we include them here for completeness. The closed-form solution for $q^*(\mu_p, \sigma_p^2)$ for all data is
\begin{align}
    \log q^*(\mu_p, \sigma_p^2) \propto \frac{1}{N} \sum_{(x,y)\in \mathcal D} \Bigg\{\log p(\mu_p, \sigma_p^2) + \E_{q(z|x)}[\log p(z|\mu_p, \sigma_p^2)] \Bigg\}.
    \label{eq:solve-cVI-all}
\end{align}

\subsection{Learn mean, fix variance, optimize for all data}
Let $p(z|\mu_p)=\dN(z|\mu_p, \gamma)$, $p(\mu_p)=\dN(\mu_p|0, \alpha)$. Given a dataset $\mathcal D=\{(x, y)\}$, we can get one optimal $q^*(\mu_p)$ for all data. According to \eqref{eq:solve-cVI-all},
\begin{align*}
    \log q^*(\mu_p) &\propto \frac{1}{N} \sum_{(x,y)\in \mathcal D} \Bigg\{ \log p(\mu_p) + \E_{q(z|x)} [\log p(z|\mu_p)] \Bigg\} \\
    &\propto -\frac{1}{2 \alpha} \mu_p^\top \mu_p - \frac{1}{2\gamma N} \sum_{(x,y)\in \mathcal D} ((\mu_p - \mu_q(x))^\top (\mu_p - \mu_q(x)) + 1^\top \sigma_q^2(x)) \\
    &\propto -\frac{\alpha+\gamma}{2 \alpha \gamma} \left(\mu_p - \frac{1}{N}\sum_{(x,y)\in \mathcal D} \mu_q(x) \right)^\top \left(\mu_p - \frac{1}{N}\sum_{(x,y)\in \mathcal D} \mu_q(x) \right).
\end{align*}
Then the optimal $q^*(\mu_p)=\dN(\frac{\alpha}{\alpha+\gamma} \frac{1}{N} \sum_x \mu_q(x), \frac{\alpha \gamma}{\alpha + \gamma} I)$. Let $\Bar{\mu}_q=\frac{1}{N}\sum_x \mu_q(x)$. The regularizer now is:
\begin{align}
    &\sum_{(x, y)}\Bigg\{\E_{q(\mu_p)}[\kl(q(z|x) || p(z|\mu_p, \gamma))] + \kl(q(\mu_p)||p(\mu_p)) \Bigg\} \nonumber \\
    =& \sum_{(x, y)} \Bigg\{ \E_{q(\mu_p)}\left[ K \log \gamma - 1^\top \log \sigma_q^2(x) - K + \frac{1}{\gamma} 1^\top \sigma_q^2(x) - \frac{1}{\gamma} (\mu_q(x)-\mu_p)^\top (\mu_q(x)-\mu_p) \right] \Bigg\} \nonumber \\
    & + \frac{N}{2}\left[K \log \frac{\alpha + \gamma}{\gamma} - K + K \frac{\gamma}{\gamma+\alpha} + \frac{\alpha^2}{(\alpha+\gamma)^2} \Bar{\mu}_q^\top \Bar{\mu}_q \right] \nonumber \\
    =&\sum_{(x, y)} \Bigg\{ \frac{1}{2\gamma} (1^\top \sigma_q^2(x) + \mu_q(x)^\top \mu_q(x)) - \frac{1}{2} 1^\top \log \sigma_q^2(x) \Bigg\} - \frac{N}{2}\left(\frac{1}{\gamma} - \frac{1}{\alpha+\gamma} \right) \Bar{\mu}_q^\top \Bar{\mu}_q + \frac{NK}{2}\log (\alpha + \gamma) - \frac{NK}{2}.
\end{align}
We refer to this method as ``VIFO-mean\_all''.

\subsection{Learn both mean and variance, optimize mean for single data, and variance for all data}
\label{mv-all}
Let $p(z|\mu_p, \sigma_p^2) = \dN(z|\mu_p, \sigma_p^2)$, $p(\mu_p|\sigma_p^2)=\dN(\mu_p|0, \frac{1}{t} \sigma_p^2)$, $p(\frac{1}{\sigma_p^2}) = \dG(\frac{1}{\sigma_p^2}|\alpha, \beta)$. Consider that
\begin{align}
    &\log p(\mu_{p, i}, \sigma_{p, i}^2) + \E_{q(z|x)}[\log p(z|\mu_{p, i}, \sigma_{p,i}^2)] \\
    = & \log p(\mu_{p, i} | \sigma_{p, i}^2) + \log p(\sigma_{p, i}^2) + \E_{q(z|x)}[\log p(z|\mu_{p, i}, \sigma_{p,i}^2)] \\
    \propto & -\frac{t}{2} \frac{\mu_{p, i}^2 }{\sigma_{p, i}^2} - \frac{1}{2} \log \sigma_{p, i}^2 - (\alpha + 1) \log \sigma_{p, i}^2 - \frac{\beta}{\sigma_{p, i}^2} - \frac{1}{2} \log \sigma_{p, i}^2 - \frac{1}{2 \sigma_{p, i}^2} ((\mu_{p, i} - \mu_{q,i}(x))^2 + \sigma_{q, i}^2(x)) \\
    =&-\frac{t}{2} \frac{\mu_{p, i}^2 }{\sigma_{p, i}^2} - \log \sigma_{p, i}^2 - (\alpha + 1) \log \sigma_{p, i}^2 - \frac{\beta}{\sigma_{p, i}^2} - \frac{1}{2 \sigma_{p, i}^2} ((\mu_{p, i} - \mu_{q,i}(x))^2 + \sigma_{q, i}^2(x)), \\
    =& -\frac{t+1}{2 \sigma_{p,i}^2} \left(\mu_{p, i} - \frac{1}{t+1} \mu_{q,i}(x) \right)^2 - \frac{t \mu_{q,i}^2(x)}{2(t+1) \sigma_{p,i}^2} - (\alpha + 2) \log \sigma_{p, i}^2 - \frac{\beta}{\sigma_{p, i}^2} - \frac{\sigma_{q,i}^2(x)}{2 \sigma_{p, i}^2}
    \label{eq:mv}
\end{align}
Then by extracting the $\mu_p$ part from \eqref{eq:mv}, we have
\begin{align*}
    \log q^*(\mu_{p, i}|\sigma_{p, i}^2, x) \propto -\frac{t+1}{2 \sigma_{p,i}^2} \left(\mu_{p, i} - \frac{1}{t+1} \mu_{q,i}(x) \right)^2,
\end{align*}
and thus $q^*(\mu_p|x, \sigma_p^2)=\dN(\mu_p | \frac{1}{t+1} \mu_q(x), \frac{1}{t+1} \sigma_p^2)$. Then we try to marginalize out $\mu_p$ to compute $q^*(\sigma_p^2)$.
\begin{align*}
    \log q^*(\sigma_{p, i}^2) &\propto \frac{1}{N} \sum_{(x, y) \in \mathcal D} \log \int \exp\left(\log p(\mu_{p, i}, \sigma_{p, i}^2) + \E_{q(z|x)}[\log p(z|\mu_{p, i}, \sigma_{p,i}^2)] \right) d\mu_{p, i} \\
    &\propto \frac{1}{N} \sum_{x, y \in \mathcal D} \Bigg\{ \frac{1}{2}\log \int \exp\left(-\frac{t+1}{2 \sigma_{p,i}^2} \left(\mu_{p, i} - \frac{1}{t+1} \mu_{q,i}(x) \right)^2 \right) d\mu_{p, i} \\
    &- \frac{t \mu_{q,i}^2(x)}{2(t+1) \sigma_{p,i}^2} - (\alpha + 2) \log \sigma_{p, i}^2 - \frac{\beta}{\sigma_{p, i}^2} - \frac{\sigma_{q,i}^2(x)}{2 \sigma_{p, i}^2} \Bigg\} \\
    &= -(\alpha + 2) \log \sigma_{p, i}^2 - \frac{\beta}{\sigma_{p, i}^2} + \frac{1}{N} \sum_{(x, y)\in \mathcal D} \left(\frac{1}{2} \log \frac{2 \pi \sigma_{p, i}^2}{t+1} - \frac{\sigma_{q, i}^2(x)}{2 \sigma_{p, i}^2 } - \frac{t \mu_{q,i}^2(x)}{2(t+1) \sigma_{p,i}^2} \right) \\
    &=-(\alpha + \frac{3}{2}) \log \sigma_{p, i}^2 - \frac{\beta + \frac{t}{2(t+1)} \frac{1}{N} \sum \mu_{q, i}^2(x) + \frac{1}{2N}\sum \sigma_{q, i}^2(x)}{\sigma_{p, i}^2},
\end{align*}

and $q^*(\sigma_p^2) = \dG(\sigma_p^2|(\alpha + \frac{1}{2})1, \beta+\frac{t}{2(t+1)} \frac{1}{N} \sum_x\mu_q(x)^2 + \frac{1}{2}\frac{1}{N} \sum_x\sigma_q^2(x))$. Let $\Tilde{\mu}_q = \sqrt{\frac{1}{N} \sum_x \mu_q(x)^2}$ and $\Tilde{\sigma}_q = \sqrt{\frac{1}{N} \sum_x \sigma_q(x)^2}$, then the regularizer becomes:
\begin{align}
    &(\alpha+\frac{1}{2}) N 1^\top \log \left[\beta 1 + \frac{t}{2(1+t)} \Tilde{\mu}_q^2 + \frac{1}{2}\Tilde{\sigma}_q^2 \right] - \sum_{(x, y)} \frac{1}{2} 1^\top \log \sigma_q^2(x) \\
    &+ KN \log \frac{\Gamma(\alpha)}{\Gamma(\alpha + \frac{1}{2})} - NK \alpha \log \beta + \frac{NK}{2} \log \frac{t+1}{t} - \frac{NK}{2}.
\end{align}
We refer to this method as ``VIFO-mv\_all''.

\subsection{Empirical Bayes for all data}
If we optimize $\sigma_p^2$ for all data, then we have
\begin{align*}
    &\sum_{(x, y)\in \mathcal D} \Big\{\kl(q(z|x)||p(z|\sigma^2_p)) - \log p(\sigma_p^2) \Big\} \\
    =& \sum_{(x, y)\in \mathcal D} \Bigg\{ \frac{1}{2}\left[K \log \sigma_p^2 - 1^\top \log \sigma_{q}^2(x) - K + \frac{1^\top \sigma_{q}^2(x)}{\sigma_p^2} + \frac{\mu_{q}(x)^\top \mu_{q}(x)}{\sigma_p^2} \right] + (\alpha + 1) \log \sigma_p^2 + \frac{\beta}{\sigma_p^2} \Bigg\}
\end{align*}
and the optimal variance being $\frac{\Tilde{\mu}_q^\top \Tilde{\mu}_q + 1^\top \Tilde{\sigma}_p^2 + 2\beta }{K + 2\alpha + 2 }$ where $\Tilde{\mu}_q = \sqrt{\frac{1}{N} \sum_x \mu_q(x)^2}$ and $\Tilde{\sigma}_q = \sqrt{\frac{1}{N} \sum_x \sigma_q(x)^2}$. The objective is:
\begin{align*}
    &\frac{NK}{2} \log \frac{2\beta + \Tilde{\mu}_q^2 + \Tilde{\sigma}_q^2}{K + 2\alpha + 2} - \frac{1}{2} \sum_x 1^\top \log \sigma_q(x)^2 - \frac{NK}{2} \\
    +& \frac{1}{2} \frac{K+2\alpha+2}{2\beta + \Tilde{\mu}_q^2 + \Tilde{\sigma}_q^2}\sum_x (\mu_q(x)^\top \mu_q(x) + 1^\top \sigma_q^2(x)).
\end{align*}
This method is called ``VIFO-eb\_all''.

\section{Experimental Details}
\subsection{Experiments on Artificial Dataset}
\label{sec:detail-artificial}
To generate \cref{fig:aux} and \cref{fig:prior}, we generate 100 training data points $y = 2\sin x + 0.1 \epsilon, \epsilon \sim \dN(0, 1)$, where $x_{\text{train}} \in [-\frac{3}{4} \pi, -\frac{1}{2} \pi] \cup [\frac{1}{2} \pi, \frac{3}{4} \pi]$ and $x_{\text{test}} \in [-\pi, \pi]$. We use a multilayer perceptron neural network with 5 layers, each layer containing 50 hidden units to fit the data. For VI, we pick the prior standard deviation to be 1.0 and for VIFO-mean, we select $\gamma=0.3, \frac{\gamma}{\alpha + \gamma} = 0.05$. For both models, we select the regularization parameter $\eta=0.1$ and for VIFO we choose $\eta_\aux=1.0$. For linear regression, we can explicitly compute the predicted variance if we use exponential function as the link function. Suppose $p(y|z) = \dN(y \mid m, \exp(l))$ and $p(m \mid x) = \dN(m \mid \mu_m, \sigma_m^2), p(l \mid x) = \dN(l \mid \mu_l, \sigma_l^2)$, then $p(y \mid x) = \dN(y \mid \mu_m, \sigma_m^2 + \exp(m_l + \sigma_l^2 / 2))$. See Appendix B.3 of \cite{DVI} for the derivation. 
To visualize the predictive distribution inducted by the prior, we draw multiple $z$'s from prior, then draw multiple $y$'s from likelihood $p(y|z)$ and plot them in \cref{fig:prior}.

\subsection{Experiments on Large Image Datasets}
\label{sec:detail-exp}
In this section we elaborate the experimental details, including the choice of hyperparameters, learning rates and the number of training epochs. 

\newcommand{\RK}[1]{{\textcolor{blue}{[RK: #1]}}}

\paragraph{Number of training epochs:} We train all methods in 500 epochs. 

\paragraph{Learning rate:}
For all methods other than SGD, SWA and SWAG, we use the Adam optimizer with learning rate 0.001.

\paragraph{VI and VIFO:} 
We first list the choices of the variance for naive variational methods. 
The choice of prior variance significantly affects the performance. 
For image datasets with complex neural networks, the total prior variance of VI grows with the number of parameters so we have to pick a small variance and we use 0.05 following the setting of \cite{competition-approximate-inference}. Since VIFO samples in the output space which is small, using 0.05 regularizes too strongly and we therefore set a larger value of 1 for the variance. 


For collapsed variational inference, we pick $\gamma=0.3$, $\alpha_{reg}=\frac{\gamma}{\alpha + \gamma} = 0.05$ for learn-mean regularizer (VI-mean, VIFO-mean, VIFO-mean-all) and $\alpha=0.5, \beta=0.01, \delta = \frac{t}{1+t}=0.1$ for learn-mean-variance regularizer (VI-mv, VIFO-mv, VIFO-mv\_all), which exactly follows \cite{collapsed-elbo}. We pick $\alpha=4.4798$ and $\beta=10$ for empirical Bayes (VI-eb, VIFO-eb, VIFO-eb\_all). The choice of $\alpha$ in empirical Bayes follows \cite{DVI} but the choice of $\beta$ is unclear in \cite{DVI} so we just perform a simple search from $\{1, 10, 100\}$ and set $\beta=10$ that yields the best result. 


For both VI and VIFO,
the regularization parameter $\eta$ is fixed at 0.1.

\paragraph{Hybrid Methods:}
The hybrid methods (SGD, SWA and SWAG) are not very stable so we have to tune learning rates carefully for each dataset. We choose the momentum to be 0.9 for all cases and list all other information in Table \ref{tab:swag-params}. Notice that it is hard to train the hybrid methods on SVHN using AlexNet, so we initialize with a pre-trained model that is trained with a larger learning rate 0.1 to find a region with lower training loss, and then continue to optimize with the parameters listed in Table \ref{tab:swag-params}. 

\paragraph{Dropout:} For Dropout we add a Dropout layer following each activation layer in the base model and set the Dropout probability $p=0.1$. 

\paragraph{Repulsive Ensembles:} Repulsive ensembles run multiple copies of the base model with a kernel base penalty to make sure the models are diverse. We use RBF kernel with lengthscale being the median of the square of the norm.

\paragraph{Dirichlet:}
Dirichlet-based models are deterministic and they interpret the output of the last layer as the parameters of dirichlet distributions, i.e., $\bm{\alpha}(x)=g(f_W(x))$, where $g$ maps the output to positive real numbers. 
Hence we run the Dirichlet models with the setting of the base model. We next explain the setting of hyperparameters.
As discussed by \cite{pitfalls}, the models of \citet{dirichlet, dir-flow} implicitly perform variational inference:
\begin{align}
    \mathbf{p} \sim \text{Dir}(\bm{\alpha}_0), \quad y|\mathbf{p} \sim \text{Cat}(\mathbf{p}),
\end{align}
and the ELBO becomes
\begin{align}
    \log p(y|x) &\geq \E_{q(\mathbf{p} |x)}[\log p(y|\mathbf{p})] - \kl(q(\mathbf{p} |x) || \text{Dir}(\mathbf{p}|\bm{\alpha}_0)), 
\end{align}
where $q(\mathbf{p}|x) = \text{Dir}(\mathbf{p}|\bm{\alpha}(x))$. In the experiments, following \citet{dirichlet, pitfalls}, we use a uniform prior with $\bm{\alpha}_0 = [1, \dots, 1]$. As in VI and VIFO, we pick the regularization parameter for KL divergence to be 0.1. 


\paragraph{Last layer Laplace:} We first train a neural network to obtain a MAP solution with a prior variance of 0.05. Then, we use the code from \citet{DaxbergerKIEBH21} to optimize the prior precision hyperparameter through post-hoc marginal likelihood maximization.

\paragraph{VBLL:} We adapt the code from \citet{HarrisonWS24} and utilize the default hyperparameters. We choose the discriminative classification setting, as it yields the best OOD performance according to \citet{HarrisonWS24}.

\begin{table}[t]
    \centering
    \caption{The parameters for running the hybrid algorithms. ``lr''-learning rate, ``wd''-weight decay, ``swag\_lr''-the learning rate after we start collecting models in SWA and SWAG algorithms, ``swag\_start''-the epochs when we start to collect models, ``epochs''-the number of training epochs.}
    \label{tab:swag-params}
    \begin{tabular}{cccccc}
    \hline
         & lr & wd & swag\_lr & swag\_start & epochs \\
         \hline
        CIFAR10 / CIFAR100 & 0.05 & 0.0001 & 0.01 & 161 & 500 \\
        SVHN$^*$ & 0.001 & 0.0001 & 0.005 & 161 & 500 \\
        STL10 & 0.05 & 0.001 & 0.01 & 161 & 500 \\
        \hline
    \end{tabular}
\end{table}

\section{Additional Plots}
\subsection{Accuracy on PreResNet20}
\begin{figure}[H]
    \centering
    \begin{subfigure}[b]{0.40\textwidth}
         \centering
         \includegraphics[width=\textwidth]{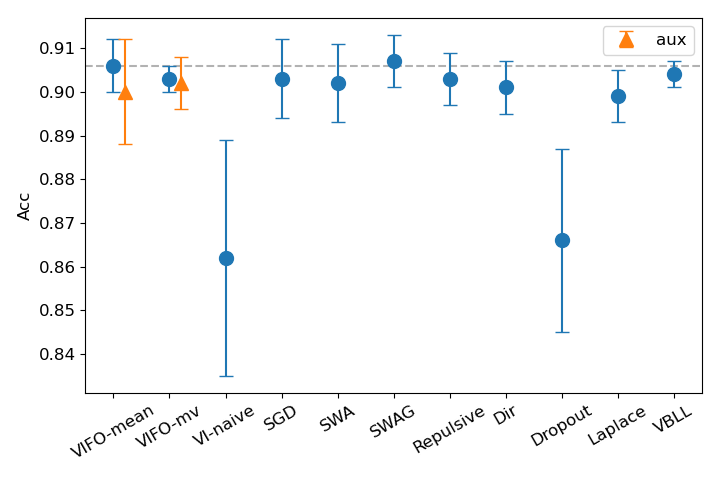}
         \caption{CIFAR10}
    \end{subfigure}
    \begin{subfigure}[b]{0.40\textwidth}
         \centering
         \includegraphics[width=\textwidth]{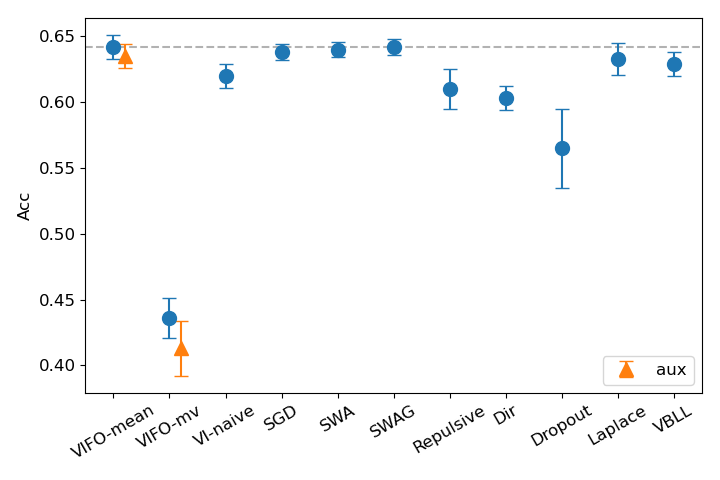}
         \caption{CIFAR100}
    \end{subfigure}
    \begin{subfigure}[b]{0.40\textwidth}
         \centering
         \includegraphics[width=\textwidth]{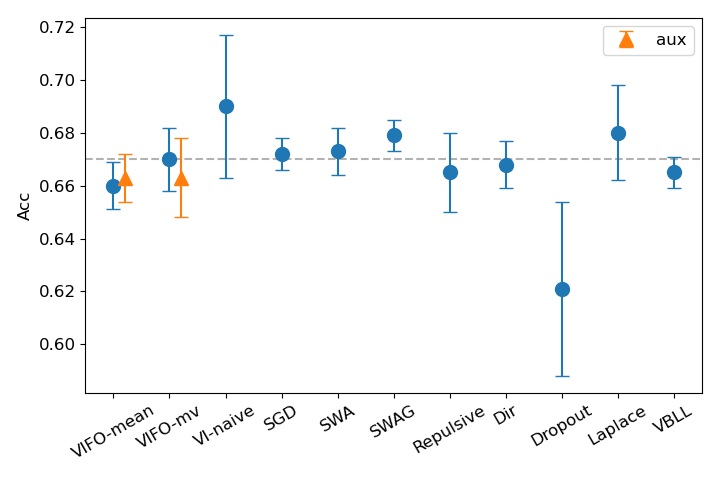}
         \caption{STL10}
    \end{subfigure}
    \begin{subfigure}[b]{0.40\textwidth}
         \centering
         \includegraphics[width=\textwidth]{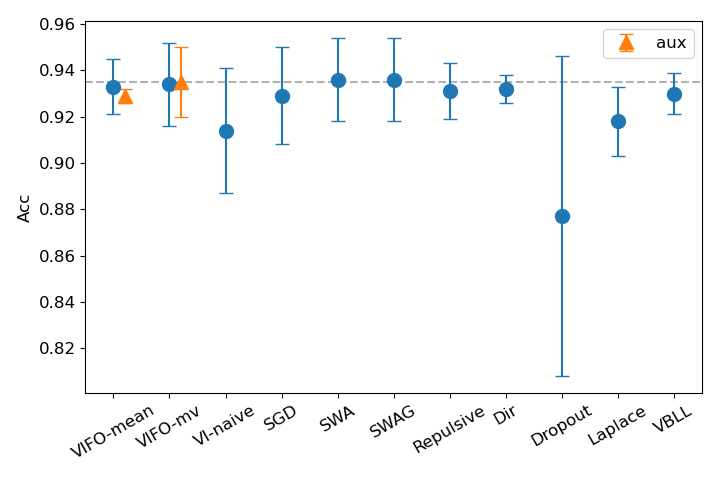}
         \caption{SVHN}
    \end{subfigure}
    \caption{Test accuracy ($\uparrow$) on PreResNet20. Dashed lines indicate the best version of VIFO.
    }
    \label{fig:acc-preresnet}
\end{figure}

\subsection{Results on AlexNet}
\PutAlexNetInD
\PutEntropyAlexNet

\subsection{AUROC Comparisons}
\begin{figure}[H]
    \centering
    \begin{subfigure}[b]{0.35\textwidth}
         \centering
         \includegraphics[width=\textwidth]{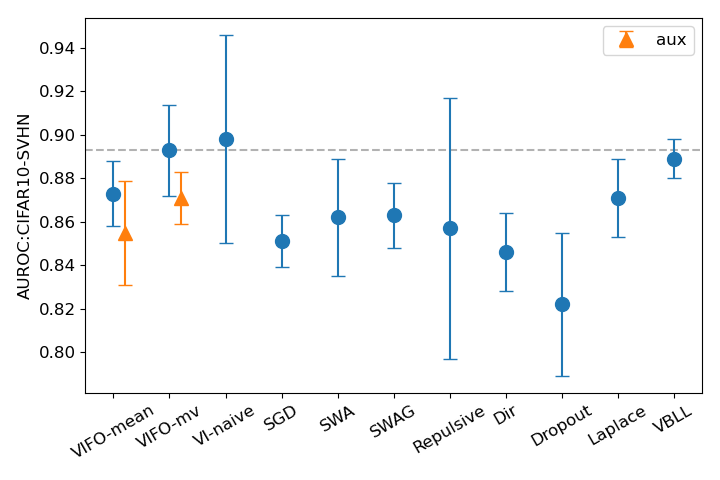}
         \caption{CIFAR10$\rightarrow$SVHN}
    \end{subfigure}
    \begin{subfigure}[b]{0.35\textwidth}
         \centering
         \includegraphics[width=\textwidth]{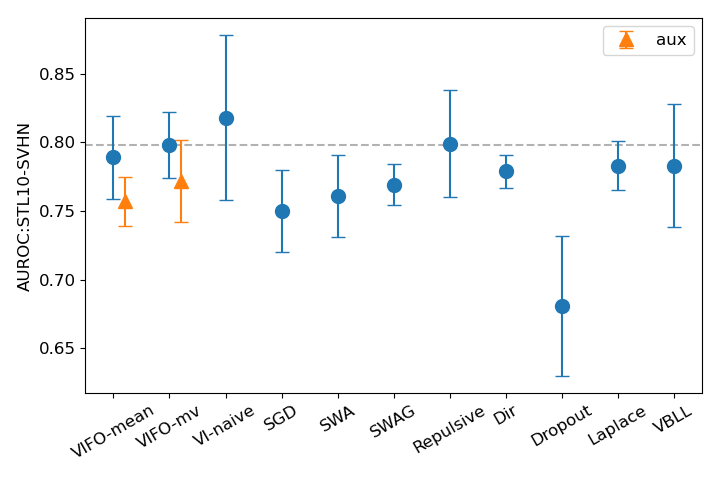}
         \caption{STL10$\rightarrow$SVHN}
    \end{subfigure}
    \begin{subfigure}[b]{0.35\textwidth}
         \centering
         \includegraphics[width=\textwidth]{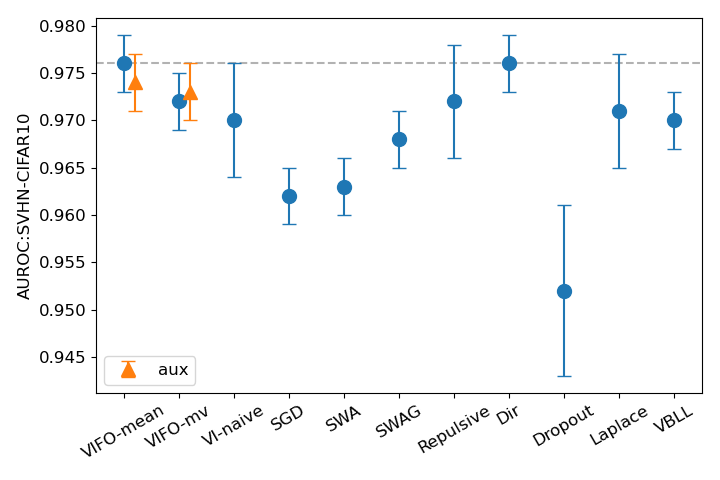}
         \caption{SVHN$\rightarrow$CIFAR10}
    \end{subfigure}
    \begin{subfigure}[b]{0.35\textwidth}
         \centering
         \includegraphics[width=\textwidth]{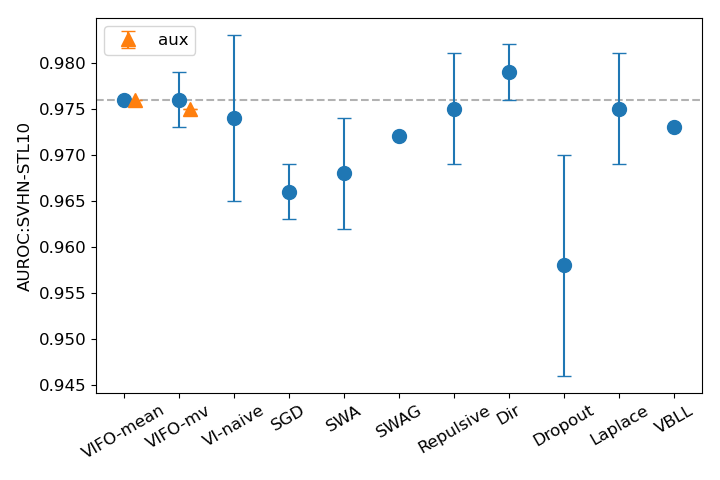}
         \caption{SVHN$\rightarrow$STL10}
    \end{subfigure}
    \caption{AUROC ($\uparrow$) on AlexNet.
    }
    \label{fig:auroc-alexnet}
\end{figure}

\begin{figure}[H]
    \centering
    \begin{subfigure}[b]{0.35\textwidth}
         \centering
         \includegraphics[width=\textwidth]{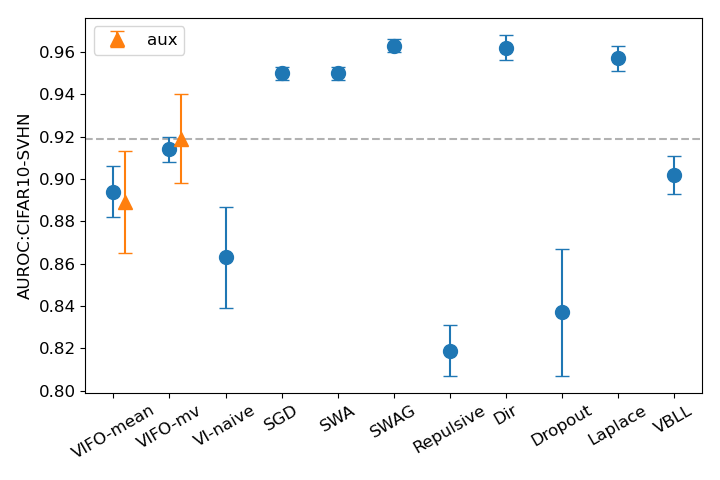}
         \caption{CIFAR10$\rightarrow$SVHN}
    \end{subfigure}
    \begin{subfigure}[b]{0.35\textwidth}
         \centering
         \includegraphics[width=\textwidth]{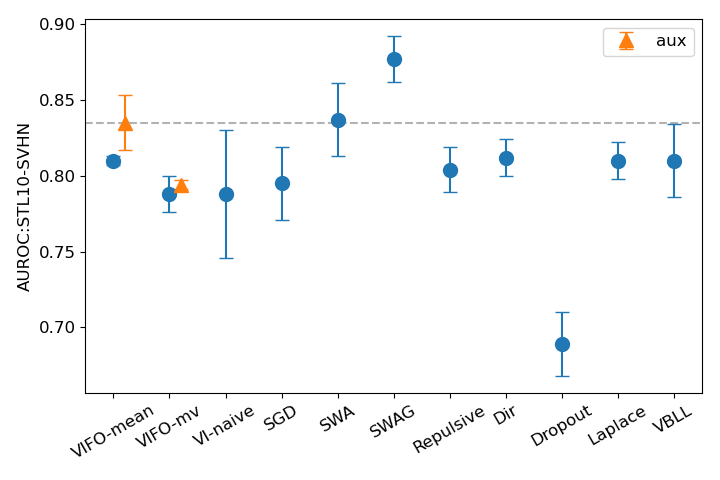}
         \caption{STL10$\rightarrow$SVHN}
    \end{subfigure}
    \begin{subfigure}[b]{0.35\textwidth}
         \centering
         \includegraphics[width=\textwidth]{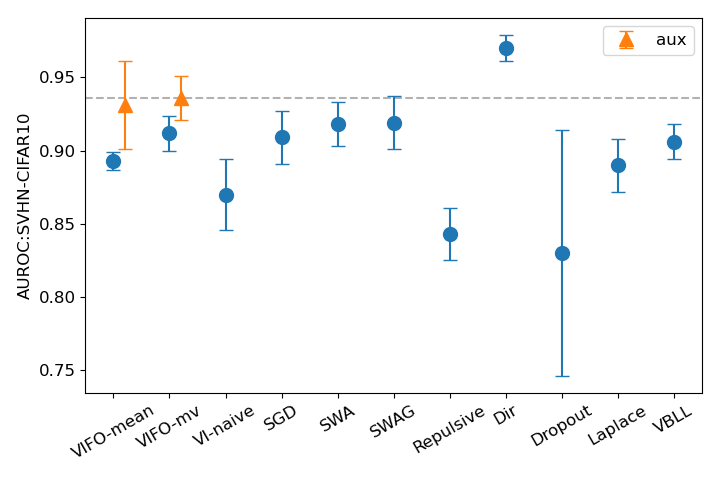}
         \caption{SVHN$\rightarrow$CIFAR10}
    \end{subfigure}
    \begin{subfigure}[b]{0.35\textwidth}
         \centering
         \includegraphics[width=\textwidth]{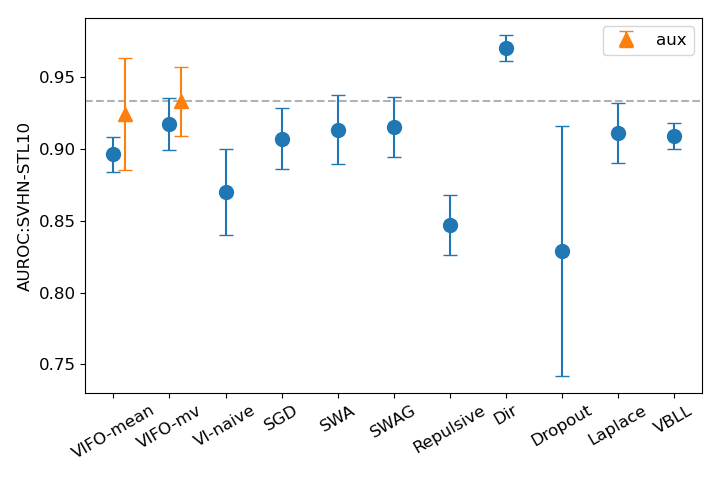}
         \caption{SVHN$\rightarrow$STL10}
    \end{subfigure}
    \caption{AUROC ($\uparrow$) on PreResNet20.
    }
    \label{fig:auroc-preresnet}
\end{figure}

\subsection{Learning Curves}
\label{sec:learning-curves}
\begin{figure}[H]
    \centering
    \begin{subfigure}[b]{0.35\textwidth}
         \centering
         \includegraphics[width=\textwidth]{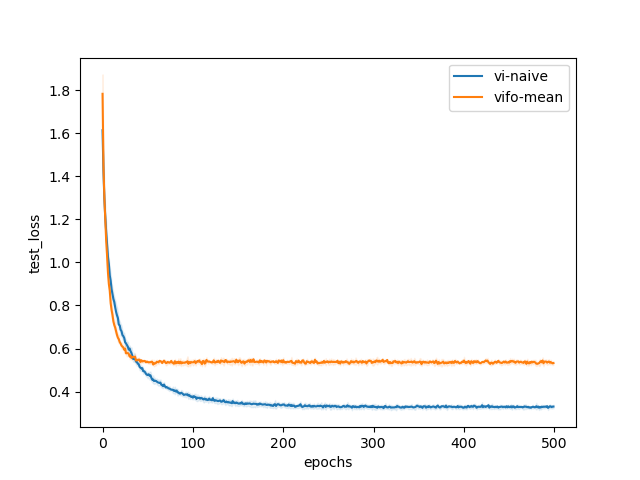}
         \caption{CIFAR10}
    \end{subfigure}
    \begin{subfigure}[b]{0.35\textwidth}
         \centering
         \includegraphics[width=\textwidth]{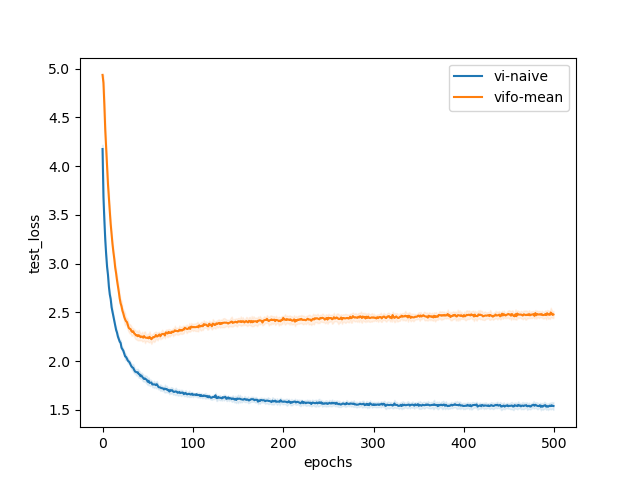}
         \caption{CIFAR100}
    \end{subfigure}
    \begin{subfigure}[b]{0.35\textwidth}
         \centering
         \includegraphics[width=\textwidth]{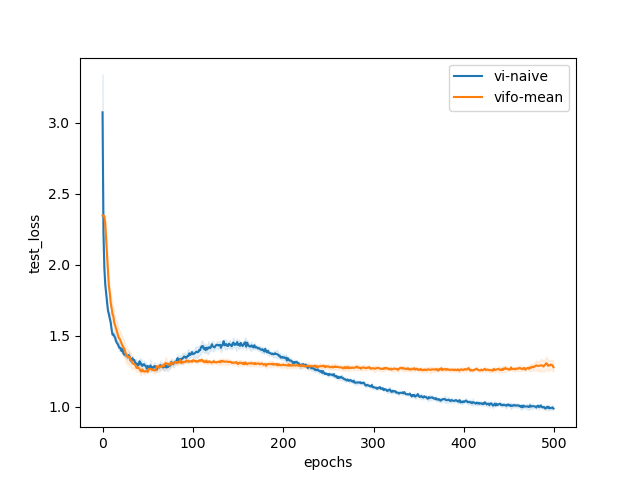}
         \caption{STL10}
    \end{subfigure}
    \begin{subfigure}[b]{0.35\textwidth}
         \centering
         \includegraphics[width=\textwidth]{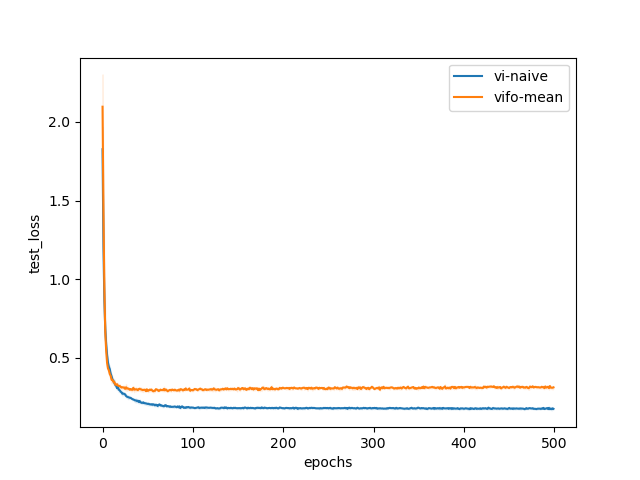}
         \caption{SVHN}
    \end{subfigure}
    \caption{Learning curves for all datasets on AlexNet. We conducted 5 independent runs and report the mean and standard deviation (which is very small). The results show that in all cases, VIFO-mean converges as fast as, or faster than, VI.
    }
    \label{fig:learning-curves}
\end{figure}

\newpage
\subsection{Ensembles}
\label{sec:ensemble}
\begin{figure}[H]
    \centering
    \begin{subfigure}[b]{0.35\textwidth}
         \centering
         \includegraphics[width=\textwidth]{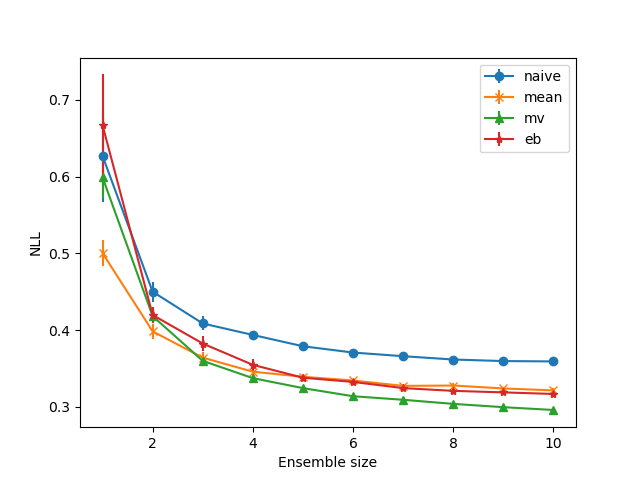}
         \caption{CIFAR10}
    \end{subfigure}
    \begin{subfigure}[b]{0.35\textwidth}
         \centering
         \includegraphics[width=\textwidth]{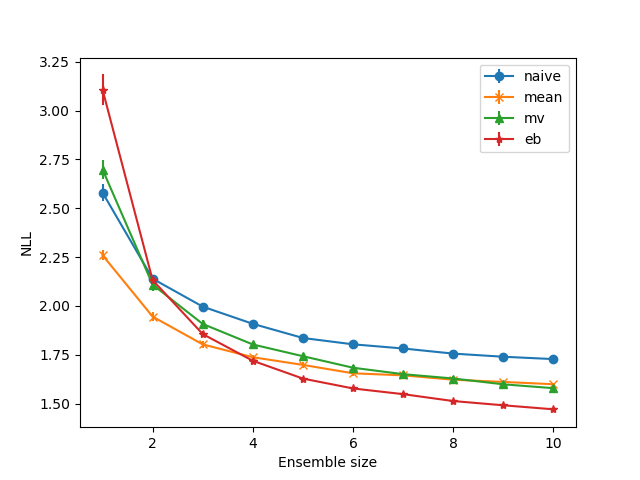}
         \caption{CIFAR100}
    \end{subfigure}
    \begin{subfigure}[b]{0.35\textwidth}
         \centering
         \includegraphics[width=\textwidth]{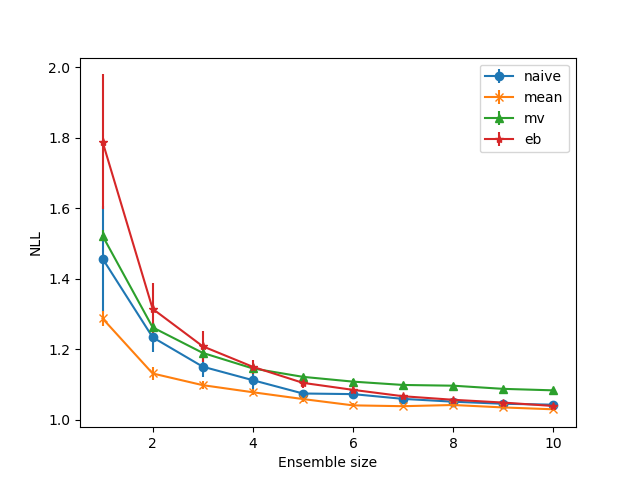}
         \caption{STL10}
    \end{subfigure}
    \begin{subfigure}[b]{0.35\textwidth}
         \centering
         \includegraphics[width=\textwidth]{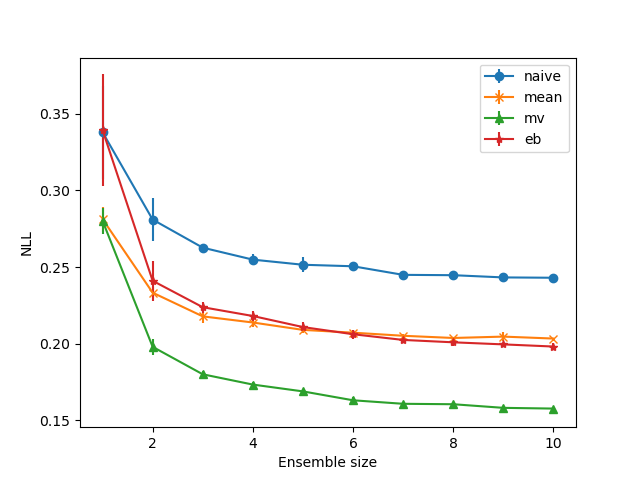}
         \caption{SVHN}
    \end{subfigure}
    \caption{Test losses vs. size of ensemble. Results are shown for 5 independent runs on AlexNet. 
    }
\end{figure}

\begin{figure}[H]
    \centering
    \begin{subfigure}[b]{0.35\textwidth}
         \centering
         \includegraphics[width=\textwidth]{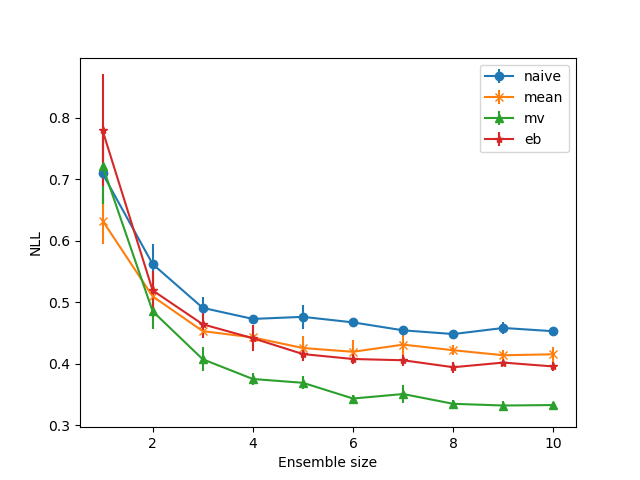}
         \caption{CIFAR10}
    \end{subfigure}
    \begin{subfigure}[b]{0.35\textwidth}
         \centering
         \includegraphics[width=\textwidth]{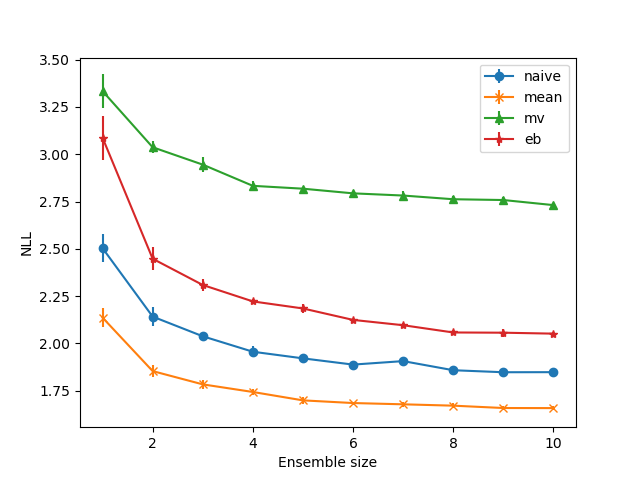}
         \caption{CIFAR100}
    \end{subfigure}
    \begin{subfigure}[b]{0.35\textwidth}
         \centering
         \includegraphics[width=\textwidth]{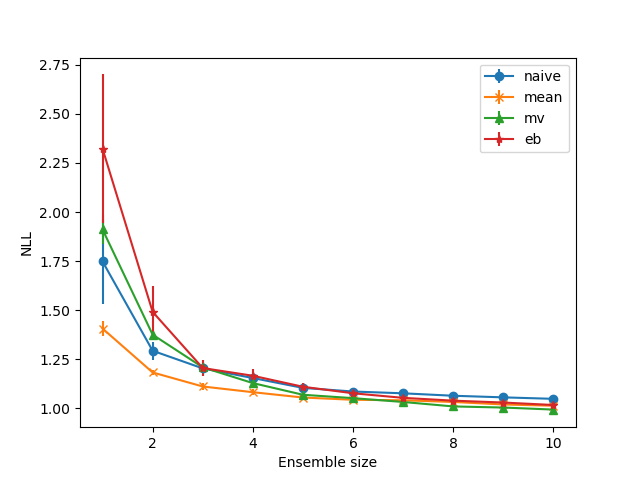}
         \caption{STL10}
    \end{subfigure}
    \begin{subfigure}[b]{0.35\textwidth}
         \centering
         \includegraphics[width=\textwidth]{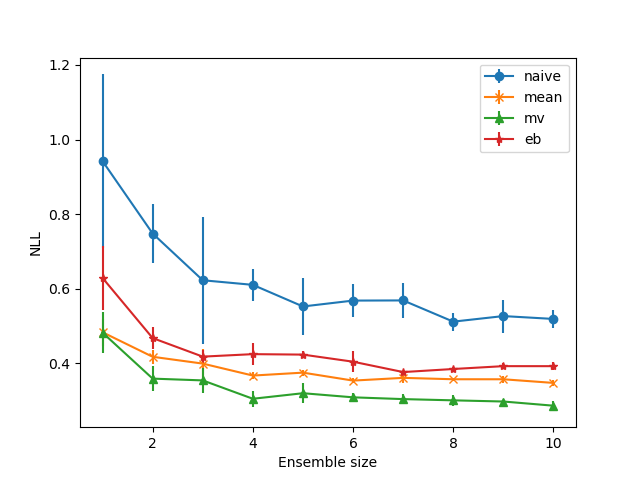}
         \caption{SVHN}
    \end{subfigure}
    \caption{Test losses vs. size of ensemble. Results are shown for 5 independent runs on PreResNet20.
    }
\end{figure}

\newpage
\section{Numerical Results}
\begin{table}[H]
\centering
\caption{CIFAR10, AlexNet, NLL}
\begin{tabular}{ccccc}
\hline
method & $\eta_\aux=0$ & $\eta_\aux=0.1$ & $\eta_\aux=0.5$ & $\eta_\aux=1.0$ \\
\hline
VIFO-naive &  $0.388 \pm 0.005$ & $0.383 \pm 0.002$ & $0.382 \pm 0.001$ & $0.383 \pm 0.002$ \\
VIFO-mean &  $0.338 \pm 0.002$ & $0.343 \pm 0.002$ & $0.344 \pm 0.003$ & $0.345 \pm 0.004$ \\
VIFO-mv &  $0.315 \pm 0.002$ & $0.324 \pm 0.000$ & $0.315 \pm 0.004$ & $0.311 \pm 0.002$ \\
VIFO-eb &  $0.347 \pm 0.003$ & $0.345 \pm 0.001$ & $0.345 \pm 0.003$ & $0.347 \pm 0.002$ \\
\hline
VI-naive &  $0.329 \pm 0.006$ &  &  &  \\
VI-mean &  $0.350 \pm 0.010$ &  &  &  \\
VI-mv &  $0.315 \pm 0.013$ &  &  &  \\
VI-eb &  $0.343 \pm 0.004$ &  &  &  \\
\hline
SGD &  $0.375 \pm 0.004$ &  &  &  \\
SWA &  $0.363 \pm 0.006$ &  &  &  \\
SWAG &  $0.329 \pm 0.002$ &  &  &  \\
\hline
Repulsive &  $0.785 \pm 0.003$ &  &  &  \\
\hline
Dir &  $0.788 \pm 0.002$ &  &  &  \\
\hline
Dropout &  $0.413 \pm 0.013$ &  &  &  \\
\hline
Laplace & $0.332 \pm 0.009$ & & & \\ 
\hline
VBLL & $0.373 \pm 0.003$ & & & \\ 
\hline
\end{tabular}
\end{table}

\begin{table}[H]
\centering
\caption{CIFAR100, AlexNet, NLL}
\begin{tabular}{ccccc}
\hline
method & $\eta_\aux=0$ & $\eta_\aux=0.1$ & $\eta_\aux=0.5$ & $\eta_\aux=1.0$ \\
\hline
VIFO-naive &  $1.774 \pm 0.008$ & $1.840 \pm 0.016$ & $1.784 \pm 0.008$ & $1.811 \pm 0.012$ \\
VIFO-mean &  $1.632 \pm 0.003$ & $1.687 \pm 0.002$ & $1.682 \pm 0.017$ & $1.683 \pm 0.021$ \\
VIFO-mv &  $1.642 \pm 0.011$ & $1.716 \pm 0.012$ & $1.971 \pm 0.053$ & $2.250 \pm 0.105$ \\
VIFO-eb &  $1.643 \pm 0.008$ & $1.651 \pm 0.003$ & $1.643 \pm 0.004$ & $1.649 \pm 0.006$ \\
\hline
VI-naive &  $1.513 \pm 0.024$ &  &  &  \\
VI-mean &  $1.642 \pm 0.023$ &  &  &  \\
VI-mv &  $1.817 \pm 0.022$ &  &  &  \\
VI-eb &  $1.441 \pm 0.016$ &  &  &  \\
\hline
SGD &  $1.894 \pm 0.024$ &  &  &  \\
SWA &  $1.768 \pm 0.026$ &  &  &  \\
SWAG &  $1.768 \pm 0.022$ &  &  &  \\
\hline
Repulsive &  $2.540 \pm 0.016$ &  &  &  \\
\hline
Dir &  $3.218 \pm 0.008$ &  &  &  \\
\hline
Dropout &  $2.024 \pm 0.022$ &  &  &  \\
\hline
Laplace & $1.738 \pm 0.006$ & & & \\ 
\hline
VBLL & $1.649 \pm 0.003$ & & & \\ 
\hline
\end{tabular}
\end{table}

\begin{table}[H]
\centering
\caption{STL10, AlexNet, NLL}
\begin{tabular}{ccccc}
\hline
method & $\eta_\aux=0$ & $\eta_\aux=0.1$ & $\eta_\aux=0.5$ & $\eta_\aux=1.0$ \\
\hline
VIFO-naive &  $1.113 \pm 0.006$ & $1.117 \pm 0.005$ & $1.112 \pm 0.005$ & $1.135 \pm 0.008$ \\
VIFO-mean &  $1.030 \pm 0.004$ & $1.056 \pm 0.005$ & $1.066 \pm 0.005$ & $1.067 \pm 0.011$ \\
VIFO-mv &  $1.078 \pm 0.005$ & $1.121 \pm 0.002$ & $1.112 \pm 0.010$ & $1.101 \pm 0.008$ \\
VIFO-eb &  $1.141 \pm 0.007$ & $1.134 \pm 0.008$ & $1.127 \pm 0.007$ & $1.135 \pm 0.002$ \\
\hline
VI-naive &  $0.975 \pm 0.010$ &  &  &  \\
VI-mean &  $1.021 \pm 0.013$ &  &  &  \\
VI-mv &  $1.095 \pm 0.018$ &  &  &  \\
VI-eb &  $1.560 \pm 0.054$ &  &  &  \\
\hline
SGD &  $1.419 \pm 0.025$ &  &  &  \\
SWA &  $1.139 \pm 0.011$ &  &  &  \\
SWAG &  $1.098 \pm 0.005$ &  &  &  \\
\hline
Repulsive &  $1.388 \pm 0.008$ &  &  &  \\
\hline
Dir &  $1.359 \pm 0.003$ &  &  &  \\
\hline
Dropout &  $2.359 \pm 0.045$ &  &  &  \\
\hline
Laplace & $1.157 \pm 0.007$ & & & \\ 
\hline
VBLL & $1.102 \pm 0.008$ & & & \\ 
\hline
\end{tabular}
\end{table}

\begin{table}[H]
\centering
\caption{SVHN, AlexNet, NLL}
\begin{tabular}{ccccc}
\hline
method & $\eta_\aux=0$ & $\eta_\aux=0.1$ & $\eta_\aux=0.5$ & $\eta_\aux=1.0$ \\
\hline
VIFO-naive &  $0.253 \pm 0.001$ & $0.256 \pm 0.001$ & $0.257 \pm 0.001$ & $0.251 \pm 0.002$ \\
VIFO-mean &  $0.211 \pm 0.002$ & $0.214 \pm 0.002$ & $0.209 \pm 0.001$ & $0.208 \pm 0.001$ \\
VIFO-mv &  $0.165 \pm 0.002$ & $0.169 \pm 0.001$ & $0.166 \pm 0.001$ & $0.170 \pm 0.002$ \\
VIFO-eb &  $0.211 \pm 0.001$ & $0.214 \pm 0.002$ & $0.212 \pm 0.002$ & $0.214 \pm 0.001$ \\
\hline
VI-naive &  $0.175 \pm 0.002$ &  &  &  \\
VI-mean &  $0.219 \pm 0.034$ &  &  &  \\
VI-mv &  $0.173 \pm 0.003$ &  &  &  \\
VI-eb &  $0.182 \pm 0.004$ &  &  &  \\
\hline
SGD &  $0.351 \pm 0.002$ &  &  &  \\
SWA &  $0.251 \pm 0.012$ &  &  &  \\
SWAG &  $0.238 \pm 0.001$ &  &  &  \\
\hline
Repulsive &  $0.686 \pm 0.007$ &  &  &  \\
\hline
Dir &  $0.692 \pm 0.003$ &  &  &  \\
\hline
Dropout &  $0.211 \pm 0.009$ &  &  &  \\
\hline
Laplace & $0.244 \pm 0.008$ & & & \\ 
\hline
VBLL & $0.177 \pm 0.001$ & & & \\ 
\hline
\end{tabular}
\end{table}

\begin{table}
\centering
\caption{CIFAR10, PreResNet20, NLL}
\begin{tabular}{ccccc}
\hline
method & $\eta_\aux=0$ & $\eta_\aux=0.1$ & $\eta_\aux=0.5$ & $\eta_\aux=1.0$ \\
\hline
VIFO-naive &  $0.473 \pm 0.005$ & $0.487 \pm 0.011$ & $0.486 \pm 0.010$ & $0.479 \pm 0.008$ \\
VIFO-mean &  $0.393 \pm 0.010$ & $0.433 \pm 0.018$ & $0.411 \pm 0.013$ & $0.430 \pm 0.020$ \\
VIFO-mv &  $0.361 \pm 0.010$ & $0.371 \pm 0.010$ & $0.367 \pm 0.006$ & $0.363 \pm 0.016$ \\
VIFO-eb &  $0.419 \pm 0.014$ & $0.429 \pm 0.016$ & $0.413 \pm 0.007$ & $0.425 \pm 0.008$ \\
\hline
VI-naive &  $0.410 \pm 0.028$ &  &  &  \\
VI-mean &  $0.415 \pm 0.032$ &  &  &  \\
VI-mv &  $0.437 \pm 0.029$ &  &  &  \\
VI-eb &  $0.429 \pm 0.035$ &  &  &  \\
\hline
SGD &  $0.335 \pm 0.013$ &  &  &  \\
SWA &  $0.336 \pm 0.008$ &  &  &  \\
SWAG &  $0.307 \pm 0.010$ &  &  &  \\
\hline
Repulsive &  $0.875 \pm 0.007$ &  &  &  \\
\hline
Dir &  $0.961 \pm 0.022$ &  &  &  \\
\hline
Dropout &  $0.423 \pm 0.026$ &  &  &  \\
\hline
Laplace & $0.324 \pm 0.009$ & & & \\ 
\hline
VBLL & $0.366 \pm 0.006$ & & & \\ 
\hline
\end{tabular}
\end{table}

\begin{table}
\centering
\caption{CIFAR100, PreResNet20, NLL}
\begin{tabular}{ccccc}
\hline
method & $\eta_\aux=0$ & $\eta_\aux=0.1$ & $\eta_\aux=0.5$ & $\eta_\aux=1.0$ \\
\hline
VIFO-naive &  $1.880 \pm 0.009$ & $1.935 \pm 0.020$ & $1.864 \pm 0.008$ & $1.844 \pm 0.011$ \\
VIFO-mean &  $1.632 \pm 0.013$ & $1.728 \pm 0.005$ & $1.686 \pm 0.004$ & $1.731 \pm 0.018$ \\
VIFO-mv &  $2.726 \pm 0.021$ & $2.826 \pm 0.008$ & $2.899 \pm 0.019$ & $2.867 \pm 0.014$ \\
VIFO-eb &  $2.076 \pm 0.006$ & $2.147 \pm 0.004$ & $2.340 \pm 0.043$ & $2.503 \pm 0.034$ \\
\hline
VI-naive &  $1.642 \pm 0.030$ &  &  &  \\
VI-mean &  $1.753 \pm 0.086$ &  &  &  \\
VI-mv &  $1.804 \pm 0.089$ &  &  &  \\
VI-eb &  $1.731 \pm 0.097$ &  &  &  \\
\hline
SGD &  $1.445 \pm 0.021$ &  &  &  \\
SWA &  $1.355 \pm 0.023$ &  &  &  \\
SWAG &  $1.354 \pm 0.019$ &  &  &  \\
\hline
Repulsive &  $2.948 \pm 0.020$ &  &  &  \\
\hline
Dir &  $3.580 \pm 0.009$ &  &  &  \\
\hline
Dropout &  $1.644 \pm 0.045$ &  &  &  \\
\hline
Laplace & $1.550 \pm 0.014$ & & & \\ 
\hline
VBLL & $1.484 \pm 0.017$ & & & \\ 
\hline
\end{tabular}
\end{table}

\begin{table}
\centering
\caption{STL10, PreResNet20, NLL}
\begin{tabular}{ccccc}
\hline
method & $\eta_\aux=0$ & $\eta_\aux=0.1$ & $\eta_\aux=0.5$ & $\eta_\aux=1.0$ \\
\hline
VIFO-naive &  $1.146 \pm 0.012$ & $1.159 \pm 0.008$ & $1.155 \pm 0.015$ & $1.145 \pm 0.005$ \\
VIFO-mean &  $1.067 \pm 0.009$ & $1.066 \pm 0.011$ & $1.056 \pm 0.007$ & $1.069 \pm 0.009$ \\
VIFO-mv &  $1.070 \pm 0.017$ & $1.078 \pm 0.003$ & $1.067 \pm 0.019$ & $1.073 \pm 0.011$ \\
VIFO-eb &  $1.162 \pm 0.015$ & $1.164 \pm 0.020$ & $1.191 \pm 0.014$ & $1.180 \pm 0.016$ \\
\hline
VI-naive &  $0.920 \pm 0.032$ &  &  &  \\
VI-mean &  $1.000 \pm 0.029$ &  &  &  \\
VI-mv &  $1.002 \pm 0.036$ &  &  &  \\
VI-eb &  $1.018 \pm 0.028$ &  &  &  \\
\hline
SGD &  $1.203 \pm 0.008$ &  &  &  \\
SWA &  $1.100 \pm 0.006$ &  &  &  \\
SWAG &  $1.100 \pm 0.010$ &  &  &  \\
\hline
Repulsive &  $1.365 \pm 0.009$ &  &  &  \\
\hline
Dir &  $1.418 \pm 0.019$ &  &  &  \\
\hline
Dropout &  $1.665 \pm 0.059$ &  &  &  \\
\hline
Laplace & $1.130 \pm 0.011$ & & & \\ 
\hline
VBLL & $1.072 \pm 0.009$ & & & \\ 
\hline
\end{tabular}
\end{table}

\begin{table}
\centering
\caption{SVHN, PreResNet20, NLL}
\begin{tabular}{ccccc}
\hline
method & $\eta_\aux=0$ & $\eta_\aux=0.1$ & $\eta_\aux=0.5$ & $\eta_\aux=1.0$ \\
\hline
VIFO-naive &  $0.391 \pm 0.019$ & $0.603 \pm 0.057$ & $0.486 \pm 0.050$ & $0.479 \pm 0.018$ \\
VIFO-mean &  $0.341 \pm 0.004$ & $0.357 \pm 0.004$ & $0.385 \pm 0.015$ & $0.334 \pm 0.015$ \\
VIFO-mv &  $0.269 \pm 0.009$ & $0.297 \pm 0.010$ & $0.323 \pm 0.013$ & $0.314 \pm 0.009$ \\
VIFO-eb &  $0.359 \pm 0.002$ & $0.391 \pm 0.008$ & $0.452 \pm 0.018$ & $0.402 \pm 0.028$ \\
\hline
VI-naive &  $0.314 \pm 0.024$ &  &  &  \\
VI-mean &  $0.342 \pm 0.040$ &  &  &  \\
VI-mv &  $0.359 \pm 0.033$ &  &  &  \\
VI-eb &  $0.379 \pm 0.057$ &  &  &  \\
\hline
SGD &  $0.337 \pm 0.011$ &  &  &  \\
SWA &  $0.320 \pm 0.009$ &  &  &  \\
SWAG &  $0.320 \pm 0.009$ &  &  &  \\
\hline
Repulsive & $0.822 \pm 0.020$ &  &  &  \\
\hline
Dir & $0.845 \pm 0.031$ &  &  &  \\
\hline
Dropout & $0.421 \pm 0.068$ &  &  &  \\
\hline
Laplace & $0.344 \pm 0.017$ & & & \\ 
\hline
VBLL & $0.324 \pm 0.008$ & & & \\ 
\hline
\end{tabular}
\end{table}

\begin{table}
\centering
\caption{CIFAR10, AlexNet, Accuracy}
\begin{tabular}{ccccc}
\hline
method & $\eta_\aux=0$ & $\eta_\aux=0.1$ & $\eta_\aux=0.5$ & $\eta_\aux=1.0$ \\
\hline
VIFO-naive &  $0.914 \pm 0.001$ & $0.916 \pm 0.001$ & $0.915 \pm 0.001$ & $0.916 \pm 0.001$ \\
VIFO-mean &  $0.916 \pm 0.002$ & $0.914 \pm 0.001$ & $0.914 \pm 0.001$ & $0.912 \pm 0.001$ \\
VIFO-mv &  $0.914 \pm 0.001$ & $0.910 \pm 0.001$ & $0.912 \pm 0.002$ & $0.915 \pm 0.002$ \\
VIFO-eb &  $0.914 \pm 0.002$ & $0.914 \pm 0.001$ & $0.913 \pm 0.001$ & $0.914 \pm 0.002$ \\
\hline
VI-naive &  $0.893 \pm 0.004$ &  &  &  \\
VI-mean &  $0.884 \pm 0.004$ &  &  &  \\
VI-mv &  $0.901 \pm 0.003$ &  &  &  \\
VI-eb &  $0.886 \pm 0.003$ &  &  &  \\
\hline
SGD &  $0.907 \pm 0.001$ &  &  &  \\
SWA &  $0.909 \pm 0.001$ &  &  &  \\
SWAG &  $0.909 \pm 0.001$ &  &  &  \\
\hline
Repulsive &  $0.915 \pm 0.001$ &  &  &  \\
\hline
Dir &  $0.915 \pm 0.001$ &  &  &  \\
\hline
Dropout &  $0.888 \pm 0.003$ &  &  &  \\
\hline
Laplace & $0.911 \pm 0.001$ & & & \\ 
\hline
VBLL & $0.918 \pm 0.001$ & & & \\ 
\hline
\end{tabular}
\end{table}

\begin{table}
\centering
\caption{CIFAR100, AlexNet, Accuracy}
\begin{tabular}{ccccc}
\hline
method & $\eta_\aux=0$ & $\eta_\aux=0.1$ & $\eta_\aux=0.5$ & $\eta_\aux=1.0$ \\
\hline
VIFO-naive &  $0.658 \pm 0.003$ & $0.651 \pm 0.004$ & $0.657 \pm 0.003$ & $0.644 \pm 0.003$ \\
VIFO-mean &  $0.667 \pm 0.002$ & $0.654 \pm 0.003$ & $0.650 \pm 0.007$ & $0.658 \pm 0.002$ \\
VIFO-mv &  $0.665 \pm 0.002$ & $0.638 \pm 0.010$ & $0.568 \pm 0.038$ & $0.558 \pm 0.028$ \\
VIFO-eb &  $0.669 \pm 0.002$ & $0.671 \pm 0.001$ & $0.668 \pm 0.003$ & $0.664 \pm 0.002$ \\
\hline
VI-naive &  $0.620 \pm 0.003$ &  &  &  \\
VI-mean &  $0.608 \pm 0.002$ &  &  &  \\
VI-mv &  $0.606 \pm 0.002$ &  &  &  \\
VI-eb &  $0.629 \pm 0.004$ &  &  &  \\
\hline
SGD &  $0.651 \pm 0.002$ &  &  &  \\
SWA &  $0.652 \pm 0.002$ &  &  &  \\
SWAG &  $0.652 \pm 0.002$ &  &  &  \\
\hline
Repulsive &  $0.653 \pm 0.002$ &  &  &  \\
\hline
Dir &  $0.654 \pm 0.001$ &  &  &  \\
\hline
Dropout &  $0.596 \pm 0.003$ &  &  &  \\
\hline
Laplace & $0.622 \pm 0.002$ & & & \\ 
\hline
VBLL & $0.660 \pm 0.001$ & & & \\ 
\hline
\end{tabular}
\end{table}

\begin{table}
\centering
\caption{STL10, AlexNet, Accuracy}
\begin{tabular}{ccccc}
\hline
method & $\eta_\aux=0$ & $\eta_\aux=0.1$ & $\eta_\aux=0.5$ & $\eta_\aux=1.0$ \\
\hline
VIFO-naive &  $0.670 \pm 0.003$ & $0.672 \pm 0.002$ & $0.674 \pm 0.002$ & $0.665 \pm 0.001$ \\
VIFO-mean &  $0.680 \pm 0.002$ & $0.666 \pm 0.003$ & $0.669 \pm 0.003$ & $0.663 \pm 0.003$ \\
VIFO-mv &  $0.678 \pm 0.002$ & $0.665 \pm 0.002$ & $0.665 \pm 0.002$ & $0.668 \pm 0.002$ \\
VIFO-eb &  $0.662 \pm 0.002$ & $0.668 \pm 0.002$ & $0.670 \pm 0.004$ & $0.670 \pm 0.002$ \\
\hline
VI-naive &  $0.661 \pm 0.003$ &  &  &  \\
VI-mean &  $0.649 \pm 0.009$ &  &  &  \\
VI-mv &  $0.644 \pm 0.006$ &  &  &  \\
VI-eb &  $0.414 \pm 0.025$ &  &  &  \\
\hline
SGD &  $0.614 \pm 0.004$ &  &  &  \\
SWA &  $0.598 \pm 0.006$ &  &  &  \\
SWAG &  $0.615 \pm 0.003$ &  &  &  \\
\hline
Repulsive &  $0.675 \pm 0.003$ &  &  &  \\
\hline
Dir &  $0.667 \pm 0.004$ &  &  &  \\
\hline
Dropout &  $0.622 \pm 0.004$ &  &  &  \\
\hline
Laplace & $0.648 \pm 0.004$ & & & \\ 
\hline
VBLL & $0.625 \pm 0.003$ & & & \\ 
\hline
\end{tabular}
\end{table}

\begin{table}
\centering
\caption{SVHN, AlexNet, Accuracy}
\begin{tabular}{ccccc}
\hline
method & $\eta_\aux=0$ & $\eta_\aux=0.1$ & $\eta_\aux=0.5$ & $\eta_\aux=1.0$ \\
\hline
VIFO-naive &  $0.962 \pm 0.001$ & $0.962 \pm 0.001$ & $0.962 \pm 0.000$ & $0.962 \pm 0.001$ \\
VIFO-mean &  $0.961 \pm 0.001$ & $0.962 \pm 0.000$ & $0.962 \pm 0.001$ & $0.963 \pm 0.001$ \\
VIFO-mv &  $0.963 \pm 0.001$ & $0.962 \pm 0.001$ & $0.963 \pm 0.000$ & $0.962 \pm 0.000$ \\
VIFO-eb &  $0.962 \pm 0.000$ & $0.961 \pm 0.001$ & $0.961 \pm 0.000$ & $0.961 \pm 0.000$ \\
\hline
VI-naive &  $0.953 \pm 0.002$ &  &  &  \\
VI-mean &  $0.945 \pm 0.008$ &  &  &  \\
VI-mv &  $0.955 \pm 0.001$ &  &  &  \\
VI-eb &  $0.950 \pm 0.001$ &  &  &  \\
\hline
SGD &  $0.955 \pm 0.001$ &  &  &  \\
SWA &  $0.954 \pm 0.002$ &  &  &  \\
SWAG &  $0.957 \pm 0.001$ &  &  &  \\
\hline
Repulsive &  $0.962 \pm 0.001$ &  &  &  \\
\hline
Dir &  $0.964 \pm 0.000$ &  &  &  \\
\hline
Dropout &  $0.949 \pm 0.001$ &  &  &  \\
\hline
Laplace & $0.961 \pm 0.001$ & & & \\ 
\hline
VBLL & $0.960 \pm 0.001$ & & & \\ 
\hline
\end{tabular}
\end{table}

\begin{table}
\centering
\caption{CIFAR10, PreResNet20, Accuracy}
\begin{tabular}{ccccc}
\hline
method & $\eta_\aux=0$ & $\eta_\aux=0.1$ & $\eta_\aux=0.5$ & $\eta_\aux=1.0$ \\
\hline
VIFO-naive &  $0.897 \pm 0.003$ & $0.896 \pm 0.002$ & $0.893 \pm 0.003$ & $0.899 \pm 0.003$ \\
VIFO-mean &  $0.906 \pm 0.002$ & $0.900 \pm 0.004$ & $0.907 \pm 0.002$ & $0.904 \pm 0.005$ \\
VIFO-mv &  $0.903 \pm 0.001$ & $0.902 \pm 0.002$ & $0.902 \pm 0.001$ & $0.903 \pm 0.002$ \\
VIFO-eb &  $0.900 \pm 0.003$ & $0.900 \pm 0.005$ & $0.902 \pm 0.002$ & $0.901 \pm 0.003$ \\
\hline
VI-naive &  $0.862 \pm 0.009$ &  &  &  \\
VI-mean &  $0.867 \pm 0.009$ &  &  &  \\
VI-mv &  $0.864 \pm 0.010$ &  &  &  \\
VI-eb &  $0.859 \pm 0.011$ &  &  &  \\
\hline
SGD &  $0.903 \pm 0.003$ &  &  &  \\
SWA &  $0.902 \pm 0.003$ &  &  &  \\
SWAG &  $0.907 \pm 0.002$ &  &  &  \\
\hline
Repulsive &  $0.903 \pm 0.002$ &  &  &  \\
\hline
Dir &  $0.901 \pm 0.002$ &  &  &  \\
\hline
Dropout &  $0.866 \pm 0.007$ &  &  &  \\
\hline
Laplace & $0.899 \pm 0.002$ & & & \\ 
\hline
VBLL & $0.904 \pm 0.001$ & & & \\ 
\hline
\end{tabular}
\end{table}

\begin{table}
\centering
\caption{CIFAR100, PreResNet20, Accuracy}
\begin{tabular}{ccccc}
\hline
method & $\eta_\aux=0$ & $\eta_\aux=0.1$ & $\eta_\aux=0.5$ & $\eta_\aux=1.0$ \\
\hline
VIFO-naive &  $0.631 \pm 0.004$ & $0.620 \pm 0.005$ & $0.631 \pm 0.002$ & $0.628 \pm 0.002$ \\
VIFO-mean &  $0.642 \pm 0.003$ & $0.635 \pm 0.003$ & $0.643 \pm 0.003$ & $0.637 \pm 0.006$ \\
VIFO-mv &  $0.436 \pm 0.005$ & $0.413 \pm 0.007$ & $0.398 \pm 0.006$ & $0.405 \pm 0.004$ \\
VIFO-eb &  $0.579 \pm 0.004$ & $0.567 \pm 0.009$ & $0.535 \pm 0.004$ & $0.499 \pm 0.017$ \\
\hline
VI-naive &  $0.620 \pm 0.003$ &  &  &  \\
VI-mean &  $0.608 \pm 0.002$ &  &  &  \\
VI-mv &  $0.606 \pm 0.002$ &  &  &  \\
VI-eb &  $0.629 \pm 0.004$ &  &  &  \\
\hline
SGD &  $0.638 \pm 0.002$ &  &  &  \\
SWA &  $0.640 \pm 0.002$ &  &  &  \\
SWAG &  $0.642 \pm 0.002$ &  &  &  \\
\hline
Repulsive &  $0.610 \pm 0.005$ &  &  &  \\
\hline
Dir &  $0.603 \pm 0.003$ &  &  &  \\
\hline
Dropout &  $0.565 \pm 0.010$ &  &  &  \\
\hline
Laplace & $0.633 \pm 0.004$ & & & \\ 
\hline
VBLL & $0.629 \pm 0.003$ & & & \\ 
\hline
\end{tabular}
\end{table}

\begin{table}
\centering
\caption{STL10, PreResNet20, Accuracy}
\begin{tabular}{ccccc}
\hline
method & $\eta_\aux=0$ & $\eta_\aux=0.1$ & $\eta_\aux=0.5$ & $\eta_\aux=1.0$ \\
\hline
VIFO-naive &  $0.658 \pm 0.004$ & $0.658 \pm 0.002$ & $0.656 \pm 0.001$ & $0.656 \pm 0.003$ \\
VIFO-mean &  $0.660 \pm 0.003$ & $0.663 \pm 0.003$ & $0.669 \pm 0.004$ & $0.663 \pm 0.004$ \\
VIFO-mv &  $0.670 \pm 0.004$ & $0.663 \pm 0.005$ & $0.662 \pm 0.005$ & $0.662 \pm 0.003$ \\
VIFO-eb &  $0.661 \pm 0.004$ & $0.661 \pm 0.004$ & $0.657 \pm 0.003$ & $0.660 \pm 0.005$ \\
\hline
VI-naive &  $0.690 \pm 0.009$ &  &  &  \\
VI-mean &  $0.674 \pm 0.010$ &  &  &  \\
VI-mv &  $0.688 \pm 0.009$ &  &  &  \\
VI-eb &  $0.638 \pm 0.011$ &  &  &  \\
\hline
SGD &  $0.672 \pm 0.002$ &  &  &  \\
SWA &  $0.673 \pm 0.003$ &  &  &  \\
SWAG &  $0.679 \pm 0.002$ &  &  &  \\
\hline
Repulsive &  $0.665 \pm 0.005$ &  &  &  \\
\hline
Dir &  $0.668 \pm 0.003$ &  &  &  \\
\hline
Dropout &  $0.621 \pm 0.011$ &  &  &  \\
\hline
Laplace & $0.680 \pm 0.006$ & & & \\ 
\hline
VBLL & $0.665 \pm 0.002$ & & & \\ 
\hline
\end{tabular}
\end{table}

\begin{table}
\centering
\caption{SVHN, PreResNet20, Accuracy}
\begin{tabular}{ccccc}
\hline
method & $\eta_\aux=0$ & $\eta_\aux=0.1$ & $\eta_\aux=0.5$ & $\eta_\aux=1.0$ \\
\hline
VIFO-naive &  $0.930 \pm 0.004$ & $0.911 \pm 0.014$ & $0.912 \pm 0.004$ & $0.915 \pm 0.008$ \\
VIFO-mean &  $0.933 \pm 0.004$ & $0.929 \pm 0.001$ & $0.930 \pm 0.001$ & $0.936 \pm 0.009$ \\
VIFO-mv &  $0.934 \pm 0.006$ & $0.935 \pm 0.005$ & $0.928 \pm 0.008$ & $0.930 \pm 0.002$ \\
VIFO-eb &  $0.929 \pm 0.004$ & $0.917 \pm 0.005$ & $0.905 \pm 0.003$ & $0.924 \pm 0.007$ \\
\hline
VI-naive &  $0.914 \pm 0.009$ &  &  &  \\
VI-mean &  $0.901 \pm 0.015$ &  &  &  \\
VI-mv &  $0.902 \pm 0.012$ &  &  &  \\
VI-eb &  $0.890 \pm 0.019$ &  &  &  \\
\hline
SGD &  $0.929 \pm 0.007$ &  &  &  \\
SWA &  $0.936 \pm 0.006$ &  &  &  \\
SWAG &  $0.936 \pm 0.006$ &  &  &  \\
\hline
Repulsive &  $0.931 \pm 0.004$ &  &  &  \\
\hline
Dir &  $0.932 \pm 0.002$ &  &  &  \\
\hline
Dropout &  $0.877 \pm 0.023$ &  &  &  \\
\hline
Laplace & $0.918 \pm 0.005$ & & & \\ 
\hline
VBLL & $0.930 \pm 0.003$ & & & \\ 
\hline
\end{tabular}
\end{table}

\begin{table}
\centering
\caption{ECE, CIFAR10-STL10, AlexNet}
\begin{tabular}{ccccc}
\hline
method & $\eta_\aux=0$ & $\eta_\aux=0.1$ & $\eta_\aux=0.5$ & $\eta_\aux=1.0$ \\
\hline
VIFO-naive &  $0.042 \pm 0.003$ & $0.042 \pm 0.002$ & $0.047 \pm 0.004$ & $0.045 \pm 0.003$ \\
VIFO-mean &  $0.056 \pm 0.002$ & $0.053 \pm 0.003$ & $0.055 \pm 0.001$ & $0.059 \pm 0.002$ \\
VIFO-mv &  $0.067 \pm 0.002$ & $0.069 \pm 0.002$ & $0.068 \pm 0.002$ & $0.068 \pm 0.003$ \\
VIFO-eb &  $0.039 \pm 0.003$ & $0.042 \pm 0.002$ & $0.038 \pm 0.002$ & $0.038 \pm 0.002$ \\
\hline
VI-naive &  $0.108 \pm 0.002$ &  &  &  \\
VI-mean &  $0.105 \pm 0.007$ &  &  &  \\
VI-mv &  $0.131 \pm 0.003$ &  &  &  \\
VI-eb &  $0.118 \pm 0.004$ &  &  &  \\
\hline
SGD &  $0.152 \pm 0.002$ &  &  &  \\
SWA &  $0.150 \pm 0.001$ &  &  &  \\
SWAG &  $0.144 \pm 0.001$ &  &  &  \\
\hline
Repulsive &  $0.246 \pm 0.003$ &  &  &  \\
\hline
Dir &  $0.248 \pm 0.002$ &  &  &  \\
\hline
Dropout &  $0.180 \pm 0.002$ &  &  &  \\
\hline
Laplace & $0.169 \pm 0.007$ & & & \\ 
\hline
VBLL & $0.095 \pm 0.002$ & & & \\ 
\hline
\end{tabular}
\end{table}

\begin{table}
\centering
\caption{ECE, STL10-CIFAR10, AlexNet}
\begin{tabular}{ccccc}
\hline
method & $\eta_\aux=0$ & $\eta_\aux=0.1$ & $\eta_\aux=0.5$ & $\eta_\aux=1.0$ \\
\hline
VIFO-naive &  $0.053 \pm 0.004$ & $0.053 \pm 0.004$ & $0.040 \pm 0.004$ & $0.034 \pm 0.004$ \\
VIFO-mean &  $0.065 \pm 0.002$ & $0.063 \pm 0.003$ & $0.055 \pm 0.003$ & $0.046 \pm 0.002$ \\
VIFO-mv &  $0.075 \pm 0.002$ & $0.067 \pm 0.003$ & $0.078 \pm 0.003$ & $0.073 \pm 0.002$ \\
VIFO-eb &  $0.081 \pm 0.004$ & $0.078 \pm 0.002$ & $0.066 \pm 0.002$ & $0.075 \pm 0.004$ \\
\hline
VI-naive &  $0.072 \pm 0.007$ &  &  &  \\
VI-mean &  $0.107 \pm 0.007$ &  &  &  \\
VI-mv &  $0.155 \pm 0.004$ &  &  &  \\
VI-eb &  $0.030 \pm 0.006$ &  &  &  \\
\hline
SGD &  $0.238 \pm 0.007$ &  &  &  \\
SWA &  $0.137 \pm 0.016$ &  &  &  \\
SWAG &  $0.117 \pm 0.007$ &  &  &  \\
\hline
Repulsive &  $0.197 \pm 0.006$ &  &  &  \\
\hline
Dir &  $0.146 \pm 0.004$ &  &  &  \\
\hline
Dropout &  $0.365 \pm 0.007$ &  &  &  \\
\hline
Laplace & $0.085 \pm 0.004$ & & & \\ 
\hline
VBLL & $0.129 \pm 0.007$ & & & \\ 
\hline
\end{tabular}
\end{table}

\begin{table}
\centering
\caption{ECE, CIFAR10-STL10, PreResNet20}
\begin{tabular}{ccccc}
\hline
method & $\eta_\aux=0$ & $\eta_\aux=0.1$ & $\eta_\aux=0.5$ & $\eta_\aux=1.0$ \\
\hline
VIFO-naive &  $0.029 \pm 0.001$ & $0.032 \pm 0.003$ & $0.032 \pm 0.004$ & $0.037 \pm 0.004$ \\
VIFO-mean &  $0.036 \pm 0.003$ & $0.028 \pm 0.005$ & $0.032 \pm 0.002$ & $0.029 \pm 0.003$ \\
VIFO-mv &  $0.058 \pm 0.003$ & $0.047 \pm 0.004$ & $0.052 \pm 0.002$ & $0.051 \pm 0.002$ \\
VIFO-eb &  $0.024 \pm 0.003$ & $0.028 \pm 0.005$ & $0.031 \pm 0.002$ & $0.026 \pm 0.002$ \\
\hline
VI-naive &  $0.126 \pm 0.004$ &  &  &  \\
VI-mean &  $0.147 \pm 0.002$ &  &  &  \\
VI-mv &  $0.159 \pm 0.003$ &  &  &  \\
VI-eb &  $0.137 \pm 0.005$ &  &  &  \\
\hline
SGD &  $0.092 \pm 0.004$ &  &  &  \\
SWA &  $0.089 \pm 0.001$ &  &  &  \\
SWAG &  $0.078 \pm 0.003$ &  &  &  \\
\hline
Repulsive &  $0.259 \pm 0.004$ &  &  &  \\
\hline
Dir &  $0.319 \pm 0.007$ &  &  &  \\
\hline
Dropout &  $0.146 \pm 0.002$ &  &  &  \\
\hline
Laplace & $0.063 \pm 0.002$ & & & \\ 
\hline
VBLL & $0.095 \pm 0.001$ & & & \\ 
\hline
\end{tabular}
\end{table}

\begin{table}
\centering
\caption{ECE, STL10-CIFAR10, PreResNet20}
\begin{tabular}{ccccc}
\hline
method & $\eta_\aux=0$ & $\eta_\aux=0.1$ & $\eta_\aux=0.5$ & $\eta_\aux=1.0$ \\
\hline
VIFO-naive &  $0.021 \pm 0.001$ & $0.022 \pm 0.002$ & $0.025 \pm 0.003$ & $0.028 \pm 0.004$ \\
VIFO-mean &  $0.029 \pm 0.003$ & $0.024 \pm 0.005$ & $0.019 \pm 0.002$ & $0.019 \pm 0.003$ \\
VIFO-mv &  $0.088 \pm 0.005$ & $0.081 \pm 0.003$ & $0.081 \pm 0.004$ & $0.086 \pm 0.002$ \\
VIFO-eb &  $0.051 \pm 0.005$ & $0.039 \pm 0.002$ & $0.030 \pm 0.003$ & $0.038 \pm 0.002$ \\
\hline
VI-naive &  $0.129 \pm 0.003$ &  &  &  \\
VI-mean &  $0.151 \pm 0.004$ &  &  &  \\
VI-mv &  $0.171 \pm 0.011$ &  &  &  \\
VI-eb &  $0.094 \pm 0.005$ &  &  &  \\
\hline
SGD &  $0.112 \pm 0.002$ &  &  &  \\
SWA &  $0.107 \pm 0.003$ &  &  &  \\
SWAG &  $0.093 \pm 0.004$ &  &  &  \\
\hline
Repulsive &  $0.185 \pm 0.006$ &  &  &  \\
\hline
Dir &  $0.200 \pm 0.005$ &  &  &  \\
\hline
Dropout &  $0.290 \pm 0.007$ &  &  &  \\
\hline
Laplace & $0.107 \pm 0.003$ & & & \\ 
\hline
VBLL & $0.097 \pm 0.009$ & & & \\ 
\hline
\end{tabular}
\end{table}

\begin{table}
\centering
\caption{Entropy, CIFAR10-SVHN, AlexNet}
\begin{tabular}{ccccc}
\hline
method & $\eta_\aux=0$ & $\eta_\aux=0.1$ & $\eta_\aux=0.5$ & $\eta_\aux=1.0$ \\
\hline
VIFO-naive &  $1.357 \pm 0.009$ & $1.399 \pm 0.023$ & $1.286 \pm 0.049$ & $1.355 \pm 0.033$ \\
VIFO-mean &  $1.344 \pm 0.020$ & $1.336 \pm 0.021$ & $1.313 \pm 0.042$ & $1.342 \pm 0.039$ \\
VIFO-mv &  $1.344 \pm 0.020$ & $1.342 \pm 0.018$ & $1.354 \pm 0.023$ & $1.332 \pm 0.017$ \\
VIFO-eb &  $1.330 \pm 0.045$ & $1.330 \pm 0.026$ & $1.323 \pm 0.017$ & $1.296 \pm 0.027$ \\
\hline
VI-naive &  $1.238 \pm 0.075$ &  &  &  \\
VI-mean &  $1.280 \pm 0.157$ &  &  &  \\
VI-mv &  $1.053 \pm 0.068$ &  &  &  \\
VI-eb &  $1.133 \pm 0.069$ &  &  &  \\
\hline
SGD &  $0.633 \pm 0.014$ &  &  &  \\
SWA &  $0.676 \pm 0.014$ &  &  &  \\
SWAG &  $0.705 \pm 0.011$ &  &  &  \\
\hline
Repulsive &  $1.990 \pm 0.021$ &  &  &  \\
\hline
Dir &  $1.970 \pm 0.002$ &  &  &  \\
\hline
Dropout &  $0.585 \pm 0.033$ &  &  &  \\
\hline
Laplace & $1.245 \pm 0.025$ & & & \\ 
\hline
VBLL & $0.971 \pm 0.049$ & & & \\ 
\hline
\end{tabular}
\end{table}

\begin{table}
\centering
\caption{Entropy, STL10-SVHN, AlexNet}
\begin{tabular}{ccccc}
\hline
method & $\eta_\aux=0$ & $\eta_\aux=0.1$ & $\eta_\aux=0.5$ & $\eta_\aux=1.0$ \\
\hline
VIFO-naive &  $1.772 \pm 0.017$ & $1.773 \pm 0.015$ & $1.779 \pm 0.024$ & $1.763 \pm 0.011$ \\
VIFO-mean &  $1.840 \pm 0.023$ & $1.748 \pm 0.012$ & $1.818 \pm 0.029$ & $1.826 \pm 0.015$ \\
VIFO-mv &  $1.889 \pm 0.026$ & $1.915 \pm 0.031$ & $1.848 \pm 0.051$ & $1.859 \pm 0.018$ \\
VIFO-eb &  $1.764 \pm 0.026$ & $1.754 \pm 0.018$ & $1.798 \pm 0.028$ & $1.716 \pm 0.012$ \\
\hline
VI-naive &  $1.601 \pm 0.049$ &  &  &  \\
VI-mean &  $1.495 \pm 0.018$ &  &  &  \\
VI-mv &  $1.345 \pm 0.046$ &  &  &  \\
VI-eb &  $2.024 \pm 0.019$ &  &  &  \\
\hline
SGD &  $1.127 \pm 0.030$ &  &  &  \\
SWA &  $1.479 \pm 0.047$ &  &  &  \\
SWAG &  $1.525 \pm 0.010$ &  &  &  \\
\hline
Repulsive &  $2.208 \pm 0.006$ &  &  &  \\
\hline
Dir &  $2.157 \pm 0.008$ &  &  &  \\
\hline
Dropout &  $0.601 \pm 0.037$ &  &  &  \\
\hline
Laplace & $1.349 \pm 0.007$ & & & \\ 
\hline
VBLL & $1.454 \pm 0.044$ & & & \\ 
\hline
\end{tabular}
\end{table}

\begin{table}
\centering
\caption{Entropy, SVHN-CIFAR10, AlexNet}
\begin{tabular}{ccccc}
\hline
method & $\eta_\aux=0$ & $\eta_\aux=0.1$ & $\eta_\aux=0.5$ & $\eta_\aux=1.0$ \\
\hline
VIFO-naive &  $1.694 \pm 0.008$ & $1.711 \pm 0.005$ & $1.694 \pm 0.003$ & $1.693 \pm 0.002$ \\
VIFO-mean &  $1.721 \pm 0.013$ & $1.735 \pm 0.007$ & $1.729 \pm 0.009$ & $1.742 \pm 0.008$ \\
VIFO-mv &  $1.709 \pm 0.014$ & $1.738 \pm 0.011$ & $1.754 \pm 0.014$ & $1.758 \pm 0.011$ \\
VIFO-eb &  $1.740 \pm 0.015$ & $1.749 \pm 0.005$ & $1.739 \pm 0.007$ & $1.773 \pm 0.033$ \\
\hline
VI-naive &  $1.624 \pm 0.048$ &  &  &  \\
VI-mean &  $1.711 \pm 0.052$ &  &  &  \\
VI-mv &  $1.448 \pm 0.036$ &  &  &  \\
VI-eb &  $1.577 \pm 0.058$ &  &  &  \\
\hline
SGD &  $1.237 \pm 0.009$ &  &  &  \\
SWA &  $1.323 \pm 0.035$ &  &  &  \\
SWAG &  $1.465 \pm 0.004$ &  &  &  \\
\hline
Repulsive &  $2.129 \pm 0.003$ &  &  &  \\
\hline
Dir &  $2.121 \pm 0.002$ &  &  &  \\
\hline
Dropout &  $1.206 \pm 0.047$ &  &  &  \\
\hline
Laplace & $1.696 \pm 0.036$ & & & \\ 
\hline
VBLL & $1.466 \pm 0.013$ & & & \\ 
\hline
\end{tabular}
\end{table}

\begin{table}
\centering
\caption{Entropy, SVHN-STL10, AlexNet}
\begin{tabular}{ccccc}
\hline
method & $\eta_\aux=0$ & $\eta_\aux=0.1$ & $\eta_\aux=0.5$ & $\eta_\aux=1.0$ \\
\hline
VIFO-naive &  $1.720 \pm 0.007$ & $1.734 \pm 0.005$ & $1.711 \pm 0.004$ & $1.708 \pm 0.008$ \\
VIFO-mean &  $1.744 \pm 0.007$ & $1.756 \pm 0.006$ & $1.753 \pm 0.011$ & $1.756 \pm 0.007$ \\
VIFO-mv &  $1.754 \pm 0.012$ & $1.763 \pm 0.006$ & $1.804 \pm 0.017$ & $1.790 \pm 0.009$ \\
VIFO-eb &  $1.764 \pm 0.016$ & $1.776 \pm 0.007$ & $1.775 \pm 0.017$ & $1.765 \pm 0.030$ \\
\hline
VI-naive &  $1.685 \pm 0.056$ &  &  &  \\
VI-mean &  $1.772 \pm 0.042$ &  &  &  \\
VI-mv &  $1.512 \pm 0.041$ &  &  &  \\
VI-eb &  $1.631 \pm 0.047$ &  &  &  \\
\hline
SGD &  $1.277 \pm 0.009$ &  &  &  \\
SWA &  $1.371 \pm 0.013$ &  &  &  \\
SWAG &  $1.507 \pm 0.007$ &  &  &  \\
\hline
Repulsive &  $2.138 \pm 0.004$ &  &  &  \\
\hline
Dir &  $2.132 \pm 0.002$ &  &  &  \\
\hline
Dropout &  $1.272 \pm 0.054$ &  &  &  \\
\hline
Laplace & $1.702 \pm 0.025$ & & & \\ 
\hline
VBLL & $1.508 \pm 0.009$ & & & \\ 
\hline
\end{tabular}
\end{table}

\begin{table}
\centering
\caption{Entropy, CIFAR10-SVHN, PreResNet20}
\begin{tabular}{ccccc}
\hline
method & $\eta_\aux=0$ & $\eta_\aux=0.1$ & $\eta_\aux=0.5$ & $\eta_\aux=1.0$ \\
\hline
VIFO-naive &  $1.465 \pm 0.007$ & $1.516 \pm 0.009$ & $1.553 \pm 0.006$ & $1.540 \pm 0.004$ \\
VIFO-mean &  $1.469 \pm 0.009$ & $1.553 \pm 0.018$ & $1.599 \pm 0.005$ & $1.580 \pm 0.010$ \\
VIFO-mv &  $1.559 \pm 0.013$ & $1.611 \pm 0.016$ & $1.590 \pm 0.017$ & $1.543 \pm 0.015$ \\
VIFO-eb &  $1.532 \pm 0.011$ & $1.540 \pm 0.013$ & $1.582 \pm 0.018$ & $1.548 \pm 0.005$ \\
\hline
VI-naive &  $1.192 \pm 0.025$ &  &  &  \\
VI-mean &  $1.097 \pm 0.015$ &  &  &  \\
VI-mv &  $1.048 \pm 0.019$ &  &  &  \\
VI-eb &  $1.174 \pm 0.049$ &  &  &  \\
\hline
SGD &  $1.398 \pm 0.009$ &  &  &  \\
SWA &  $1.398 \pm 0.015$ &  &  &  \\
SWAG &  $1.586 \pm 0.016$ &  &  &  \\
\hline
Repulsive &  $1.971 \pm 0.006$ &  &  &  \\
\hline
Dir &  $2.246 \pm 0.002$ &  &  &  \\
\hline
Dropout &  $0.995 \pm 0.022$ &  &  &  \\
\hline
Laplace & $1.523 \pm 0.010$ & & & \\ 
\hline
VBLL & $1.252 \pm 0.012$ & & & \\ 
\hline
\end{tabular}
\end{table}

\begin{table}
\centering
\caption{Entropy, STL10-SVHN, PreResNet20}
\begin{tabular}{ccccc}
\hline
method & $\eta_\aux=0$ & $\eta_\aux=0.1$ & $\eta_\aux=0.5$ & $\eta_\aux=1.0$ \\
\hline
VIFO-naive &  $1.763 \pm 0.008$ & $1.770 \pm 0.005$ & $1.742 \pm 0.009$ & $1.728 \pm 0.007$ \\
VIFO-mean &  $1.796 \pm 0.006$ & $1.908 \pm 0.008$ & $1.876 \pm 0.002$ & $1.862 \pm 0.008$ \\
VIFO-mv &  $1.565 \pm 0.007$ & $1.607 \pm 0.004$ & $1.588 \pm 0.005$ & $1.574 \pm 0.008$ \\
VIFO-eb &  $1.603 \pm 0.005$ & $1.637 \pm 0.011$ & $1.634 \pm 0.008$ & $1.656 \pm 0.017$ \\
\hline
VI-naive &  $1.394 \pm 0.020$ &  &  &  \\
VI-mean &  $1.323 \pm 0.013$ &  &  &  \\
VI-mv &  $1.267 \pm 0.030$ &  &  &  \\
VI-eb &  $1.517 \pm 0.060$ &  &  &  \\
\hline
SGD &  $1.432 \pm 0.022$ &  &  &  \\
SWA &  $1.462 \pm 0.013$ &  &  &  \\
SWAG &  $1.579 \pm 0.013$ &  &  &  \\
\hline
Repulsive &  $2.160 \pm 0.005$ &  &  &  \\
\hline
Dir &  $2.194 \pm 0.004$ &  &  &  \\
\hline
Dropout &  $0.830 \pm 0.024$ &  &  &  \\
\hline
Laplace & $1.313 \pm 0.006$ & & & \\ 
\hline
VBLL & $1.481 \pm 0.008$ & & & \\ 
\hline
\end{tabular}
\end{table}

\begin{table}
\centering
\caption{Entropy, SVHN-CIFAR10, PreResNet20}
\begin{tabular}{ccccc}
\hline
method & $\eta_\aux=0$ & $\eta_\aux=0.1$ & $\eta_\aux=0.5$ & $\eta_\aux=1.0$ \\
\hline
VIFO-naive &  $1.531 \pm 0.008$ & $2.019 \pm 0.026$ & $1.942 \pm 0.051$ & $1.959 \pm 0.039$ \\
VIFO-mean &  $1.515 \pm 0.012$ & $1.908 \pm 0.080$ & $1.850 \pm 0.065$ & $1.780 \pm 0.028$ \\
VIFO-mv &  $1.568 \pm 0.014$ & $1.758 \pm 0.051$ & $1.848 \pm 0.067$ & $1.706 \pm 0.057$ \\
VIFO-eb &  $1.539 \pm 0.004$ & $1.775 \pm 0.038$ & $1.898 \pm 0.022$ & $1.923 \pm 0.015$ \\
\hline
VI-naive &  $1.213 \pm 0.010$ &  &  &  \\
VI-mean &  $1.169 \pm 0.017$ &  &  &  \\
VI-mv &  $1.080 \pm 0.011$ &  &  &  \\
VI-eb &  $1.219 \pm 0.029$ &  &  &  \\
\hline
SGD &  $1.384 \pm 0.005$ &  &  &  \\
SWA &  $1.403 \pm 0.002$ &  &  &  \\
SWAG &  $1.476 \pm 0.005$ &  &  &  \\
\hline
Repulsive &  $2.006 \pm 0.013$ &  &  &  \\
\hline
Dir &  $2.250 \pm 0.001$ &  &  &  \\
\hline
Dropout &  $1.059 \pm 0.019$ &  &  &  \\
\hline
Laplace & $1.502 \pm 0.008$ & & & \\ 
\hline
VBLL & $1.315 \pm 0.007$ & & & \\ 
\hline
\end{tabular}
\end{table}

\begin{table}
\centering
\caption{Entropy, SVHN-STL10, PreResNet20}
\begin{tabular}{ccccc}
\hline
method & $\eta_\aux=0$ & $\eta_\aux=0.1$ & $\eta_\aux=0.5$ & $\eta_\aux=1.0$ \\
\hline
VIFO-naive &  $1.529 \pm 0.016$ & $1.995 \pm 0.042$ & $1.860 \pm 0.041$ & $1.885 \pm 0.054$ \\
VIFO-mean &  $1.543 \pm 0.012$ & $1.837 \pm 0.070$ & $1.875 \pm 0.066$ & $1.762 \pm 0.052$ \\
VIFO-mv &  $1.588 \pm 0.010$ & $1.763 \pm 0.061$ & $1.875 \pm 0.041$ & $1.671 \pm 0.039$ \\
VIFO-eb &  $1.544 \pm 0.004$ & $1.724 \pm 0.045$ & $1.895 \pm 0.054$ & $1.785 \pm 0.028$ \\
\hline
VI-naive &  $1.222 \pm 0.013$ &  &  &  \\
VI-mean &  $1.174 \pm 0.016$ &  &  &  \\
VI-mv &  $1.078 \pm 0.014$ &  &  &  \\
VI-eb &  $1.219 \pm 0.024$ &  &  &  \\
\hline
SGD &  $1.375 \pm 0.007$ &  &  &  \\
SWA &  $1.389 \pm 0.011$ &  &  &  \\
SWAG &  $1.464 \pm 0.005$ &  &  &  \\
\hline
Repulsive &  $2.014 \pm 0.011$ &  &  &  \\
\hline
Dir &  $2.252 \pm 0.002$ &  &  &  \\
\hline
Dropout &  $1.053 \pm 0.014$ &  &  &  \\
\hline
Laplace & $1.506 \pm 0.008$ & & & \\ 
\hline
VBLL & $1.319 \pm 0.006$ & & & \\ 
\hline
\end{tabular}
\end{table}

\begin{table}
\centering
\caption{AUROC, CIFAR10-SVHN, AlexNet}
\begin{tabular}{ccccc}
\hline
method & $\eta_\aux=0$ & $\eta_\aux=0.1$ & $\eta_\aux=0.5$ & $\eta_\aux=1.0$ \\
\hline
VIFO-naive &  $0.860 \pm 0.005$ & $0.857 \pm 0.008$ & $0.833 \pm 0.018$ & $0.853 \pm 0.018$ \\
VIFO-mean &  $0.873 \pm 0.005$ & $0.855 \pm 0.008$ & $0.858 \pm 0.007$ & $0.856 \pm 0.008$ \\
VIFO-mv &  $0.893 \pm 0.007$ & $0.871 \pm 0.004$ & $0.873 \pm 0.004$ & $0.861 \pm 0.006$ \\
VIFO-eb &  $0.860 \pm 0.007$ & $0.859 \pm 0.007$ & $0.864 \pm 0.002$ & $0.844 \pm 0.010$ \\
\hline
VI-naive &  $0.898 \pm 0.016$ &  &  &  \\
VI-mean &  $0.886 \pm 0.029$ &  &  &  \\
VI-mv &  $0.893 \pm 0.009$ &  &  &  \\
VI-eb &  $0.885 \pm 0.010$ &  &  &  \\
\hline
SGD &  $0.851 \pm 0.004$ &  &  &  \\
SWA &  $0.862 \pm 0.009$ &  &  &  \\
SWAG &  $0.863 \pm 0.005$ &  &  &  \\
\hline
Repulsive &  $0.857 \pm 0.020$ &  &  &  \\
\hline
Dir &  $0.846 \pm 0.006$ &  &  &  \\
\hline
Dropout &  $0.822 \pm 0.011$ &  &  &  \\
\hline
Laplace & $0.871 \pm 0.006$ & & & \\ 
\hline
VBLL & $0.889 \pm 0.003$ & & & \\ 
\hline
\end{tabular}
\end{table}

\begin{table}
\centering
\caption{AUROC, STL10-SVHN, AlexNet}
\begin{tabular}{ccccc}
\hline
method & $\eta_\aux=0$ & $\eta_\aux=0.1$ & $\eta_\aux=0.5$ & $\eta_\aux=1.0$ \\
\hline
VIFO-naive &  $0.787 \pm 0.005$ & $0.776 \pm 0.009$ & $0.781 \pm 0.010$ & $0.755 \pm 0.015$ \\
VIFO-mean &  $0.789 \pm 0.010$ & $0.757 \pm 0.006$ & $0.768 \pm 0.013$ & $0.774 \pm 0.006$ \\
VIFO-mv &  $0.798 \pm 0.008$ & $0.772 \pm 0.010$ & $0.764 \pm 0.016$ & $0.779 \pm 0.013$ \\
VIFO-eb &  $0.793 \pm 0.007$ & $0.793 \pm 0.010$ & $0.769 \pm 0.009$ & $0.758 \pm 0.005$ \\
\hline
VI-naive &  $0.818 \pm 0.020$ &  &  &  \\
VI-mean &  $0.792 \pm 0.011$ &  &  &  \\
VI-mv &  $0.775 \pm 0.018$ &  &  &  \\
VI-eb &  $0.736 \pm 0.044$ &  &  &  \\
\hline
SGD &  $0.750 \pm 0.010$ &  &  &  \\
SWA &  $0.761 \pm 0.010$ &  &  &  \\
SWAG &  $0.769 \pm 0.005$ &  &  &  \\
\hline
Repulsive &  $0.799 \pm 0.013$ &  &  &  \\
\hline
Dir &  $0.779 \pm 0.004$ &  &  &  \\
\hline
Dropout &  $0.681 \pm 0.017$ &  &  &  \\
\hline
Laplace & $0.783 \pm 0.006$ & & & \\ 
\hline
VBLL & $0.783 \pm 0.015$ & & & \\ 
\hline
\end{tabular}
\end{table}

\begin{table}
\centering
\caption{AUROC, SVHN-CIFAR10, AlexNet}
\begin{tabular}{ccccc}
\hline
method & $\eta_\aux=0$ & $\eta_\aux=0.1$ & $\eta_\aux=0.5$ & $\eta_\aux=1.0$ \\
\hline
VIFO-naive &  $0.969 \pm 0.001$ & $0.968 \pm 0.001$ & $0.967 \pm 0.001$ & $0.967 \pm 0.001$ \\
VIFO-mean &  $0.976 \pm 0.001$ & $0.974 \pm 0.001$ & $0.973 \pm 0.001$ & $0.973 \pm 0.002$ \\
VIFO-mv &  $0.972 \pm 0.001$ & $0.973 \pm 0.001$ & $0.972 \pm 0.000$ & $0.973 \pm 0.001$ \\
VIFO-eb &  $0.969 \pm 0.001$ & $0.971 \pm 0.001$ & $0.970 \pm 0.001$ & $0.973 \pm 0.003$ \\
\hline
VI-naive &  $0.970 \pm 0.002$ &  &  &  \\
VI-mean &  $0.963 \pm 0.014$ &  &  &  \\
VI-mv &  $0.965 \pm 0.003$ &  &  &  \\
VI-eb &  $0.967 \pm 0.003$ &  &  &  \\
\hline
SGD &  $0.962 \pm 0.001$ &  &  &  \\
SWA &  $0.963 \pm 0.001$ &  &  &  \\
SWAG &  $0.968 \pm 0.001$ &  &  &  \\
\hline
Repulsive &  $0.972 \pm 0.002$ &  &  &  \\
\hline
Dir &  $0.976 \pm 0.001$ &  &  &  \\
\hline
Dropout &  $0.952 \pm 0.003$ &  &  &  \\
\hline
Laplace & $0.971 \pm 0.002$ & & & \\ 
\hline
VBLL & $0.970 \pm 0.001$ & & & \\ 
\hline
\end{tabular}
\end{table}

\begin{table}
\centering
\caption{AUROC, SVHN-STL10, AlexNet}
\begin{tabular}{ccccc}
\hline
method & $\eta_\aux=0$ & $\eta_\aux=0.1$ & $\eta_\aux=0.5$ & $\eta_\aux=1.0$ \\
\hline
VIFO-naive &  $0.971 \pm 0.000$ & $0.972 \pm 0.001$ & $0.969 \pm 0.001$ & $0.971 \pm 0.001$ \\
VIFO-mean &  $0.976 \pm 0.000$ & $0.976 \pm 0.000$ & $0.977 \pm 0.001$ & $0.977 \pm 0.001$ \\
VIFO-mv &  $0.976 \pm 0.001$ & $0.975 \pm 0.000$ & $0.977 \pm 0.001$ & $0.976 \pm 0.000$ \\
VIFO-eb &  $0.974 \pm 0.001$ & $0.974 \pm 0.001$ & $0.973 \pm 0.001$ & $0.977 \pm 0.002$ \\
\hline
VI-naive &  $0.974 \pm 0.003$ &  &  &  \\
VI-mean &  $0.968 \pm 0.013$ &  &  &  \\
VI-mv &  $0.970 \pm 0.002$ &  &  &  \\
VI-eb &  $0.971 \pm 0.002$ &  &  &  \\
\hline
SGD &  $0.966 \pm 0.001$ &  &  &  \\
SWA &  $0.968 \pm 0.002$ &  &  &  \\
SWAG &  $0.972 \pm 0.000$ &  &  &  \\
\hline
Repulsive &  $0.975 \pm 0.002$ &  &  &  \\
\hline
Dir &  $0.979 \pm 0.001$ &  &  &  \\
\hline
Dropout &  $0.958 \pm 0.004$ &  &  &  \\
\hline
Laplace & $0.975 \pm 0.002$ & & & \\ 
\hline
VBLL & $0.973 \pm 0.000$ & & & \\ 
\hline
\end{tabular}
\end{table}

\begin{table}
\centering
\caption{AUROC, CIFAR10-SVHN, PreResNet20}
\begin{tabular}{ccccc}
\hline
method & $\eta_\aux=0$ & $\eta_\aux=0.1$ & $\eta_\aux=0.5$ & $\eta_\aux=1.0$ \\
\hline
VIFO-naive &  $0.885 \pm 0.004$ & $0.880 \pm 0.006$ & $0.889 \pm 0.005$ & $0.898 \pm 0.006$ \\
VIFO-mean &  $0.894 \pm 0.004$ & $0.889 \pm 0.008$ & $0.907 \pm 0.007$ & $0.896 \pm 0.002$ \\
VIFO-mv &  $0.914 \pm 0.002$ & $0.919 \pm 0.007$ & $0.917 \pm 0.001$ & $0.915 \pm 0.003$ \\
VIFO-eb &  $0.908 \pm 0.002$ & $0.912 \pm 0.005$ & $0.918 \pm 0.002$ & $0.913 \pm 0.004$ \\
\hline
VI-naive &  $0.863 \pm 0.008$ &  &  &  \\
VI-mean &  $0.868 \pm 0.011$ &  &  &  \\
VI-mv &  $0.868 \pm 0.008$ &  &  &  \\
VI-eb &  $0.858 \pm 0.013$ &  &  &  \\
\hline
SGD &  $0.950 \pm 0.001$ &  &  &  \\
SWA &  $0.950 \pm 0.001$ &  &  &  \\
SWAG &  $0.963 \pm 0.001$ &  &  &  \\
\hline
Repulsive &  $0.819 \pm 0.004$ &  &  &  \\
\hline
Dir &  $0.962 \pm 0.002$ &  &  &  \\
\hline
Dropout &  $0.837 \pm 0.010$ &  &  &  \\
\hline
Laplace & $0.957 \pm 0.002$ & & & \\ 
\hline
VBLL & $0.902 \pm 0.003$ & & & \\ 
\hline
\end{tabular}
\end{table}

\begin{table}
\centering
\caption{AUROC, STL10-SVHN, PreResNet20}
\begin{tabular}{ccccc}
\hline
method & $\eta_\aux=0$ & $\eta_\aux=0.1$ & $\eta_\aux=0.5$ & $\eta_\aux=1.0$ \\
\hline
VIFO-naive &  $0.810 \pm 0.005$ & $0.810 \pm 0.004$ & $0.804 \pm 0.004$ & $0.803 \pm 0.003$ \\
VIFO-mean &  $0.810 \pm 0.001$ & $0.835 \pm 0.006$ & $0.836 \pm 0.004$ & $0.820 \pm 0.003$ \\
VIFO-mv &  $0.788 \pm 0.004$ & $0.794 \pm 0.001$ & $0.789 \pm 0.003$ & $0.784 \pm 0.005$ \\
VIFO-eb &  $0.799 \pm 0.004$ & $0.800 \pm 0.005$ & $0.803 \pm 0.002$ & $0.803 \pm 0.004$ \\
\hline
VI-naive &  $0.788 \pm 0.014$ &  &  &  \\
VI-mean &  $0.784 \pm 0.006$ &  &  &  \\
VI-mv &  $0.788 \pm 0.015$ &  &  &  \\
VI-eb &  $0.728 \pm 0.030$ &  &  &  \\
\hline
SGD &  $0.795 \pm 0.008$ &  &  &  \\
SWA &  $0.837 \pm 0.008$ &  &  &  \\
SWAG &  $0.877 \pm 0.005$ &  &  &  \\
\hline
Repulsive &  $0.804 \pm 0.005$ &  &  &  \\
\hline
Dir &  $0.812 \pm 0.004$ &  &  &  \\
\hline
Dropout &  $0.689 \pm 0.007$ &  &  &  \\
\hline
Laplace & $0.810 \pm 0.004$ & & & \\ 
\hline
VBLL & $0.810 \pm 0.008$ & & & \\ 
\hline
\end{tabular}
\end{table}

\begin{table}
\centering
\caption{AUROC, SVHN-CIFAR10, PreResNet20}
\begin{tabular}{ccccc}
\hline
method & $\eta_\aux=0$ & $\eta_\aux=0.1$ & $\eta_\aux=0.5$ & $\eta_\aux=1.0$ \\
\hline
VIFO-naive &  $0.890 \pm 0.005$ & $0.958 \pm 0.003$ & $0.934 \pm 0.010$ & $0.948 \pm 0.004$ \\
VIFO-mean &  $0.893 \pm 0.002$ & $0.931 \pm 0.010$ & $0.925 \pm 0.006$ & $0.928 \pm 0.014$ \\
VIFO-mv &  $0.912 \pm 0.004$ & $0.936 \pm 0.005$ & $0.927 \pm 0.007$ & $0.916 \pm 0.006$ \\
VIFO-eb &  $0.916 \pm 0.007$ & $0.922 \pm 0.002$ & $0.923 \pm 0.008$ & $0.941 \pm 0.009$ \\
\hline
VI-naive &  $0.870 \pm 0.008$ &  &  &  \\
VI-mean &  $0.867 \pm 0.016$ &  &  &  \\
VI-mv &  $0.866 \pm 0.012$ &  &  &  \\
VI-eb &  $0.847 \pm 0.026$ &  &  &  \\
\hline
SGD &  $0.909 \pm 0.006$ &  &  &  \\
SWA &  $0.918 \pm 0.005$ &  &  &  \\
SWAG &  $0.919 \pm 0.006$ &  &  &  \\
\hline
Repulsive &  $0.843 \pm 0.006$ &  &  &  \\
\hline
Dir &  $0.970 \pm 0.003$ &  &  &  \\
\hline
Dropout &  $0.830 \pm 0.028$ &  &  &  \\
\hline
Laplace & $0.890 \pm 0.006$ & & & \\ 
\hline
VBLL & $0.906 \pm 0.004$ & & & \\ 
\hline
\end{tabular}
\end{table}

\begin{table}
\centering
\caption{AUROC, SVHN-STL10, PreResNet20}
\begin{tabular}{ccccc}
\hline
method & $\eta_\aux=0$ & $\eta_\aux=0.1$ & $\eta_\aux=0.5$ & $\eta_\aux=1.0$ \\
\hline
VIFO-naive &  $0.894 \pm 0.009$ & $0.957 \pm 0.009$ & $0.934 \pm 0.008$ & $0.936 \pm 0.010$ \\
VIFO-mean &  $0.896 \pm 0.004$ & $0.924 \pm 0.013$ & $0.914 \pm 0.010$ & $0.921 \pm 0.009$ \\
VIFO-mv &  $0.917 \pm 0.006$ & $0.933 \pm 0.008$ & $0.928 \pm 0.009$ & $0.922 \pm 0.001$ \\
VIFO-eb &  $0.911 \pm 0.006$ & $0.916 \pm 0.005$ & $0.923 \pm 0.004$ & $0.936 \pm 0.006$ \\
\hline
VI-naive &  $0.870 \pm 0.010$ &  &  &  \\
VI-mean &  $0.867 \pm 0.016$ &  &  &  \\
VI-mv &  $0.865 \pm 0.011$ &  &  &  \\
VI-eb &  $0.846 \pm 0.027$ &  &  &  \\
\hline
SGD &  $0.907 \pm 0.007$ &  &  &  \\
SWA &  $0.913 \pm 0.008$ &  &  &  \\
SWAG &  $0.915 \pm 0.007$ &  &  &  \\
\hline
Repulsive &  $0.847 \pm 0.007$ &  &  &  \\
\hline
Dir &  $0.970 \pm 0.003$ &  &  &  \\
\hline
Dropout &  $0.829 \pm 0.029$ &  &  &  \\
\hline
Laplace & $0.911 \pm 0.007$ & & & \\ 
\hline
VBLL & $0.909 \pm 0.003$ & & & \\ 
\hline
\end{tabular}
\end{table}

\end{document}